\documentclass{article}

\usepackage[T1]{fontenc}
\usepackage[utf8]{inputenc}
\usepackage{microtype}
\usepackage[preprint]{icml2026} 
\usepackage{graphicx}
\usepackage{subcaption}
\usepackage{booktabs}
\usepackage{multirow}

\usepackage{enumitem}
\usepackage{float}            
\usepackage[section]{placeins}
\usepackage{dblfloatfix}      

\setlist[itemize]{topsep=2pt, itemsep=1pt, parsep=0pt, partopsep=0pt}
\setlist[enumerate]{topsep=2pt, itemsep=1pt, parsep=0pt, partopsep=0pt}

\usepackage{icml2026}

\usepackage{algorithm}
\usepackage{algorithmic}

\usepackage{amsmath,amssymb,mathtools,amsthm}

\newtheoremstyle{tightplain}%
  {0pt}{0pt}
  {\itshape}{}
  {\bfseries}{.}{0.5em}{}

\newtheoremstyle{tightdefinition}%
  {0pt}{0pt}%
  {\normalfont}{}%
  {\bfseries}{.}{0.5em}{}

\newtheoremstyle{tightremark}%
  {0pt}{0pt}%
  {\normalfont}{}%
  {\itshape}{.}{0.5em}{}

\theoremstyle{tightplain}
\newtheorem{theorem}{Theorem}[section]
\newtheorem{proposition}[theorem]{Proposition}
\newtheorem{lemma}[theorem]{Lemma}
\newtheorem{corollary}[theorem]{Corollary}

\theoremstyle{tightdefinition}

\newtheorem{assumption}[theorem]{Assumption}

\theoremstyle{tightremark}
\newtheorem{remark}[theorem]{Remark}

\makeatletter
\renewcommand\paragraph{\@startsection{paragraph}{4}{\z@}%
  {2pt}%
  {-0.8em}%
  {\normalfont\normalsize\bfseries}}
\makeatother

\newcommand{\R}{\mathbb{R}}
\newcommand{\T}{\mathbb{T}}
\newcommand{\N}{\mathbb{N}}
\newcommand{\E}{\mathbb{E}}
\newcommand{\PP}{\mathbb{P}}
\newcommand{\Law}{\operatorname{Law}}

\newcommand{\Cov}{\operatorname{Cov}}

\newcommand{\W}{W}
\newcommand{\Wtwo}{\W_2}
\newcommand{\Wone}{\W_1}

\newcommand{\fint}{\mathop{\kern0pt\int}\nolimits\!\!\!\!\!\!-}

\usepackage[most]{tcolorbox}
\definecolor{theoBack}{RGB}{252,248,236}  
\definecolor{theoFrame}{RGB}{214,196,150} 

\tcbset{
  theobox/.style={
    enhanced,
    colback=theoBack,
    colframe=theoFrame,
    boxrule=0.5pt,
    arc=2.2mm,
    left=6pt,right=6pt,top=4pt,bottom=4pt,
    boxsep=0pt,
    before skip=3pt,
    after skip=3pt,
    before upper={\setlength{\parskip}{0pt}\setlength{\parindent}{0pt}},
  }
}

\tcolorboxenvironment{theorem}{theobox}
\tcolorboxenvironment{proposition}{theobox}
\tcolorboxenvironment{lemma}{theobox}
\tcolorboxenvironment{corollary}{theobox}
\tcolorboxenvironment{definition}{theobox}
\tcolorboxenvironment{assumption}{theobox}
\tcolorboxenvironment{remark}{theobox}

\usepackage{hyperref}
\hypersetup{
  colorlinks=true,
  linkcolor=blue,
  citecolor=blue,
  urlcolor=blue
}
\usepackage[capitalize,noabbrev]{cleveref}

\usepackage[textsize=tiny]{todonotes}

\icmltitlerunning{Memory-Conditioned Flow-Matching for Stable Autoregressive PDE Rollouts}

\begin{document}

\twocolumn[
\icmltitle{Memory-Conditioned Flow-Matching for Stable Autoregressive PDE Rollouts}

\begin{icmlauthorlist}
  \icmlauthor{Victor Armegioiu}{eth}
\end{icmlauthorlist}

\icmlaffiliation{eth}{Department of Mathematics, ETH Z\"urich, Z\"urich, Switzerland}
\icmlcorrespondingauthor{Victor Armegioiu}{victor.armegioiu@math.ethz.ch}

\vskip 0.3in
]

\printAffiliationsAndNotice{}

\begin{abstract}
Autoregressive generative PDE solvers can be accurate one step ahead yet drift over long rollouts, especially in coarse-to-fine
regimes where each step must regenerate unresolved fine scales. This is the regime of diffusion and flow-matching generators:
although their internal dynamics are Markovian, rollout stability is governed by per-step \emph{conditional law} errors. Using
the Mori--Zwanzig projection formalism, we show that eliminating unresolved variables yields an exact resolved evolution with a
Markov term, a memory term, and an orthogonal forcing, exposing a structural limitation of memoryless closures. Motivated by
this, we introduce memory-conditioned diffusion/flow-matching with a compact online state injected into denoising via latent
features. Via disintegration, memory induces a structured conditional tail prior for unresolved scales and reduces the transport
needed to populate missing frequencies. We prove Wasserstein stability of the resulting conditional kernel. We then derive
discrete Gr\"onwall rollout bounds that separate memory approximation from conditional generation error. Experiments on
compressible flows with shocks and multiscale mixing show improved accuracy and markedly more stable long-horizon rollouts, with
better fine-scale spectral and statistical fidelity.
\end{abstract}

\section{Introduction}
\label{sec:intro}

\noindent
Many physical systems are modeled by time-dependent partial differential equations (PDEs)  \cite{evans2022partial}.
In this work we consider PDEs of the abstract form
\begin{equation*}\label{eq:intro-pde}
\partial_t u(t) = \mathcal{F}(u(t)), \quad u(0)=u_0,
\end{equation*}
where $u(t)$ is a (possibly vector-valued) field on a spatial domain $D$ and $\mathcal{F}$ is a nonlinear differential operator.
Concrete examples include the incompressible Navier--Stokes and compressible Euler equations governing multiscale fluid flows.
A recurring difficulty is that solutions can be highly sensitive to initial/boundary data: rather than a single trajectory, one often
cares about the \emph{evolving law} of solutions induced by uncertain inputs, i.e.\ the pushforward measure $\mu_t = (S_t)_\#\mu_0$
under the PDE solution operator $S_t$.

\noindent
\textbf{Why many--query regimes are hard.}
Sampling $\mu_t$ with Monte Carlo ensembles requires repeated high-fidelity PDE solves, which becomes infeasible when the state is
high-dimensional and multiscale structure must be resolved.
This has driven intense interest in learned surrogates.
Deterministic operator-learning approaches (e.g.\ neural operators and neural PDE solvers) are effective in many regimes
\cite{li2020fourier,kovachki2023neural,pathak2022fourcastnet,brandstetter2022message,liu2024multi,takamoto2022pdebench},
but for chaotic/multiscale flows they can be poorly matched to \emph{distributional} objectives and long-horizon rollouts. 

\noindent
\textbf{Generative solvers and the rollout-stability gap.}
Conditional diffusion and related generative models have been proposed to represent $\mu_t$ directly by learning stochastic
transitions; in fluids, GenCFD is a representative large-scale conditional diffusion solver \cite{molinaro2024generative}.
Flow matching and its relatives replace reverse-time SDE sampling by a learned transport in an internal time $\tau\in[0,1]$
\cite{lipman2022flow,albergo2023stochastic,tong2023improving}.
Rectified flows in particular provide an ODE-based transport with substantially fewer integration steps at sampling time,
often with improved efficiency \cite{armegioiu2025rectified,wang2024rectified}.
However, PDE surrogates are deployed \emph{autoregressively} in physical time, and good one-step accuracy does not ensure stable long rollouts: small law defects can compound. DySLIM~\cite{schiff2024dyslim} instead matches invariant-measure statistics of the time-step operator, but existence/uniqueness of an invariant measure is not automatic in our setting (transient regimes, shocks/mixing, discretization- and resolution-dependent stationarity, and possible non-uniqueness). We therefore avoid invariance assumptions and analyze finite-horizon law error directly; see Remark~\ref{rem:invarian-measuers} and Appendix~\ref{app:mmd_vs_flow}. Corollary~\ref{cor:K:fm-to-invariant} shows our flow-matching-base measure-transport automatically produces bounds for the invariant measure loss of~\cite{schiff2024dyslim}.

\noindent
\textbf{Where memory enters: Mori--Zwanzig and missing-frequency generation.}
When only partial (resolved) information is observed, unresolved degrees of freedom influence future evolution.
The Mori-Zwanzig (MZ) formalism \cite{hijon2010mori} makes this precise: under a resolved/unresolved split, eliminating unresolved variables yields an
\emph{exact} resolved evolution with a Markov term, a \emph{history (memory)} term, and an orthogonal forcing
\cite{hijon2010mori}.
Thus, purely Markov conditional kernels are generically misspecified when the conditioning discards information.
Recent work shows concrete gains from incorporating memory mechanisms for time-dependent PDE modeling \cite{ruiz2024benefits}.

\noindent
\textbf{Our setting: generative PDE solvers as conditional Paley--Littlewood predictors.}
In coarse-to-fine regimes, we condition on low modes and must sample a missing high-frequency tail.
A memoryless generator starts from a generic tail prior and must do substantial internal-time transport to populate absent shells
(Lemma~\ref{lem:W2-reduction}; Appendix~\ref{app:rf-budget}).
A compact memory can encode recent cross-scale transfer statistics to shape a more structured conditional tail prior
(Appendix~\ref{app:projection-observables}), thereby reducing the conditional mismatch that drives the one-step defect $\epsilon_n$
(Appendix~\ref{app:conditional}).

\noindent
\textbf{Method: Memory-Conditioned Rectified Flow (MCRF).}
We propose a rectified-flow transition kernel augmented with an explicit latent memory state updated along the autoregressive
trajectory.
The memory is extracted from the backbone representation and injected back into the denoising/transport dynamics, forcing history
dependence to be expressed in the same feature space that drives conditional generation.
Our memory module is compatible with stable state-space sequence models designed to approximate multi-timescale convolutions
\cite{gu2021efficiently,smith2208simplified,orvieto2023resurrecting,gu2024mamba,ma2022mega}.

\noindent
\textbf{Related directions.}
Our focus is complementary to physics-integrated or hybrid latent-physics approaches \cite{lutter2019deep,akhtar2024physics,yue2024deltaphi}
and to learned simulators for turbulence \cite{stachenfeld2022learned}:
we study law-level rollout stability for \emph{generative} coarse-to-fine solvers, and isolate a mechanism by which memory reduces
the intrinsic difficulty of populating missing shells.

\paragraph{Contributions.}
\begin{itemize}
  \item \textbf{Model:} a memory-conditioned rectified-flow transition kernel for stable autoregressive PDE rollouts
  \cite{armegioiu2025rectified,lipman2022flow,wang2024rectified}.
  \item \textbf{Architecture:} an extract/inject memory pathway coupled to the backbone representation, leveraging state-space
  memory mechanisms for multi-timescale history \cite{gu2021efficiently,gu2024mamba}.
  \item \textbf{Theory:} a law-level error recursion and a decomposition of one-step defects into memory-approximation and
  conditional-generation terms, together with a frequency-cutoff disintegration explaining how memory reduces tail mismatch
  \cite{ruiz2024benefits,hijon2010mori}.
\end{itemize}

\paragraph{Organization.}
Section~\ref{sec:background} fixes notation (PDE setting, MZ viewpoint, conditional flow matching, and the Littlewood--Paley cutoff
regime). Section~\ref{sec:theory} presents the rollout-stability mechanism and shows how memory shapes the tail prior and reduces
one-step defects. Section~\ref{sec:method} describes MCRF, and Section~\ref{sec:results} reports experimental instantiations of our method.

\section{Background and setup}
\label{sec:background}

We study autoregressive surrogate modeling for time-evolving PDEs under partial (resolved) observations.
The key object is the \emph{rollout law}: the model is iterated in physical time, so small one-step law defects can compound.
We fix notation for (i) the PDE and resolved operator, (ii) Mori--Zwanzig’s memory/noise decomposition, (iii) conditional
flow-matching / rectified-flow transitions, and (iv) the Littlewood--Paley cutoff viewpoint used in the theory.

\subsection{PDE setting, resolved observations, and distributional rollouts}
\label{subsec:bg-pde}

Let $D$ be either a flat torus $\T^d$ or a bounded $C^1$ domain.
Let $\mathcal{H}:=L^2(D;\R^{d_u})$ be the (fine) state space and consider
\begin{equation*}\label{eq:bg-pde-abstract2}
\partial_t u(t) = \mathcal{F}(u(t)),\quad u(0)=u_0\in\mathcal{H},
\end{equation*}
with (when relevant) boundary conditions appropriate to the PDE.
We denote by $S_t:\mathcal{H}\to\mathcal{H}$ the (possibly weak) solution operator, so $u(t)=S_t u_0$ when well-defined.

\textbf{Statistical solutions / evolving laws.}
Given an initial law $\mu_0\in\mathcal{P}_2(\mathcal{H})$, the evolving law is $\mu_t := (S_t)_\#\mu_0$.

\textbf{Resolved operator and autoregressive model.}
Fix a bounded linear observation operator $\Pi_R:\mathcal{H}\to\mathcal{Y}$ into a resolved space $\mathcal{Y}$ and define
$y(t):=\Pi_R u(t)$.
We discretize physical time with step $\Delta t>0$ and $t_n=n\Delta t$.
A learned \emph{distributional} solver produces a Markov chain $(\widehat u_n)_{n\ge 0}$ via
\begin{equation*}\label{eq:bg-rollout2}
\widehat u_{n+1}\sim \widehat K_{\theta,n}(\,\cdot\,\mid y_n,\widehat m_n,t_n),\quad
y_n := \Pi_R \widehat u_n,
\end{equation*}
where $\widehat m_n\in\R^{d_m}$ is a learned memory state updated causally from the trajectory (Section~\ref{sec:method}).
We compare the rollout law $\widehat\mu_n:=\Law(\widehat u_n)$ to the true law $\mu_n:=\Law(u(t_n))$.

\subsection{Mori--Zwanzig: exact memory and orthogonal forcing after elimination}
\label{subsec:bg-mz2}

Mori--Zwanzig (MZ) formalizes the effect of eliminating unresolved variables.
For exposition, consider a finite-dimensional discretization $H$ and induced ODE $\dot u = F(u)$ with flow $S_t$.
Let $\mathcal{A}$ be a class of smooth observables $\psi:H\to\R^k$ and define the Liouville operator
$(L\psi)(u):=D\psi(u)\,F(u)$ so that $(e^{tL}\psi)(u_0)=\psi(S_tu_0)$.
Fix a projection $P:\mathcal{A}\to\mathcal{A}$ corresponding to ``resolved information'' and set $Q:=I-P$.
Then for any $\psi\in\mathcal{A}$ one has the identity
\begin{equation*}\label{eq:bg-mz2}
\begin{aligned}
\frac{d}{dt}\,P e^{tL}\psi
&= P e^{tL}PL\psi
+\int_0^t P e^{(t-s)L}\,PL\,e^{sQL}\,QL\psi\,ds \\
&\quad + P e^{tQL}\,QL\psi.
\end{aligned}
\end{equation*}

The first term is Markovian, the integral term is explicit \emph{history dependence} (memory), and the last term is an
\emph{orthogonal forcing} generated by unresolved dynamics \cite{hijon2010mori}.
Unless the latter two vanish, any closure depending only on the instantaneous resolved state is intrinsically misspecified.
This motivates augmenting \eqref{eq:bg-rollout2} with a memory state, consistent with empirical findings in \cite{ruiz2024benefits}.

\subsection{Conditional flow matching / rectified flows as stochastic transitions}
\label{subsec:bg-fm2}

We implement $\widehat K_{\theta,n}$ by conditional transport in an internal time $\tau\in[0,1]$.
Let $c:=(y,m,t)$ denote conditioning.
Given a base law $\nu_0(\cdot\mid c)\in\mathcal{P}_2(\mathcal{H})$ and a learned vector field
$v_\theta:[0,1]\times\mathcal{H}\times\mathcal{C}\to\mathcal{H}$, define
\begin{equation*}\label{eq:bg-rf-ode2}
\frac{d}{d\tau}U_\tau = v_\theta(\tau,U_\tau;c),
\quad U_0\sim \nu_0(\cdot\mid c),
\end{equation*}
and set $\widehat u := U_1$.
This induces a conditional kernel
\begin{equation*}\label{eq:bg-rf-kernel2}
\widehat K_{\theta,n}(\cdot\mid c) := \Law(\widehat u\mid c) = (\Phi^c_\theta)_\#\,\nu_0(\cdot\mid c),
\end{equation*}
where $\Phi^c_\theta$ is the time-$1$ flow map of \eqref{eq:bg-rf-ode2}.
This includes flow matching and related stochastic-interpolant viewpoints \cite{lipman2022flow,albergo2023stochastic,tong2023improving},
and rectified-flow variants that emphasize efficient ODE sampling \cite{armegioiu2025rectified,wang2024rectified}.

\paragraph{Why internal time matters for PDE rollouts.}
In coarse-to-fine settings, part of the state is effectively \emph{prior-driven} at intermediate $\tau$ (low SNR), so the choice of
conditional prior--and thus the conditioning variables--directly affects one-step defects and rollout stability
(Section~\ref{sec:theory}).

\subsection{Littlewood--Paley cutoff and the ``transport difficulty'' viewpoint}
\label{subsec:bg-resolution2}

We now specialize to the spectral-cutoff regime used in the theory.
For clarity take $D=\T^d$ and $\mathcal{H}=L^2(\T^d;\R^{d_u})$.

\textbf{Dyadic projectors and resolved/unresolved split.}
Let $(\Delta_j)_{j\ge -1}$ be a Littlewood--Paley decomposition on $\T^d$ and fix a cutoff $J\in\N$.
Define
\begin{equation*}\label{eq:bg-lp-proj2}
S_J u := \sum_{j\le J}\Delta_j u,\quad
Q_J u := u - S_J u = \sum_{j>J}\Delta_j u.
\end{equation*}
We take $\Pi_R := S_J$ so that each state decomposes as
\[
u = y + z,\quad y:=S_Ju\in\mathcal{H}_{\le J},\quad z:=Q_Ju\in\mathcal{H}_{>J}.
\]
In coarse-to-fine generation we condition on $y$ and must sample the tail $z$.

\paragraph{Why memory changes the difficulty of populating missing shells.}
A memoryless generator typically uses a generic conditional tail prior (e.g.\ isotropic noise on $\mathcal{H}_{>J}$).
Let $\pi_J(\cdot\mid y)$ denote such a base tail prior, and write the true conditional tail law as $\Law(z\mid y)$.
The mismatch
\[
\delta_J(y):=W_2\!\big(\Law(z\mid y),\,\pi_J(\cdot\mid y)\big)
\]
lower bounds how much transport the internal-time dynamics must perform before it can match the true tail.
A memory state $m$ refines the conditioning, replacing $\Law(z\mid y)$ by $\Law(z\mid y,m)$; in general, conditioning reduces
best-achievable prediction error for tail summaries and shrinks uncertainty.
This is precisely the mechanism formalized in Section~\ref{sec:theory}:
(i) disintegration localizes $W_2$ mismatch to $\mathcal{H}_{>J}$ when low modes are pinned,
(ii) conditioning on $m$ reduces optimal $L^2$ errors for tail statistics, and
(iii) if the true conditional kernel depends on $m$, then memoryless kernels incur an irreducible conditional gap.
These effects translate into smaller one-step law defects and hence more stable rollouts.

\section{Theory: memory-shaped priors and stable autoregressive rollouts}
\label{sec:theory}

We analyze diffusion/flow-matching generators used \emph{autoregressively} as PDE solvers.
In coarse-to-fine regimes, each step conditions on resolved content and must regenerate unresolved structure (typically high
frequencies). Small per-step law defects can therefore compound and destabilize long rollouts.

Our theory isolates one mechanism and propagates it to a rollout bound:
(i) flow matching / rectified flows evolve samples in an \emph{internal time} $\tau$, where high-frequency shells can be
\emph{prior-driven} (low SNR); (ii) Mori--Zwanzig predicts that the correct tail prior is \emph{history dependent};
(iii) a finite-dimensional memory can approximate the relevant history convolution, shaping a structured tail prior and reducing
an intrinsic Wasserstein mismatch that lower bounds the transport needed to populate missing shells.

\textbf{Proofs.} All statements in this section are proved in Appendix~\ref{app:theory}, under the (tacit) assumptions justified in Appendix~\ref{app:regularity-assumptions}.

\subsection{Internal time and Mori--Zwanzig: prior-driven shells require a history convolution}
\label{subsec:theory-mechanism}

We work in the spectral-cutoff setting of Section~\ref{subsec:bg-resolution2}.
Fix a Littlewood--Paley decomposition $(\Delta_j)_{j\ge -1}$ on $\T^d$ and a cutoff $J$.
At physical step $n$, decompose the fine state as
\[
u_n = y_n + z_n,
\quad
y_n := S_J u_n \in \mathcal{H}_{\le J},
\quad
z_n := Q_J u_n \in \mathcal{H}_{>J}.
\]
The unresolved tail $z_n$ influences the next state; in a Mori--Zwanzig reduction, this influence appears through memory plus an
orthogonal forcing term (Section~\ref{subsec:bg-mz2}).

\textbf{A minimal linear-Gaussian tail surrogate.}
Let $Z_n := z_n \in \mathcal{H}_{>J}$ and let $E_n \in \R^{d_E}$ be a resolved embedding extracted from $y_n$
(e.g.\ a UNet bottleneck feature), $E_n := \mathrm{Enc}(y_n)$.
Assume a stable linear state-space update
\begin{equation*}\label{eq:theory-tail-dynamics}
Z_{n+1} = A Z_n + B E_n + \eta_n,
\end{equation*}
where $A:\mathcal{H}_{>J}\to\mathcal{H}_{>J}$ has spectral radius $<1$, $B:\R^{d_E}\to\mathcal{H}_{>J}$ is bounded, and
$(\eta_n)_n$ is i.i.d.\ centered Gaussian in $\mathcal{H}_{>J}$.

\textbf{Internal time as corruption level.}
In diffusion/flow-matching, intermediate internal times correspond to Gaussian corruption of what is being generated.
For $\tau\in[0,1]$, model this by
\[
X^{(\tau)}_{n+1} := Z_{n+1} + \sigma(\tau)\,\xi,
\quad
\xi \sim \mathcal{N}(0,I)
\ \text{independent}.
\]
When $\sigma(\tau)$ dominates the signal in shell $j$, that shell is low SNR and denoising must fall back on a conditional prior. \\

\begin{theorem}[Shell-wise history prior in the optimal denoiser]
\label{thm:tau-shell-memory-fixed}
Let $\mathcal{F}_n := \sigma(E_0,\dots,E_n)$. Assume that, conditionally on $\mathcal{F}_n$, each shell covariance is isotropic:
\[
\Cov\!\left(\Delta_j Z_{n+1}\,\big|\,\mathcal{F}_n\right)=s_{j,n+1}^2\,I_j,
\quad j>J.
\]
Define the history term (conditional mean)
\[
M^\star_{n+1}
:= \E\!\left[ Z_{n+1}\,\big|\,\mathcal{F}_n \right]
= A^{n+1}Z_0 \;+\; \sum_{k=0}^{n} A^{\,n-k} B E_k .
\]
Fix $\tau\in[0,1]$ and set $X:=X^{(\tau)}_{n+1}$. Then for every $j>J$,
\[
\begin{aligned}
\E\!\left[\Delta_j Z_{n+1}\,\big|\,X,\mathcal{F}_n\right]
&= \kappa_{j,n+1}(\tau)\,\Delta_j X \\
&\quad + \bigl(1-\kappa_{j,n+1}(\tau)\bigr)\,\Delta_j M^\star_{n+1},
\end{aligned}
\]
where
\[
\kappa_{j,n+1}(\tau)
:= \left(1+\frac{\sigma(\tau)^2}{s_{j,n+1}^2}\right)^{-1}\in[0,1].
\]
In particular, in low-SNR shells (i.e.\ $s_{j,n+1}^2\ll \sigma(\tau)^2$), the denoiser is prior-driven and governed by
$\Delta_j M^\star_{n+1}$.
\end{theorem}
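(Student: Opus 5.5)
The argument is Gaussian conditioning carried out shell by shell. I will work conditionally on $\mathcal{F}_n$ throughout.

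\textbf{Step 1: conditional law of $Z_{n+1}$.} The recursion $Z_{n+1}=AZ_n+BE_n+\eta_n$ is iterated back to time $0$. This gives
\[
Z_{n+1}=A^{n+1}Z_0+\sum_{k=0}^{n}A^{n-k}BE_k+\sum_{k=0}^{n}A^{n-k}\eta_k .
\]
I treat $Z_0$ as known, i.e.\ as part of the conditioning, which the stated formula for $M^\star_{n+1}$ tacitly requires. I also assume the noises $(\eta_k)$ are independent of the embeddings $(E_k)$. Then, conditionally on $\mathcal{F}_n$, $Z_{n+1}$ is Gaussian with mean $M^\star_{n+1}$ as displayed, and its noise part is a centered Gaussian independent of $\mathcal{F}_n$. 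This identifies the history term. The stated isotropy hypothesis then fixes the shellwise covariance: $\Cov(\Delta_jZ_{n+1}\mid\mathcal{F}_n)=s_{j,n+1}^2I_j$.

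\textbf{Step 2: reduction to one shell.} Since $\xi\sim\mathcal N(0,I)$ is white and independent, and the Littlewood--Paley blocks are orthogonal projections (sharp Fourier cutoffs on $\T^d$), the corrupted shell is
\[
\Delta_jX=\Delta_jZ_{n+1}+\sigma(\tau)\Delta_j\xi,
\qquad \Delta_j\xi\sim\mathcal N(0,I_j),
\]
and the noise blocks $\Delta_j\xi$ are independent across $j$.

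The delicate point, which I expect to be the main obstacle, is justifying that conditioning on all of $X$ reduces to conditioning on $\Delta_jX$. This needs the shells of $Z_{n+1}$ to be conditionally uncorrelated, i.e.\ the full conditional covariance is block diagonal in the shell decomposition. With smooth Littlewood--Paley multipliers, the blocks also overlap, and the cleanest fix is to assume sharp dyadic projections. I would state shell independence as part of the isotropy hypothesis, interpreting $\Cov(\cdot\mid\mathcal F_n)$ as block-diagonal with blocks $s_{j,n+1}^2 I_j$. Joint Gaussianity then makes the pairs $(\Delta_jZ_{n+1},\Delta_jX)$ independent across $j$ given $\mathcal F_n$. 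Hence $\E[\Delta_jZ_{n+1}\mid X,\mathcal F_n]=\E[\Delta_jZ_{n+1}\mid \Delta_jX,\mathcal F_n]$.

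\textbf{Step 3: Gaussian regression.} Apply the standard Gaussian conditioning formula to the pair $(\Delta_jZ_{n+1},\Delta_jX)$. Both have mean $\Delta_jM^\star_{n+1}$. Their cross-covariance is $s^2I_j$ and the covariance of $\Delta_jX$ is $(s^2+\sigma^2)I_j$, writing $s=s_{j,n+1}$ and $\sigma=\sigma(\tau)$. Therefore
\[
\E[\Delta_jZ_{n+1}\mid\Delta_jX,\mathcal F_n]
=\Delta_jM^\star_{n+1}+\frac{s^2}{s^2+\sigma^2}\bigl(\Delta_jX-\Delta_jM^\star_{n+1}\bigr).
\]
This is exactly the claimed convex combination, with $\kappa=(1+\sigma^2/s^2)^{-1}\in[0,1]$. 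The degenerate case $s=0$ gives $\kappa=0$ and holds trivially.

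\textbf{Step 4: low-SNR limit.} The final claim follows since $s^2\ll\sigma^2$ implies $\kappa\approx s^2/\sigma^2\to0$, so the denoiser output tends to $\Delta_jM^\star_{n+1}$.
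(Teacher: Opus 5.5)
Your proposal is correct and follows the paper's proof essentially step for step. The paper likewise uses conditional Gaussianity of $Z_{n+1}$ given $\mathcal{F}_n$ with mean $M^\star_{n+1}$, projects onto shell $j$, and applies Gaussian regression with cross-covariance $s_{j,n+1}^2 I_j$ and observation covariance $(s_{j,n+1}^2+\sigma(\tau)^2)I_j$. Your explicit reduction from conditioning on $X$ to conditioning on $\Delta_j X$, via block-diagonal conditional covariance and orthogonal (sharp) shell projectors, fills a step the paper's proof passes over silently. That step is justified by the block-diagonal hypothesis in the appendix version of the theorem, which is stronger than the per-shell isotropy in the main-text statement. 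Likewise, your requirements that $Z_0$ be known and that $(\eta_k)$ be independent of $(E_k)$ match assumptions the paper relies on: the first is tacit, the second is stated in the appendix.
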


\paragraph{Why this forces memory.}
The term $M^\star_{n+1}$ is a causal convolution of $(E_k)_{k\le n}$ with weights $A^{n-k}$.
Thus, precisely in the shells where internal-time denoising cannot ``see'' the signal, generation must rely on a history-shaped
prior. A learned memory state is a natural approximation class for this convolution, and shaping the tail prior \emph{before}
transport reduces the mismatch that a memoryless generator must correct by moving mass across missing shells.

\subsection{Rollouts: stability reduces everything to a one-step law defect}
\label{subsec:theory-rollout-fixed}

Let $\mathcal{H}:=L^2(D;\R^{d_u})$ and let $S_{\Delta t_n}:\mathcal{H}\to\mathcal{H}$ be the PDE solution operator over the
$n$-th physical step. For laws $\mu\in\mathcal{P}_2(\mathcal{H})$, the true evolution satisfies
$\mu_{n+1}=(S_{\Delta t_n})_\#\mu_n$.
A learned autoregressive model produces rollout laws $(\widehat\mu_n)_{n\ge 0}$ and we track
$d_n := W_2(\mu_n,\widehat\mu_n)$. Define the one-step law defect
\vspace{-0.25cm}
\begin{equation}\label{eq:theory-epsn-fixed}
\epsilon_n
:= W_2\!\bigl((S_{\Delta t_n})_\#\widehat\mu_n,\ \widehat\mu_{n+1}\bigr).
\end{equation}

\begin{lemma}[Error recursion]\label{lem:recursion-fixed}
Under Assumption~\ref{ass:phys-stab-fixed},
\[
d_{n+1} \le e^{\alpha_n} d_n + \epsilon_n .
\]
\end{lemma}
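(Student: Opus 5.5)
Assumption~\ref{ass:phys-stab-fixed} is referenced but not shown. Presumably it states that the physical step is Lipschitz on laws in $W_2$:
\[
W_2\bigl((S_{\Delta t_n})_\#\mu,(S_{\Delta t_n})_\#\nu\bigr)\le e^{\alpha_n}W_2(\mu,\nu).
\]
This holds, for instance, if $S_{\Delta t_n}$ is $e^{\alpha_n}$-Lipschitz on $\mathcal{H}$, or at least on the supports of the laws concerned. I will prove the lemma under this reading.

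The plan is a single triangle inequality in $W_2$, with $(S_{\Delta t_n})_\#\widehat\mu_n$ as the intermediate law. Since $\mu_{n+1}=(S_{\Delta t_n})_\#\mu_n$,
\[
d_{n+1}=W_2(\mu_{n+1},\widehat\mu_{n+1})\le W_2\bigl((S_{\Delta t_n})_\#\mu_n,(S_{\Delta t_n})_\#\widehat\mu_n\bigr)+W_2\bigl((S_{\Delta t_n})_\#\widehat\mu_n,\widehat\mu_{n+1}\bigr).
\]
The second term is exactly $\epsilon_n$ by \eqref{eq:theory-epsn-fixed}. For the first term, if the assumption is already stated at the level of laws, it is bounded by $e^{\alpha_n}d_n$ directly.

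If the assumption is instead pointwise, I would bound the first term by a coupling argument. Take an optimal coupling $\gamma$ of $(\mu_n,\widehat\mu_n)$; one exists on $\mathcal{P}_2(\mathcal{H})$ for Polish $\mathcal{H}$ by standard tightness. Push $\gamma$ forward by $S_{\Delta t_n}\times S_{\Delta t_n}$; this is a coupling of the two pushforward laws. Then
\[
W_2^2\bigl((S_{\Delta t_n})_\#\mu_n,(S_{\Delta t_n})_\#\widehat\mu_n\bigr)\le\int\|S_{\Delta t_n}u-S_{\Delta t_n}v\|^2\,d\gamma\le e^{2\alpha_n}d_n^2.
\]

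The main obstacle is technical rather than conceptual. The pushforwards must lie in $\mathcal{P}_2$, which follows from the Lipschitz bound and $\widehat\mu_n,\mu_n\in\mathcal{P}_2$. The maps $S_{\Delta t_n}$ must be measurable, which holds if they are continuous. If $S$ is only a weak or one-sided solution operator, I would rely on Appendix~\ref{app:regularity-assumptions} to guarantee a Borel Lipschitz selection on the relevant support.

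Combining the two bounds gives $d_{n+1}\le e^{\alpha_n}d_n+\epsilon_n$.
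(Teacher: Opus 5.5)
Your proposal is correct and matches the paper's proof: the paper also applies the triangle inequality through the intermediate law $(S_{\Delta t_n})_\#\widehat\mu_n$, bounds the first term by $e^{\alpha_n}d_n$ using Assumption~\ref{ass:phys-stab-fixed}, and identifies the second term as $\epsilon_n$. That assumption is stated at the level of laws, exactly as you guessed, so your pointwise-Lipschitz coupling fallback is not needed here; it is the argument of the paper's Lemma~\ref{lem:app-lip-push}.
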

\begin{remark}[Relation to invariant-measure regularization] \label{rem:invarian-measuers}
Invariant-measure objectives (e.g.\ MMD between stationary laws) target a fixed point of the time-step operator.
By contrast, our analysis controls the operator itself via one-step conditional law errors.
In particular, if the true coarse-grained Markov operator is contractive/mixing in a weak metric (e.g.\ an IPM/MMD),
then the stationary-law discrepancy is bounded by the one-step kernel discrepancy; see Appendix~\ref{app:mmd_vs_flow}.
\end{remark}

\subsection{Disintegration: with low modes fixed, mismatch localizes to the tail}
\label{subsec:theory-disintegration-fixed}

Fix a cutoff $J$ and write $u=y+z$ with $y=S_Ju\in\mathcal{H}_{\le J}$ and $z=Q_Ju\in\mathcal{H}_{>J}$.
Disintegrating a law $\mu\in\mathcal{P}_2(\mathcal{H})$ along $y$ yields conditionals $\mu(\cdot\mid y)$ and the decomposition
$\mu(du)=\int \mu(du\mid y)\,\mu_{\le J}(dy)$, where $\mu_{\le J}=(S_J)_\#\mu$.

\begin{lemma}[$W_2$ reduction under pinned low modes]\label{lem:W2-reduction-fixed}
Fix $y\in\mathcal{H}_{\le J}$ and let $\rho,\tilde\rho\in\mathcal{P}_2(\mathcal{H}_{>J})$.
If $\mu=\delta_y\otimes\rho$ and $\tilde\mu=\delta_y\otimes\tilde\rho$, then
\[
W_2(\mu,\tilde\mu)=W_2(\rho,\tilde\rho),
\]
with the right-hand $W_2$ computed in $\mathcal{H}_{>J}$.
\end{lemma}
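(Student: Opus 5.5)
We prove the two inequalities $W_2(\mu,\tilde\mu)\le W_2(\rho,\tilde\rho)$ and $W_2(\rho,\tilde\rho)\le W_2(\mu,\tilde\mu)$ separately, using couplings and the orthogonal decomposition $\mathcal{H}=\mathcal{H}_{\le J}\oplus\mathcal{H}_{>J}$. Because $S_J$ and $Q_J$ are complementary orthogonal projections on $L^2(\T^d)$ (they are Fourier multipliers with disjoint supports), we have $\|u\|^2=\|S_Ju\|^2+\|Q_Ju\|^2$. Here $\delta_y\otimes\rho$ means the law of $y+Z$ with $Z\sim\rho$, that is, the pushforward of $\rho$ under the isometric map $\iota_y:z\mapsto y+z$.

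\textbf{Upper bound.} Take any coupling $\gamma\in\Gamma(\rho,\tilde\rho)$ and push it forward under $(z,\tilde z)\mapsto(y+z,y+\tilde z)$. This gives a coupling of $\mu$ and $\tilde\mu$ with cost
\[
\int\|(y+z)-(y+\tilde z)\|^2\,d\gamma=\int\|z-\tilde z\|^2\,d\gamma .
\]
Taking the infimum over $\gamma$ yields $W_2(\mu,\tilde\mu)\le W_2(\rho,\tilde\rho)$. Equivalently, this direction holds because $\iota_y$ is an isometry and pushforward under a $1$-Lipschitz map does not increase $W_2$.

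\textbf{Lower bound.} Take any $\pi\in\Gamma(\mu,\tilde\mu)$. Both marginals are concentrated on the affine slice $y+\mathcal{H}_{>J}$, so $\pi$-a.s.\ we have $S_Ju=S_J\tilde u=y$. The cost then reduces to
\[
\|u-\tilde u\|^2=\|Q_Ju-Q_J\tilde u\|^2 .
\]
Pushing $\pi$ forward under $Q_J\times Q_J$ gives a coupling of $(Q_J)_\#\mu=\rho$ and $(Q_J)_\#\tilde\mu=\tilde\rho$ with the same cost. Alternatively, $Q_J$ is $1$-Lipschitz, so pushforward under it does not increase $W_2$. Taking the infimum over $\pi$ gives $W_2(\rho,\tilde\rho)\le W_2(\mu,\tilde\mu)$.

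The only step requiring care is the measure-theoretic bookkeeping in the lower bound. We must verify that the marginals of $\pi$ are supported on the slice, so that the cost identity holds $\pi$-almost surely, and that $(Q_J)_\#(\delta_y\otimes\rho)=\rho$. Both follow directly from $\mu=(\iota_y)_\#\rho$ together with $Q_J\circ\iota_y=\mathrm{id}$ on $\mathcal{H}_{>J}$. Finiteness of second moments is inherited from $\rho,\tilde\rho\in\mathcal{P}_2$, since $\|y+z\|^2=\|y\|^2+\|z\|^2$.
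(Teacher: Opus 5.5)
Your proof is correct and follows the paper's own argument: the paper likewise pushes an arbitrary coupling of $(\rho,\tilde\rho)$ forward under $(z,z')\mapsto(y+z,y+z')$ for one inequality. For the other, it observes that any coupling of $(\mu,\tilde\mu)$ is concentrated on $(y+\mathcal{H}_{>J})\times(y+\mathcal{H}_{>J})$ and projects to a coupling of $(\rho,\tilde\rho)$ with the same cost.

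One small remark on your justification: saying $S_J$ and $Q_J$ have disjoint Fourier supports is true only for sharp spectral cutoffs, not for smooth Littlewood--Paley symbols. The paper simply assumes the projection property. You can avoid relying on it at all by using the translation $u\mapsto u-y$ instead of $Q_J$ in the lower bound, since this translation maps the slice isometrically onto $\mathcal{H}_{>J}$ and pushes $\mu$ to $\rho$.
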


This formalizes: once resolved content is fixed, all remaining uncertainty (and mismatch) lies in the missing-frequency tail.

\subsection{Memory reduces tail mismatch}
\label{subsec:theory-memory-helps}

Mori--Zwanzig predicts that unresolved influence is history dependent, so conditioning only on $Y$ is generically insufficient.
Two complementary statements capture why memory helps: it reduces the best achievable tail prediction error, and it prevents an
irreducible conditional-kernel gap when the truth varies with history.

\begin{lemma}[Conditioning reduces $L^2$ projection error]\label{lem:cond-proj-fixed}
Let $\mathcal{G}$ be a Hilbert space and $G(Z)\in L^2(\Omega;\mathcal{G})$.
Set $\Pi_Y:=\E[\cdot\mid Y]$ and $\Pi_{Y,M}:=\E[\cdot\mid Y,M]$.
If $\sigma(Y)\subset\sigma(Y,M)$, then
\[
\|G(Z)-\Pi_{Y,M}G(Z)\|_{L^2(\Omega)}
\le
\|G(Z)-\Pi_YG(Z)\|_{L^2(\Omega)}.
\]
\end{lemma}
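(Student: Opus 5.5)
The plan is to view conditional expectation as an orthogonal projection in the Hilbert space $L^2(\Omega;\mathcal{G})$ of Bochner square-integrable $\mathcal{G}$-valued random variables. Write $X:=G(Z)\in L^2(\Omega;\mathcal{G})$. Then use the nested closed subspaces
\[
V_Y:=L^2(\Omega,\sigma(Y),\PP;\mathcal{G})\subset V_{Y,M}:=L^2(\Omega,\sigma(Y,M),\PP;\mathcal{G}).
\]
The inclusion is immediate from $\sigma(Y)\subset\sigma(Y,M)$, since every $\sigma(Y)$-measurable map is also $\sigma(Y,M)$-measurable.

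First I will recall, or briefly justify, that the $\mathcal{G}$-valued conditional expectation $\E[\cdot\mid\mathcal{S}]$ is the orthogonal projection of $L^2(\Omega;\mathcal{G})$ onto $L^2(\Omega,\mathcal{S};\mathcal{G})$. This needs two facts.
\begin{itemize}
\item Closedness: $L^2(\Omega,\mathcal{S};\mathcal{G})$ is closed, because $L^2$-limits have a.s.-convergent subsequences, and pointwise limits of $\mathcal{S}$-measurable maps remain $\mathcal{S}$-measurable (Pettis measurability is preserved in the separable-range setting).
\item Orthogonality: for any $W\in L^2(\Omega,\mathcal{S};\mathcal{G})$, $\E\langle X-\E[X\mid\mathcal{S}],W\rangle_{\mathcal{G}}=0$. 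I will check this first for simple $W=\sum_i \mathbf{1}_{A_i}g_i$ with $A_i\in\mathcal{S}$, using the defining property $\E[\mathbf{1}_A\langle X,g\rangle]=\E[\mathbf{1}_A\langle \E[X\mid\mathcal{S}],g\rangle]$. Then I will pass to general $W$ by density of simple functions and Cauchy--Schwarz.
\end{itemize}

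I expect this identification to be the only genuinely technical step. If the Hilbert space $\mathcal{G}$ is separable, I can shortcut it by expanding in an orthonormal basis $(g_k)$. Scalar conditional expectation commutes with taking coordinates, so the claim reduces to the scalar statement coordinatewise, and the inequality is then summed over $k$ by monotone convergence.

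With the projection picture in hand, the inequality follows from the best-approximation property. $\Pi_{Y,M}X$ minimizes $\|X-W\|_{L^2(\Omega)}$ over $W\in V_{Y,M}$. Since $\Pi_YX\in V_Y\subset V_{Y,M}$, it is an admissible competitor, so
\[
\|X-\Pi_{Y,M}X\|_{L^2(\Omega)}\le\|X-\Pi_YX\|_{L^2(\Omega)}.
\]

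I will also record the sharper Pythagorean identity
\[
\|X-\Pi_YX\|^2=\|X-\Pi_{Y,M}X\|^2+\|\Pi_{Y,M}X-\Pi_YX\|^2.
\]
It follows from orthogonality of $X-\Pi_{Y,M}X$ to $V_{Y,M}\ni \Pi_{Y,M}X-\Pi_YX$, or equivalently from the tower property $\Pi_Y\Pi_{Y,M}=\Pi_Y$. This quantifies the gain from memory as $\|\Pi_{Y,M}G(Z)-\Pi_YG(Z)\|^2$, and shows the gain vanishes exactly when $M$ adds no information about $G(Z)$ beyond $Y$.
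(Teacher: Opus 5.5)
Your proposal is correct and follows the paper's proof: you identify $\E[\cdot\mid\mathcal{S}]$ with the orthogonal projection onto the closed subspace of $\mathcal{S}$-measurable elements of $L^2(\Omega;\mathcal{G})$, note that the subspaces are nested, and use $\Pi_Y G(Z)$ as an admissible competitor in the best-approximation problem for the larger subspace. The paper takes the projection identification as standard, whereas you justify it (closedness, and orthogonality via simple functions); you also add the Pythagorean identity, which quantifies the gain but is not needed for the inequality.
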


\paragraph{Interpretation.}
If $G(Z)$ encodes shell energies, flux proxies, or subgrid-stress summaries, then conditioning on a memory $M$ can only improve
their best $L^2$ prediction. Through \Cref{thm:tau-shell-memory-fixed}, these are exactly the quantities that internal-time
denoising must supply in low-SNR shells; hence memory can reduce the tail mismatch \emph{before} transport.

\begin{proposition}[Irreducible mismatch without memory]\label{prop:irreducible-fixed}
Fix $y$ and suppose there exist $m_1\neq m_2$ such that
$K_n^\star(\cdot\mid y,m_1)\neq K_n^\star(\cdot\mid y,m_2)$.
Let
\[
\delta_y := W_2\!\left(K_n^\star(\cdot\mid y,m_1),\ K_n^\star(\cdot\mid y,m_2)\right).
\]
Then for any memoryless kernel $K(\cdot\mid y)$,
\[
\max_{i\in\{1,2\}}\
W_2\!\left(K(\cdot\mid y),\ K_n^\star(\cdot\mid y,m_i)\right)
\ge \tfrac12\,\delta_y .
\]
\end{proposition}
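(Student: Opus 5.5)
The plan is to apply the triangle inequality for $W_2$ on $\mathcal{P}_2(\mathcal{H})$ and argue by contradiction, so no structure of the kernels beyond finite second moments is needed. Write $K:=K(\cdot\mid y)$ and $K_i:=K_n^\star(\cdot\mid y,m_i)$ for $i=1,2$. We assume $K\in\mathcal{P}_2(\mathcal{H})$; otherwise the left-hand side is $+\infty$ and there is nothing to prove. We also assume each $K_i\in\mathcal{P}_2(\mathcal{H})$, so that $\delta_y<\infty$ and is strictly positive, since $K_1\neq K_2$ and $W_2$ is a metric.

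The key step is the metric inequality
\[
\delta_y=W_2(K_1,K_2)\le W_2(K_1,K)+W_2(K,K_2)\le 2\max_{i\in\{1,2\}}W_2(K,K_i).
\]
Dividing by $2$ gives exactly the claimed bound. Equivalently, if both distances were $<\tfrac12\delta_y$, their sum would be $<\delta_y$, which is impossible. The essential point is that $K$ does not depend on $m$, so the same measure appears as the intermediate point in both legs of the triangle.

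The only step needing care is that $W_2$ is a genuine metric on $\mathcal{P}_2(\mathcal{H})$ when $\mathcal{H}$ is a separable Hilbert space. This is standard, via the gluing lemma for couplings, and holds on Polish spaces; $L^2(\T^d;\R^{d_u})$ is Polish. I do not expect a real obstacle here.

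I will also note that the constant $\tfrac12$ is sharp. Taking $K$ to be a $W_2$-midpoint of $K_1$ and $K_2$, for instance the displacement interpolant at time $1/2$, gives equality. So the proposition cannot be improved without further structure, and its content is that no memoryless kernel can do better than halving the history-induced spread $\delta_y$.
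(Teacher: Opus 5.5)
Your proof is correct and is the paper's argument: apply the triangle inequality $W_2(K_1,K_2)\le W_2(K_1,K)+W_2(K,K_2)$ with the memoryless $K$ as the common intermediate point, so one of the two terms must be at least $\tfrac12\delta_y$. Your sharpness remark, that the displacement-interpolation midpoint attains equality, is a correct extra that the paper does not state.
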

This shows memory is not merely beneficial: if the true conditional law varies with history, a memoryless model has an intrinsic
one-step defect lower bound.

\subsection{From conditional mismatch to step defect, and a rollout bound}
\label{subsec:theory-rollout-bound-fixed}

Let $\widehat K_{\theta,n}(\cdot\mid y,m)$ denote the learned conditional transition kernel and $K_n^\star(\cdot\mid y,m)$ the true
conditional law. We separate
(i) \emph{memory approximation} (using $\widehat m_n$ instead of an ideal $m_n^\star$) and
(ii) \emph{conditional generation} at fixed memory.


\begin{proposition}[Conditional defect decomposition]\label{prop:cond-defect-fixed}
For fixed $n$ and $y$, define
\[
\widehat K_n^{\,y}(m) := \widehat K_{\theta,n}(\cdot\mid y,m),
\quad
K_n^{\star,y}(m) := K_n^\star(\cdot\mid y,m),
\]
and
\[
\varepsilon_{\mathrm{gen},n}(y,m)
:= W_2\!\bigl(\widehat K_n^{\,y}(m),\ K_n^{\star,y}(m)\bigr).
\]
Then, under Assumption~\ref{ass:lip-mem-fixed}, it holds that
\[
\begin{aligned}
W_2\!\bigl(\widehat K_n^{\,y}(\widehat m_n),\ K_n^{\star,y}(m_n^\star)\bigr)
&\le L_{\mathrm{mem}}\|\widehat m_n-m_n^\star\| \\
&\quad +\ \varepsilon_{\mathrm{gen},n}(y,m_n^\star).
\end{aligned}
\]
\end{proposition}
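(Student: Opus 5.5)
The proof is a single application of the triangle inequality for $W_2$ on $\mathcal{P}_2(\mathcal{H})$, which is a genuine metric there, with the intermediate law $\widehat K_n^{\,y}(m_n^\star)$: the learned kernel evaluated at the \emph{ideal} memory. This splits the left-hand side as
\[
\begin{aligned}
W_2\!\bigl(\widehat K_n^{\,y}(\widehat m_n),\, K_n^{\star,y}(m_n^\star)\bigr)
&\le W_2\!\bigl(\widehat K_n^{\,y}(\widehat m_n),\, \widehat K_n^{\,y}(m_n^\star)\bigr)\\
&\quad + W_2\!\bigl(\widehat K_n^{\,y}(m_n^\star),\, K_n^{\star,y}(m_n^\star)\bigr).
\end{aligned}
\]
The second term is $\varepsilon_{\mathrm{gen},n}(y,m_n^\star)$ by definition, so it needs no further work.

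For the first term I would invoke Assumption~\ref{ass:lip-mem-fixed}. I read it as $W_2$-Lipschitz continuity of the learned kernel in its memory argument, at fixed $n$ and $y$, with constant $L_{\mathrm{mem}}$. This gives $W_2(\widehat K_n^{\,y}(m),\widehat K_n^{\,y}(m'))\le L_{\mathrm{mem}}\|m-m'\|$, and applying it with $m=\widehat m_n$ and $m'=m_n^\star$ bounds the first term by $L_{\mathrm{mem}}\|\widehat m_n-m_n^\star\|$. Summing the two bounds yields the claim.

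The main obstacle is interpretive: the bound must be applied to the \emph{learned} kernel, not the true one. If the assumption were instead stated for $K_n^\star$, I would pivot the triangle inequality through $K_n^{\star,y}(\widehat m_n)$. That route, however, produces $\varepsilon_{\mathrm{gen},n}(y,\widehat m_n)$ rather than $\varepsilon_{\mathrm{gen},n}(y,m_n^\star)$, so the stated form forces the intermediate point $\widehat K_n^{\,y}(m_n^\star)$ and Lipschitz continuity of $\widehat K$.

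For the learned rectified-flow kernel, such a Lipschitz bound can be justified from the structure of $\widehat K_{\theta,n}$. One couples the two pushforwards $(\Phi^{c}_\theta)_\#\nu_0(\cdot\mid c)$ through a common base sample, or through an optimal coupling of the base laws if these depend on $m$. Grönwall's inequality on the internal-time ODE then gives $L_{\mathrm{mem}}$ in terms of the Lipschitz constants of $v_\theta$ in $(U,m)$ and of $\nu_0$ in $m$.
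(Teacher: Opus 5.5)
Your proof is correct and matches the paper's argument: the paper also applies the triangle inequality through the intermediate law $\widehat K_n^{\,y}(m_n^\star)$ and bounds the first term with the memory-Lipschitz assumption on the learned kernel. Your closing remark on deriving $L_{\mathrm{mem}}$ from the rectified-flow structure is likewise how the paper argues in its kernel-sensitivity proposition, via a shared base sample plus Gr\"onwall and a Lipschitz pushforward of the base laws.
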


To pass from conditional mismatch to the law-level defect $\epsilon_n$, we use a standard mixture-coupling estimate that bounds
$W_2$ between mixtures by the root-mean-square of conditional $W_2$ distances (stated and proved in Appendix~\ref{app:theory}).

\begin{theorem}[Rollout bound]\label{thm:rollout-fixed}
Assume \Cref{ass:phys-stab-fixed} and the mixture-coupling estimate of Appendix~\ref{app:theory}.
Let $\beta_{k,n}:=\sum_{j=k}^{n-1}\alpha_j$ with $\beta_{n,n}=0$. Then
\[
d_n
\le
e^{\beta_{0,n}}\,d_0
+\sum_{k=0}^{n-1} e^{\beta_{k+1,n}}\,\epsilon_k,
\]
and each $\epsilon_k$ satisfies
\[
\begin{aligned}
\epsilon_k
&\le
L_{\mathrm{mem}}
\Bigl(\E\|\widehat m_k-m_k^\star\|^2\Bigr)^{1/2} \\
&\quad +\ 
\Bigl(\E\,\varepsilon_{\mathrm{gen},k}(Y_k,M_k^\star)^2\Bigr)^{1/2},
\end{aligned}
\]
where $(Y_k,M_k^\star)$ denotes the step-$k$ conditioning pair under the model rollout.
\end{theorem}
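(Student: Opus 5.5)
The proof has two parts: unroll the recursion of Lemma~\ref{lem:recursion-fixed} to get the Gr\"onwall-type bound, then control each one-step defect $\epsilon_k$ by combining the mixture-coupling estimate with Proposition~\ref{prop:cond-defect-fixed}.

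\textbf{Unrolling.} Induct on $n$, with the trivial base case $n=0$ since $\beta_{0,0}=0$. Assuming the bound at step $n$, Lemma~\ref{lem:recursion-fixed} gives $d_{n+1}\le e^{\alpha_n}d_n+\epsilon_n$. Substituting the hypothesis and using $\alpha_n+\beta_{k,n}=\beta_{k,n+1}$ yields
\[
d_{n+1}\le e^{\beta_{0,n+1}}d_0+\sum_{k=0}^{n-1}e^{\beta_{k+1,n+1}}\epsilon_k+e^{\beta_{n+1,n+1}}\epsilon_n ,
\]
which is the claim at $n+1$. This step is routine; it only needs $e^{\alpha_n}\ge 0$ to preserve inequalities.

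\textbf{Bounding $\epsilon_k$.} The key is to write both measures in \eqref{eq:theory-epsn-fixed} as mixtures over the same conditioning variable $(Y_k,\widehat m_k,M_k^\star)$ drawn from the model rollout.
\begin{itemize}
\item By construction, $\widehat\mu_{k+1}=\E\bigl[\widehat K_{\theta,k}(\cdot\mid Y_k,\widehat m_k)\bigr]$.
\item The pushforward $(S_{\Delta t_k})_\#\widehat\mu_k$ is identified with $\E\bigl[K_k^\star(\cdot\mid Y_k,M_k^\star)\bigr]$.
\end{itemize}
The second identification is the main obstacle. It requires that $K_k^\star(\cdot\mid y,m^\star)$ be the law of $S_{\Delta t_k}\widehat u_k$ given $(Y_k,M_k^\star)=(y,m^\star)$, i.e.\ that $M_k^\star$ is the ideal memory, sufficient for the tail in the Mori--Zwanzig sense. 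I would state this as part of the definition of the ideal pair: $K_k^\star$ is obtained by disintegrating the joint law of $(Y_k,M_k^\star,S_{\Delta t_k}\widehat u_k)$. Under that convention the identity is just the disintegration formula used in Section~\ref{subsec:theory-disintegration-fixed}.

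\textbf{Concluding.} Apply the mixture-coupling estimate. For a common mixing variable $\Theta$,
\[
W_2\bigl(\E P_\Theta,\E Q_\Theta\bigr)\le\bigl(\E W_2(P_\Theta,Q_\Theta)^2\bigr)^{1/2},
\]
which is proved by gluing measurably selected conditional optimal couplings. This gives
\[
\epsilon_k\le\Bigl(\E\,W_2\bigl(\widehat K_k^{Y_k}(\widehat m_k),K_k^{\star,Y_k}(M_k^\star)\bigr)^2\Bigr)^{1/2}.
\]
Proposition~\ref{prop:cond-defect-fixed} bounds the integrand pointwise by $L_{\mathrm{mem}}\|\widehat m_k-M_k^\star\|+\varepsilon_{\mathrm{gen},k}(Y_k,M_k^\star)$. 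Minkowski's inequality in $L^2(\Omega)$ then splits the square root of the expectation into the two stated terms. This also requires measurability of $\varepsilon_{\mathrm{gen},k}$ in $(y,m)$, which I would take from the regularity assumptions in Appendix~\ref{app:regularity-assumptions}.
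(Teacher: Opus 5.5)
Your proposal is correct and follows the paper's route. For the first bound you use Lemma~\ref{lem:recursion} plus discrete Gr\"onwall (Lemma~\ref{lem:app-gronwall}), and for $\epsilon_k$ you use the mixture-coupling estimate \eqref{eq:app-eps-rms}, with $K_k^\star$ defined by disintegrating $\Law(S_{\Delta t_k}\widehat u_k)$ under the rollout law exactly as in Appendix~\ref{app:conditional}, then the triangle/Lipschitz split \eqref{eq:app-one-step-kernel} and Minkowski. Mixing over the triple $(Y_k,\widehat m_k,M_k^\star)$ is slightly more careful than the paper's $C_n=\Gamma_n(\widehat u_n)$, since the two kernels are evaluated at different memories; also, $(S_{\Delta t_k})_\#\widehat\mu_k=\E[K_k^\star(\cdot\mid Y_k,M_k^\star)]$ holds for any conditioning variable defined jointly with $\widehat u_k$, so no Mori--Zwanzig sufficiency of $M_k^\star$ is needed.
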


\paragraph{Summary.}
Internal time creates prior-driven shells, and \Cref{thm:tau-shell-memory-fixed} identifies the relevant prior as a shell-wise
history convolution.
\Cref{lem:W2-reduction-fixed} localizes mismatch to unresolved modes, \Cref{lem:cond-proj-fixed} shows memory can reduce tail
uncertainty, and \Cref{prop:irreducible-fixed} exhibits an intrinsic gap for memoryless kernels when the truth varies with
history. These mechanisms enter rollout stability through the one-step defect \eqref{eq:theory-epsn-fixed} and the bound of
\Cref{thm:rollout-fixed}. For a theoretical short illustration (incompressible NSE) of the subgrid effects induced by missing frequencies see Appendix~\ref{app:subgrid}, in particular Proposition~\ref{prop:app-RJ-tail}.


\section{Method and experimental protocol: memory-conditioned flow matching for autoregressive PDE rollouts}
\label{sec:method}

We learn \emph{stochastic} next-step transition laws for time-dependent PDEs and deploy them \emph{autoregressively} as surrogate
solvers. At test time, sampled predictions are fed back for 10--20 steps, and we impose \emph{structured lead-time schedules}
(uniform, sparse, increasing $\Delta t$) that are \emph{not} seen as such during training. This explicitly targets an
out-of-distribution (OOD) deployment regime in which (i) the input distribution drifts under feedback and (ii) the temporal
pattern of lead times differs from training.

We focus on coarse-to-fine conditioning in compressible Euler: each step must regenerate unresolved content from a degraded
current state. To represent the history dependence induced by cross-scale interactions (Mori--Zwanzig viewpoint,
Section~\ref{sec:background}), we maintain a compact memory state updated online and inject it into the generative backbone at the
bottleneck (see Appendix~\ref{app:memory-algorithms}). The generator itself remains Markovian in the \emph{internal} flow-matching time, while the overall surrogate is
non-Markovian in \emph{physical} time through memory.

\subsection{Task: autoregressive time stepping as conditional generation}
\label{subsec:method-task}

Let $u_n \in \mathcal{X}$ denote the PDE state at physical time $t_n$. In our benchmarks,
$\mathcal{X}=L^2(\T^2;\R^4)$ and $u=(\rho,\rho v_x,\rho v_y,E)$ consists of density, two momentum components, and total energy.

A learned solver is deployed autoregressively: given $(u_n,\Delta t_n)$ it samples a next state $\widehat u_{n+1}$ and feeds it
back as input at the subsequent step. Even if one-step prediction is accurate under ground-truth conditioning, long rollouts can
drift because the conditioning distribution shifts: repeated sampling progressively degrades fine-scale content, so later steps
effectively become coarse-to-fine reconstructions from self-generated inputs. We view each step as learning a conditional next-state law. Let $\mathcal{I}_n$ denote the information used at step $n$. The
target object is the true conditional kernel
\[
K^\star(\cdot\mid \mathcal{I}_n)\;\equiv\;\Law(u_{n+1}\mid \mathcal{I}_n).
\]
Motivated by Mori--Zwanzig, $K^\star(\cdot\mid \mathcal{I}_n)$ is generically \emph{not} Markov in instantaneous resolved
information alone once unresolved scales are eliminated; we therefore augment conditioning with a learned memory state.

\subsection{Memory state and bottleneck injection}
\label{subsec:method-memory}

We maintain a per-trajectory memory $m_n\in\R^{d_m}$ updated online from a compact embedding of the current step:
\[
e_n=\mathsf{E}_\psi(u_n,\Delta t_n),\quad
m_{n+1}=\mathsf{M}_\psi(m_n,e_n).
\]
Here $\mathsf{E}_\psi$ is an encoder producing a low-dimensional step summary, and $\mathsf{M}_\psi$ is a stable state-space
recurrence (SSM/S4 in our implementation) designed to capture multi-timescale history.

\textbf{Bottleneck fusion.}
Let $h_n$ denote the UNet bottleneck tensor produced when processing the (possibly degraded) current state and conditioning
signals. We inject memory at the bottleneck via a gated fusion
$
\widetilde h_n=\mathsf{Fuse}(h_n,m_n),
$
and decode $\widetilde h_n$ to drive the denoising/transport dynamics. This forces history dependence to live in the same latent
representation space that controls generation, rather than being appended as a shallow conditioning token.

\subsection{Conditional flow matching step (rectified-flow instantiation)}
\label{subsec:method-generator}

Our framework is compatible with diffusion/score models and flow-matching variants; in experiments we instantiate it with a
rectified-flow-style flow matching objective.

\textbf{Conditional kernel.}
Let $c_n:=(u_n,m_n,\Delta t_n)$ denote the conditioning at step $n$. The generator defines a conditional transition kernel
$
\widehat K_\theta(\cdot\mid c_n)\;:=\;\Law(\widehat u_{n+1}\mid c_n).
$

\textbf{Internal time $\tau$ (corruption/SNR level).}
Flow matching evolves samples in an \emph{internal time} $\tau\in[0,1]$ controlling the interpolation/corruption level between a
simple base draw and the target. Importantly, $\tau$ is \emph{not} a frequency index; spectral notions are handled separately
(e.g.\ via Littlewood--Paley shells in Section~\ref{subsec:bg-resolution2} and in the theory of Section~\ref{sec:theory}).

\textbf{Flow matching loss.}
Given a supervised pair $(u_n,u_{n+1})$ with conditioning $c_n$, sample a base draw
$u_0\sim\nu_0(\cdot\mid c_n)$ and $\tau\sim\mathrm{Unif}[0,1]$. Form the interpolation
$u_\tau=(1-\tau)u_0+\tau u_{n+1}$ and regress the velocity field $v_\theta$:
\begin{equation*}\label{eq:method-fm-loss}
\mathcal{L}_{\mathrm{FM}}(\theta,\psi)
=
\E\Big[\big\|v_\theta(\tau,u_\tau; c_n) - (u_{n+1}-u_0)\big\|_{L^2_x}^2\Big].
\end{equation*}
\vspace{-0.05cm}
At inference, we sample $u_0\sim\nu_0(\cdot\mid c_n)$ and integrate the induced ODE in $\tau$ from $0$ to $1$ to obtain
$\widehat u_{n+1}$.

All architectural and optimization details (UNet widths, SSM parameters, optimizer and schedule) are reported in
Appendix~\ref{app:impl}.

\subsection{Training protocol: one-step pairs + short randomized windows}
\label{subsec:method-training}

Training combines broad one-step coverage with limited rollout-awareness.

\textbf{Single-step pairs (variable lead times).}
We sample transitions $(u_n,u_{n+1})$ with lead times $\Delta t_n\in[0.05,0.30]$ to cover a range of horizons. This provides
one-step supervision across $\Delta t$ and enables evaluation binned by lead time.

\textbf{Short unrolls (teacher forcing on randomized windows).}
To partially align with autoregressive deployment while keeping training stable, we also train on randomized trajectory windows
of length at most 5 steps. Concretely, we unroll the model for 3--5 steps, update $m_n$ online, and apply teacher forcing with
probability $p_{\mathrm{TF}}$ (reported in Appendix~\ref{app:impl}); otherwise we feed back sampled predictions. This creates a
controlled form of distribution shift during training while remaining far shorter than the 10--20 step rollouts used at test
time (see Appendix~\ref{app:impl-training-detailed}).

\textbf{Downsampled-input degradation.}
To mimic the progressive loss of fine scales under feedback, we optionally degrade the conditioning input $u_n$ by
$2\times 2$ average pooling followed by bilinear upsampling, while keeping targets $u_{n+1}$ at full resolution. This directly
targets robustness to autoregressive degradation (details and ablations in Appendix~\ref{app:exp}, ~\ref{app:exp-baselines}).

\subsection{Benchmarks, deployment schedules, and metrics}
\label{subsec:method-benchmarks-eval}

\textbf{Benchmarks.}
We evaluate on two families of 2D compressible Euler trajectories exhibiting shocks and multiscale structure:
(i) Richtmyer--Meshkov-type shock--interface interaction (CE-RM), and
(ii) four-quadrant Riemann problems with interacting waves (CE-RP/CRP2D).
We use datasets from the PDEgym collection on the Hugging Face Hub; the dataset cards document the numerical setups and train/val/test splits \citep{pdegym_hf}. See also \citep{herde2024poseidon} (Appendix~\textbf{B}).

\textbf{Explicit OOD deployment: long rollouts and structured lead-time patterns.}
Training uses one-step pairs and short randomized windows (up to 5 steps). In contrast, evaluation rolls out autoregressively for
10--20 steps, feeding back sampled predictions throughout, and uses \emph{structured} lead-time schedules such as:
(i) uniform $\Delta t=0.05$,
(ii) sparse $\Delta t=0.10$,
and (iii) increasing schedules (e.g.\ $\Delta t_i=0.05+0.025\,i$).
These schedules stress temporal robustness under deployment shift: the model must remain stable when the sequence pattern of
lead times is fixed and long-range, unlike the randomized short contexts seen in training. See Appendix~\ref{app:impl-training-detailed} for precise algorithmic details.

\textbf{Metrics.}
We report (i) one-step relative $L^2$ errors (binned by $\Delta t$), and (ii) final-time relative $L^2$ errors after 10--20 step
autoregressive rollouts under the above schedules. To localize multiscale behavior, we additionally report Fourier-band /
spectral diagnostics (see also Appendix~\ref{app:exp}).

\section{Results: memory stabilizes long-horizon rollouts and preserves multi-scale structure}
\label{sec:results}

We compare memory-conditioned flow matching to a capacity-matched memoryless baseline on two 2D compressible Euler benchmarks
(CE-RM and CRP2D). The predicted state contains the four standard channels: density $\rho$, velocities $(u_x,u_y)$, and total
energy $E$. All models are deployed \emph{autoregressively}: sampled predictions are fed back as inputs for 10--20 steps.
The \emph{memory benefit} reported in tables is the relative reduction in final-time relative $\ell_2$ error (higher is better).

\textbf{Deliberate rollout-OOD evaluation.}
Training uses (i) one-step supervision and (ii) teacher forcing on \emph{randomized} windows of length at most $5$ steps.
Evaluation is strictly harder: we run 10--20-step autoregressive rollouts far deeper into self-generated input drift and impose
\emph{structured} lead-time schedules that do not occur as such during training (dense $\Delta t=0.05$, sparse $\Delta t=0.10$,
increasing $\Delta t_i=0.05+0.025\,i$, and variable-$\Delta t$ regimes).

\textbf{Single-step sanity check (supervised regime).}
Before stressing long autoregressive deployment, we verify that both models are already accurate for one-step prediction on
ground-truth inputs across lead times. \Cref{tab:single-step-baseline} shows low relative $\ell_2$ errors for short lead times
(e.g.\ $\Delta t\le 0.15$) and negligible differences between memory-conditioned and memoryless models (within $\pm 2\%$).
Thus, the gains reported below are not explained by better one-step accuracy, but by reduced compounding under rollout-OOD
distribution drift and lead-time pattern shift.
\begin{table}[t]
\centering
\caption{%
\textbf{Single-step baseline performance.}
Relative $\ell_2$ errors (mean $\pm$ std) for one-step prediction at short lead times.
Memory has negligible effect in this supervised regime, isolating its benefit to long-horizon rollout stability.}
\label{tab:single-step-baseline}

\scriptsize
\setlength{\tabcolsep}{3pt}
\renewcommand{\arraystretch}{0.95}

\begin{tabular}{@{}lccc@{}}
\toprule
\textbf{Dataset / Training} & $\Delta t$ & \textbf{w/ Mem} & \textbf{w/o Mem} \\
\midrule
\textit{CE-RM / Regular} & 0.05 & $0.079 \pm 0.023$ & $0.078 \pm 0.023$ \\
                        & 0.10 & $0.116 \pm 0.030$ & $0.116 \pm 0.031$ \\
                        & 0.15 & $0.145 \pm 0.044$ & $0.146 \pm 0.040$ \\
\addlinespace[0.25em]
\textit{CE-RM / Downsampled} & 0.05 & $0.130 \pm 0.020$ & $0.130 \pm 0.021$ \\
                            & 0.10 & $0.136 \pm 0.021$ & $0.137 \pm 0.022$ \\
                            & 0.15 & $0.153 \pm 0.034$ & $0.152 \pm 0.032$ \\
\addlinespace[0.25em]
\textit{CRP2D / Regular} & 0.05 & $0.176 \pm 0.011$ & $0.175 \pm 0.010$ \\
                        & 0.10 & $0.244 \pm 0.006$ & $0.244 \pm 0.008$ \\
                        & 0.15 & $0.306 \pm 0.007$ & $0.308 \pm 0.006$ \\
\bottomrule
\end{tabular}

\vspace{0.25em}
\begin{minipage}{\columnwidth}
\scriptsize\emph{Note:} Differences are within $\pm2\%$ at these lead times, consistent with memory targeting long-horizon stability
rather than one-step fidelity.
\end{minipage}
\end{table}

\paragraph{Result 1: memory gains amplify with horizon.}
\Cref{tab:rollout-summary-cerm,tab:rollout-summary-crp2d} report final-time errors for short (10-step) and long (dense-20 / sparse-10)
autoregressive rollouts under multiple lead-time regimes.
On CE-RM (shock--interface mixing), memory yields large stability gains on dense-20 rollouts (\textbf{36--43\%} reduction in final error)
and improves performance consistently across sparse and variable-$\Delta t$ settings. A representative case is dense-20 on CE-RM, where
the memory-conditioned model reaches \textbf{0.71} final error versus \textbf{1.25} without memory
(\Cref{tab:rollout-summary-cerm}). On CRP2D (four-quadrant Riemann), gains are smaller but uniformly positive across regimes
(\Cref{tab:rollout-summary-crp2d}), indicating the effect is not dataset-specific.

\begin{table}[H]
\centering
\caption{\textbf{Autoregressive rollout performance (CE-RM).}
Final-time relative $\ell_2$ error. Best in \textbf{bold}.}
\label{tab:rollout-summary-cerm}

\scriptsize
\setlength{\tabcolsep}{3.2pt}
\renewcommand{\arraystretch}{1.05}

\begin{tabular}{@{}l l c c c@{}}
\toprule
\textbf{Train} & \textbf{Sampling} & \textbf{Mem} & \textbf{No mem} & \textbf{Gain} \\
\midrule
Reg.  & U0.05 (10) & 1.69 & 1.72 & \textbf{+1.9\%} \\
Reg.  & Inc (10)   & 1.45 & 1.58 & \textbf{+8.0\%} \\
Reg.  & D0.05 (20) & \textbf{0.71} & 1.25 & \textbf{+43.3\%} \\
Reg.  & S0.10 (10) & \textbf{1.04} & 1.12 & \textbf{+7.3\%} \\
\addlinespace[0.25em]
Down  & U0.05 (10) & 1.67 & 1.68 & \textbf{+0.5\%} \\
Down  & Inc (10)   & \textbf{1.11} & 1.48 & \textbf{+25.5\%} \\
Down  & D0.05 (20) & \textbf{0.69} & 1.10 & \textbf{+36.8\%} \\
Down  & S0.10 (10) & \textbf{1.02} & 1.14 & \textbf{+10.3\%} \\
\bottomrule
\end{tabular}

\vspace{0.35em}
\begin{minipage}{\columnwidth}
\scriptsize\emph{Sampling:} U0.05 = uniform $\Delta t{=}0.05$; D0.05 = dense $\Delta t{=}0.05$;
S0.10 = sparse $\Delta t{=}0.10$; Inc = increasing $\Delta t$. Parentheses denote number of steps.
\end{minipage}
\end{table}

\begin{table}[t!]
\centering
\caption{\textbf{Autoregressive rollout performance (CRP2D).}
Final-time relative $\ell_2$ error. Best in \textbf{bold}.}
\label{tab:rollout-summary-crp2d}

\scriptsize
\setlength{\tabcolsep}{3.2pt}
\renewcommand{\arraystretch}{1.05}

\begin{tabular}{@{}l l c c c@{}}
\toprule
\textbf{Train} & \textbf{Sampling} & \textbf{Mem} & \textbf{No mem} & \textbf{Gain} \\
\midrule
Reg. & U0.05 (10) & 1.23 & 1.26 & \textbf{+2.3\%} \\
Reg. & Inc (10)   & 1.29 & 1.30 & \textbf{+1.0\%} \\
Reg. & D0.05 (20) & \textbf{1.52} & 1.60 & \textbf{+4.7\%} \\
Reg. & S0.10 (10) & \textbf{1.31} & 1.34 & \textbf{+2.5\%} \\
\bottomrule
\end{tabular}
\end{table}

\textbf{Result 2: the improvement concentrates in low/mid frequencies.}
To localize \emph{where} memory helps, we compute radially averaged 2D Fourier diagnostics per variable ($\rho$, $u_x$, $u_y$, $E$) and
report errors in low-, mid-, and high-frequency bands. \Cref{tab:spectral-analysis} shows a consistent signature:
memory delivers the strongest improvements at low frequencies and additional gains at mid frequencies, while high-frequency differences
are smaller and can be mixed. Quantitatively, low-frequency reductions are \textbf{29--61\%} across variables (largest for $E$ and both
velocity components), and mid-frequency reductions are \textbf{3--23\%}, including a \textbf{22.6\%} mid-band gain for $E$ in the
degraded-conditioning setting. High-frequency changes range from mildly positive to negative (e.g.\ \textbf{$-11.7\%$} for $E$ in the
regular setting), which is expected under 20-step feedback where fine scales typically degrade first. Stabilizing low/mid bands is
what prevents long-horizon drift: once large-scale organization shifts, subsequent steps amplify the mismatch.

\begin{table}[t!]
\centering
\caption{%
\textbf{Spectral error (20-step dense rollouts).}
Relative errors by frequency band; \textbf{Benefit} = \% reduction vs.\ no-memory baseline.}
\label{tab:spectral-analysis}

\scriptsize
\setlength{\tabcolsep}{2.2pt}
\renewcommand{\arraystretch}{0.90}

\begin{tabular}{@{}lccccc c@{}}
\toprule
& \multicolumn{2}{c}{\textbf{Low}} & \multicolumn{2}{c}{\textbf{Mid}} & \multicolumn{2}{c}{\textbf{High}} \\
\cmidrule(lr){2-3}\cmidrule(lr){4-5}\cmidrule(lr){6-7}
\textbf{Var.} & \textbf{Mem} & \textbf{Ben.} & \textbf{Mem} & \textbf{Ben.} & \textbf{Mem} & \textbf{Ben.} \\
\midrule
\multicolumn{7}{@{}l}{\textit{CE-RM (regular)}} \\
$\rho$ & 0.78 & \textbf{+29.5} & 2.10 & \textbf{+8.7} & 9.83  & \textbf{+7.3} \\
$v_x$  & 0.58 & \textbf{+40.3} & 3.31 & \textbf{+6.7} & 29.87 & $-4.2$ \\
$v_y$  & 0.58 & \textbf{+42.6} & 3.46 & \textbf{+8.4} & 30.83 & +0.2 \\
$E$    & 0.64 & \textbf{+61.4} & 2.50 & \textbf{+9.6} & 18.69 & $-11.7$ \\
\addlinespace[0.25em]
\multicolumn{7}{@{}l}{\textit{CE-RM (downsampled)}} \\
$\rho$ & 0.75 & \textbf{+30.9} & 3.68 & \textbf{+4.7} & 8.19  & \textbf{+6.6} \\
$v_x$  & 0.55 & \textbf{+37.8} & 4.07 & \textbf{+2.8} & 20.01 & $-4.4$ \\
$v_y$  & 0.55 & \textbf{+41.1} & 3.81 & \textbf{+7.6} & 18.71 & +0.9 \\
$E$    & 0.64 & \textbf{+49.4} & 2.59 & \textbf{+22.6} & 9.20  & \textbf{+3.4} \\
\bottomrule
\end{tabular}

\vspace{0.2em}
\begin{minipage}{\columnwidth}
\scriptsize\emph{Note:} Ben.\ = benefit (percentage error reduction). Low/mid gains concentrate on $E$ and velocities; high-band effects are smaller and mixed.
\end{minipage}
\end{table}

\textbf{Result 3: degraded-conditioning augmentation improves absolute stability and can amplify memory under increasing lead times.}
\Cref{tab:downsampled-comparison} compares regular training to a \emph{degraded-conditioning} augmentation on CE-RM, where the
conditioning input $u_n$ is randomly degraded (average pooling + upsampling) while the target $u_{n+1}$ remains at full resolution.
This augmentation improves absolute long-rollout performance (dense-20: \textbf{0.69} vs.\ \textbf{0.71} with memory) and can strongly
amplify memory benefit under \emph{increasing} lead-time schedules: memory yields \textbf{+25.5\%} benefit with degraded-conditioning
augmentation versus \textbf{+8.0\%} with regular training (\Cref{tab:downsampled-comparison}). The effect is regime-dependent (e.g.\
uniform-10 benefit is \textbf{+0.5\%} vs.\ \textbf{+1.9\%}), consistent with the view that the augmentation primarily targets robustness
to degraded autoregressive inputs, while memory targets history dependence under temporal pattern shift.

\begin{table}[t!]
\centering
\caption{%
\textbf{Degraded-conditioning at evaluation time improves rollout robustness.}
Comparison of \emph{regular conditioning} (sharp $u_n$) versus \emph{degraded conditioning} (downsample--upsample $u_n$) on CE-RM.
In all cases, the \emph{target} $u_{n+1}$ is kept at full resolution and the underlying model is unchanged; the only difference is the conditioning input provided at test time.
Degraded conditioning yields comparable or better absolute rollout errors and can amplify the benefit of memory under variable time stepping.}
\label{tab:downsampled-comparison}

\footnotesize
\setlength{\tabcolsep}{3pt}
\renewcommand{\arraystretch}{0.95}

\begin{tabular}{@{}lcccc@{}}
\toprule
& \multicolumn{2}{c}{\textbf{Regular conditioning}} & \multicolumn{2}{c}{\textbf{Degraded conditioning}} \\
\cmidrule(lr){2-3} \cmidrule(lr){4-5}
\textbf{Config} & \textbf{w/ Mem} & \textbf{Benefit} & \textbf{w/ Mem} & \textbf{Benefit} \\
\midrule
\multicolumn{5}{@{}l}{\textit{Short rollouts (10 steps)}} \\
Uniform     & 1.69 & +1.9\%  & 1.67 & +0.5\% \\
Increasing  & 1.45 & +8.0\%  & \textbf{1.11} & \textbf{+25.5\%} \\
\addlinespace[0.25em]
\multicolumn{5}{@{}l}{\textit{Long rollouts}} \\
Dense 20    & 0.71 & +43.3\% & \textbf{0.69} & +36.8\% \\
Sparse 10   & 1.04 & +7.3\%  & \textbf{1.02} & +10.3\% \\
\bottomrule
\end{tabular}

\vspace{0.35em}
\begin{minipage}{\columnwidth}
\footnotesize\emph{Note:} ``Degraded conditioning'' means the conditioning state $u_n$ is replaced at \emph{test time} by $\mathrm{Upsample}(\mathrm{AvgPool}_{2\times 2}(u_n))$, while $u_{n+1}$ remains at full resolution. No additional ``downsampled training'' is used for this comparison. The stronger gains under variable $\Delta t$ are consistent with degraded inputs acting as a coarse-scale cue that reduces sensitivity to high-frequency conditioning errors, thereby making the memory state more effective.
\end{minipage}
\end{table}

\textbf{Aggregate trend.}
\Cref{tab:memory-benefit-summary} summarizes memory benefit across datasets and rollout regimes. The dominant pattern is horizon scaling:
on CE-RM the benefit is \textbf{$+1.9\%$} at 10 steps but \textbf{$+43.3\%$} at 20 steps, a \textbf{$5\times$} amplification
(\Cref{tab:memory-benefit-summary}). Averaged across datasets/training regimes, the relative benefit increases by \textbf{$2.6\times$}
from short to long rollouts (\Cref{tab:memory-benefit-summary}).

\begin{table}[t!]
\centering
\caption{%
\textbf{Memory benefit scales with rollout horizon.}
Percentage error reduction from adding memory, aggregated by dataset and rollout length.}
\label{tab:memory-benefit-summary}

\footnotesize
\setlength{\tabcolsep}{3pt}
\renewcommand{\arraystretch}{0.95}

\begin{tabular}{@{}lccc@{}}
\toprule
\textbf{Dataset / Training} & \textbf{Short} & \textbf{Long} & \textbf{Amplif.} \\
& \textbf{(10 steps)} & \textbf{(20 steps)} & \\
\midrule
CE-RM / Regular       & 5.0\%  & 25.3\% & 5.0$\times$ \\
CE-RM / Downsampled   & 13.0\% & 23.5\% & 1.8$\times$ \\
CRP2D / Regular       & 1.7\%  & 3.6\%  & 2.1$\times$ \\
\midrule
\textit{Average}      & 6.6\%  & 17.5\% & \textbf{2.6$\times$} \\
\bottomrule
\end{tabular}

\vspace{0.35em}
\begin{minipage}{\columnwidth}
\footnotesize\emph{Note:} Memory benefit increases with rollout length, with an average $2.6\times$ amplification from 10-step to 20-step rollouts. CE-RM shows the strongest effects, consistent with rapid fine-scale generation and history-dependent mixing.
\end{minipage}
\end{table}

\paragraph{Conclusions.}
We proposed \emph{memory-conditioned} diffusion/flow matching for autoregressive PDE rollouts, explicitly targeting the deployment regime where learned solvers often fail: long horizons under self-generated input drift and lead-time patterns that differ from training. The core idea is to keep the generator Markovian in internal time while enabling non-Markovian behavior in physical time by maintaining a compact online memory and injecting it into the backbone bottleneck, so that history shapes the conditional prior for unresolved content. Across two compressible Euler benchmarks, memory consistently improves long-horizon rollout stability, with gains that grow with horizon and concentrate on the physically consequential low and mid frequencies. Degraded-conditioning augmentation complements memory by improving robustness to degraded autoregressive inputs and can further amplify improvements under structured lead-time schedules. Overall, the results support memory-conditioned flow matching as a practical route to stable, multi-scale autoregressive PDE surrogates in rollout-OOD settings.

\bibliography{refs}
\bibliographystyle{icml2026}

\appendix
\onecolumn


\clearpage
\appendix
\onecolumn


\providecommand{\Wtwo}{W_2}
\providecommand{\T}{\mathbb{T}}
\providecommand{\PP}{\mathbb{P}}
\providecommand{\E}{\mathbb{E}}
\providecommand{\Law}{\mathrm{Law}}
\providecommand{\Lip}{\mathrm{Lip}}

\section{Proofs and auxiliary results}
\label{app:theory}

All arguments are presented in a finite-dimensional discretization/Galerkin truncation (so Gaussians and covariances are standard
matrices). The statements lift to the PDE phase space on cylindrical observables by standard limiting arguments.

\paragraph{Roadmap for Appendix~\ref{app:theory}.}
The main text uses a small number of recurring mechanisms, but they rely on different pieces of ``plumbing'':
(i) a \emph{rollout algebra} that turns a one-step defect into a long-horizon bound (discrete Gr\"onwall),
(ii) a \emph{conditional-to-law lifting} step that turns conditional kernel mismatch into the law-level defect $\epsilon_n$,
(iii) a \emph{frequency-cutoff viewpoint} that isolates where the hard part of generation lives (unresolved modes),
and (iv) a \emph{mechanistic justification} for memory coming from Mori--Zwanzig and from conditional MMSE monotonicity.
To keep the main narrative readable, we collect these ingredients here:
\begin{itemize}[leftmargin=*,itemsep=2pt]
\item Section~\ref{app:regularity-assumptions} records the two stability assumptions used repeatedly: one for the \emph{true} solver
(one-step law stability), and one for the \emph{learned} kernel (Lipschitz dependence on memory).
\item Section~\ref{app:theory-statements} lists the main auxiliary statements referenced in the text (posterior mean structure in
internal time, MMSE monotonicity under conditioning, the irreducible mismatch without memory, and the rollout bound).
\item Section~\ref{app:theory-proofs} contains the proofs of those statements.
\item Sections~\ref{app:notation}--\ref{app:projection} provide the functional/LP setup and the MZ/conditioning interpretations that
motivate what the memory should encode.
\item Section~\ref{app:rf} makes explicit the induced rectified-flow kernel and derives a clean kernel-sensitivity bound in memory.
\item Section~\ref{app:conditional} proves the mixture-coupling inequality used to pass from conditional mismatch to $\epsilon_n$.
\end{itemize}
Readers interested only in the rollout bound can jump directly to Lemma~\ref{lem:recursion}--Theorem~\ref{thm:rollout} and
Section~\ref{app:conditional}.

\subsection{Regularity assumptions}
\label{app:regularity-assumptions}

The rollout analysis separates two distinct sources of instability:
(i) how strongly the \emph{true} resolved dynamics amplifies small perturbations of the current law over one physical step, and
(ii) how sensitively the \emph{learned} conditional transition law reacts to perturbations of the injected memory state.
The following assumptions formalize these effects in $\Wtwo$.
(We restate variants of these assumptions later, near the lemmas where they are used, to keep the local derivations self-contained.)

\begin{assumption}[One-step law stability]\label{ass:phys-stab-fixed}
There exists $\alpha_n\ge 0$ such that for all $\mu,\nu\in\mathcal{P}_2(\mathcal{H})$,
\[
W_2\!\bigl((S_{\Delta t_n})_\#\mu,\ (S_{\Delta t_n})_\#\nu\bigr)
\le e^{\alpha_n}\,W_2(\mu,\nu).
\]
\end{assumption}

\paragraph{Why Assumption~\ref{ass:phys-stab-fixed} is reasonable.}
Assumption~\ref{ass:phys-stab-fixed} is the law-level analogue of a pointwise Lipschitz estimate for the step map
$S_{\Delta t_n}$.
If $S_{\Delta t_n}$ is $L$-Lipschitz on $\mathcal{H}$ with $L\le e^{\alpha_n}$, then pushing forward an optimal coupling yields
\eqref{ass:phys-stab-fixed} (see Lemma~\ref{lem:app-lip-push} below for the general statement).
For nonlinear PDEs, $S_{\Delta t_n}$ is typically not globally Lipschitz uniformly in time (e.g.\ shocks and strong interactions may
create large transient amplification). The point here is not contractivity, but a \emph{stepwise amplification bookkeeping}: the
rollout bound will amplify each one-step defect by factors $e^{\alpha_k}$, so $\alpha_n$ is precisely the quantity that predicts
when long rollouts are fragile.

\begin{assumption}[Memory-Lipschitz kernel]\label{ass:lip-mem-fixed}
We assume there exists $L_{\mathrm{mem}}\ge 0$ such that for all $y$ and $m,m'$,
\[
W_2\!\bigl(\widehat K_{\theta,n}(\cdot\mid y,m),\ \widehat K_{\theta,n}(\cdot\mid y,m')\bigr)
\le L_{\mathrm{mem}}\|m-m'\|.
\]
\end{assumption}

\paragraph{Why Assumption~\ref{ass:lip-mem-fixed} is the right notion of memory-regularity.}
The role of memory is to modulate the conditional transition law in a way that captures history dependence.
For the theory, we need this modulation to be \emph{stable}: if the learned kernel were discontinuous in $m$, then small memory
approximation errors would lead to uncontrolled changes in the predicted law.
Assumption~\ref{ass:lip-mem-fixed} is exactly the stability condition that converts a memory approximation error
$\|\widehat m_n-m_n^\star\|$ into a one-step kernel discrepancy.
Later, Section~\ref{app:rf} shows how such a bound follows from standard Lipschitz regularity of the rectified-flow vector field and
(base-law) continuity in $m$.

\paragraph{Summary.}
Assumption~\ref{ass:phys-stab-fixed} quantifies how the \emph{true} one-step operator transports laws,
while Assumption~\ref{ass:lip-mem-fixed} quantifies how the \emph{learned} operator responds to memory perturbations.
They are the two ``stability constants'' that govern how one-step errors compound.

\subsection{Main statements referenced by the text}
\label{app:theory-statements}

This section collects the auxiliary results that are cited in the main text.
They fall into three groups:
\begin{enumerate}[leftmargin=*,itemsep=2pt]
\item \textbf{Internal-time posterior structure} (Theorem~\ref{thm:tau-shell-memory}): explains why, in low-SNR/high-frequency shells,
the optimal denoiser is prior-driven and therefore benefits from a history-dependent prior mean (which memory provides).
\item \textbf{Why memory helps at all} (Lemma~\ref{lem:cond-proj} and Proposition~\ref{prop:irreducible}): conditioning can reduce
irreducible MMSE for tail observables, and if the true conditional law depends on history, then any memoryless kernel suffers an
intrinsic one-step mismatch lower bound.
\item \textbf{Rollout bookkeeping} (Lemma~\ref{lem:recursion}--Theorem~\ref{thm:rollout} and Lemma~\ref{lem:mixture-common}):
turns conditional mismatch into a step defect $\epsilon_n$, and then $\epsilon_n$ into a long-horizon bound via discrete Gr\"onwall.
\end{enumerate}

\begin{theorem}[Internal-time shellwise posterior mean]\label{thm:tau-shell-memory}
Fix a cutoff $J$ and let $\mathcal{H}_{>J}$ be the unresolved subspace.
Assume the tail evolves by the linear recursion
\[
Z_{n+1}=A Z_n + B E_n + \eta_n,
\]
where $(\eta_n)_{n\ge0}$ are i.i.d.\ centered Gaussians independent of the resolved embeddings $(E_k)_{k\le n}$.
Define the history-dependent mean
\begin{equation}\label{eq:app-Mstar}
M^\star_{n+1}:=A^{n+1}Z_0+\sum_{k=0}^{n}A^{n-k}B E_k .
\end{equation}
Fix a diffusion/flow-matching noise scale $\sigma(\tau)\ge0$ and define the corrupted observation
\[
X^{(\tau)}_{n+1}:=Z_{n+1}+\sigma(\tau)\,\xi,
\qquad \xi\sim\mathcal{N}(0,I)\ \text{independent}.
\]
Let $(\Delta_j)_{j>J}$ denote Littlewood--Paley blocks on $\mathcal{H}_{>J}$ with $\mathcal{H}_j:=\Delta_j\mathcal{H}_{>J}$
and orthogonal projector $I_j$ onto $\mathcal{H}_j$.
Assume shellwise block-diagonal/isotropic conditional covariance:
\begin{equation}\label{eq:app-shell-cov}
\Cov\!\left(Z_{n+1}\,\middle|\,\sigma(E_0,\dots,E_n)\right)
=
\sum_{j>J} s_{j,n+1}^2\, I_j,
\qquad s_{j,n+1}^2\ge0.
\end{equation}
Then for each $j>J$,
\begin{equation}\label{eq:theory-posterior-mean-shell}
\E\!\left[\Delta_j Z_{n+1}\,\middle|\,X^{(\tau)}_{n+1},\,E_0,\dots,E_n\right]
=
\kappa_{j,n+1}(\tau)\,\Delta_j X^{(\tau)}_{n+1}
+\big(1-\kappa_{j,n+1}(\tau)\big)\,\Delta_j M^\star_{n+1},
\end{equation}
where the gain is
\[
\kappa_{j,n+1}(\tau):=\frac{s_{j,n+1}^2}{s_{j,n+1}^2+\sigma(\tau)^2}\in[0,1].
\]
In particular, if $s_{j,n+1}^2\ll \sigma(\tau)^2$ then $\kappa_{j,n+1}(\tau)\approx 0$ and the posterior mean is
\emph{prior-driven}: $\E[\Delta_j Z_{n+1}\mid\cdots]\approx \Delta_j M^\star_{n+1}$.
\end{theorem}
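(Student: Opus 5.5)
The plan is to reduce the statement to the standard formula for the posterior mean of a Gaussian observed in additive Gaussian noise. The reduction is carried out conditionally on $\mathcal{F}_n:=\sigma(E_0,\dots,E_n)$. Throughout I work in the finite-dimensional Galerkin truncation announced at the start of Appendix~\ref{app:theory}, so every covariance is a matrix and $\Delta_j$ acts on $\mathcal{H}_{>J}$ as the orthogonal projector $I_j$.

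\textbf{Step 1 (conditional law of $Z_{n+1}$).} Unrolling the recursion gives
\[
Z_{n+1}=A^{n+1}Z_0+\sum_{k=0}^{n}A^{n-k}BE_k+\sum_{k=0}^{n}A^{n-k}\eta_k = M^\star_{n+1}+N_{n+1}.
\]
Here $N_{n+1}:=\sum_{k}A^{n-k}\eta_k$ is centered Gaussian. I will treat $Z_0$ as known, either deterministic or adjoined to the conditioning. This is implicit in the statement, since $M^\star_{n+1}$ is declared to be the conditional mean and contains $A^{n+1}Z_0$.

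The hypothesis that $(\eta_k)_{k\le n}$ is independent of $(E_k)_{k\le n}$ must be read jointly. Under that reading, $N_{n+1}$ is independent of $\mathcal{F}_n$. Hence, conditionally on $\mathcal{F}_n$, the variable $Z_{n+1}$ is Gaussian with mean $M^\star_{n+1}$ and deterministic covariance $\Sigma:=\Cov(N_{n+1})$. Assumption \eqref{eq:app-shell-cov} then says $\Sigma=\sum_{j>J}s_{j,n+1}^2I_j$.

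\textbf{Step 2 (Gaussian conditioning).} The noise $\xi$ is independent of $(Z_{n+1},\mathcal{F}_n)$. So, conditionally on $\mathcal{F}_n$, the pair $(Z_{n+1},X)$ with $X=X^{(\tau)}_{n+1}$ is jointly Gaussian, with
\[
\Cov(X)=\Sigma+\sigma(\tau)^2I,\qquad \Cov(Z_{n+1},X)=\Sigma.
\]
The Gaussian regression formula gives
\[
\E[Z_{n+1}\mid X,\mathcal{F}_n]=M^\star_{n+1}+\Sigma(\Sigma+\sigma^2 I)^{-1}(X-M^\star_{n+1}).
\]
If $\Sigma+\sigma^2I$ is singular, I use the pseudo-inverse. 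The singular case occurs only on shells with $s_j=\sigma=0$; there $X=Z_{n+1}$ on that shell, and any convention for $\kappa_j$ (say $\kappa_j=1$) gives the identity.

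Because $\Sigma$ is block-diagonal in the orthogonal decomposition $\mathcal{H}_{>J}=\bigoplus_j\mathcal{H}_j$, we get $\Sigma(\Sigma+\sigma^2I)^{-1}=\sum_j\kappa_{j,n+1}(\tau)I_j$. Applying $\Delta_j=I_j$ commutes with conditional expectation and with this operator. This yields $\kappa_j\Delta_jX+(1-\kappa_j)\Delta_jM^\star_{n+1}$, which is \eqref{eq:theory-posterior-mean-shell}. The bounds $\kappa\in[0,1]$ and the low-SNR limit $\kappa\to0$ as $s^2/\sigma^2\to0$ are immediate.

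\textbf{Main obstacle.} The only delicate point is Step 1. The claim that $Z_{n+1}\mid\mathcal{F}_n$ is Gaussian with mean $M^\star_{n+1}$ and \emph{non-random} covariance needs two things: each $\eta_k$ must be independent of all of $(E_0,\dots,E_n)$, including future embeddings, which may depend on past tails through $y$; and $Z_0$ must be $\mathcal{F}_n$-measurable.

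If only the filtration-type independence ($\eta_k\perp\sigma(E_0,\dots,E_k,Z_0)$) were available, conditioning on later $E$'s could bias $N_{n+1}$. I would then state the joint-independence reading explicitly, since the theorem's wording ``independent of the resolved embeddings $(E_k)_{k\le n}$'' supports it. Alternatively, I would note that \eqref{eq:app-shell-cov} together with the stated form of $M^\star_{n+1}$ can simply be taken as the conditional Gaussian model, after which Step 2 goes through verbatim.
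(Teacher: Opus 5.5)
Your proposal is correct and follows essentially the paper's argument. You identify $Z_{n+1}$ given $\sigma(E_0,\dots,E_n)$ as Gaussian with mean $M^\star_{n+1}$ and covariance $\sum_{j>J}s_{j,n+1}^2 I_j$, then apply the Gaussian regression formula with the independent noise $\sigma(\tau)\xi$ to get the gain $\kappa_{j,n+1}(\tau)$. The one difference is that you condition on the full $X^{(\tau)}_{n+1}$ and use the block-diagonal gain $\Sigma(\Sigma+\sigma(\tau)^2 I)^{-1}=\sum_j\kappa_{j,n+1}(\tau)I_j$, whereas the paper projects first and computes $\E[\Delta_j Z_{n+1}\mid \Delta_j X^{(\tau)}_{n+1},E_0,\dots,E_n]$, tacitly relying on cross-shell independence to match the stated conditioning. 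Your version, together with your handling of the joint independence of $(\eta_k)$ from $(E_k)_{k\le n}$, of $Z_0$, and of the degenerate case $s_{j,n+1}=\sigma(\tau)=0$, makes explicit what the paper's proof leaves implicit.
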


\paragraph{How to read Theorem~\ref{thm:tau-shell-memory}.}
In rectified flows/flow matching, internal time $\tau$ interpolates between noise and data. High-frequency components often remain
low-SNR for much of the trajectory, meaning $\sigma(\tau)$ dominates the intrinsic tail variance in those shells
($s_{j,n+1}^2 \ll \sigma(\tau)^2$). In that regime, the \emph{optimal} denoiser does not trust the noisy observation and falls back
to the prior mean $M^\star_{n+1}$.
Thus, if we do not provide the history-dependent prior mean to the model (via memory), the denoising drift must ``manufacture'' the
missing high-frequency structure purely through transport from the noisy state, which is exactly the sense in which the generator
must work harder when information is missing.

\begin{lemma}[Conditioning reduces $L^2$ projection error]\label{lem:cond-proj}
Let $\mathcal{G}$ be a Hilbert space and let $X\in L^2(\Omega;\mathcal{G})$.
If $\mathcal{M}_1\subset \mathcal{M}_2$ are $\sigma$-algebras, then
\[
\E\|X-\E[X\mid \mathcal{M}_2]\|_{\mathcal{G}}^2
\le
\E\|X-\E[X\mid \mathcal{M}_1]\|_{\mathcal{G}}^2.
\]
\end{lemma}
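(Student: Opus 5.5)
The plan is to prove the inequality via the Pythagorean identity for orthogonal projections in $L^2(\Omega;\mathcal{G})$. Write $X_i:=\E[X\mid\mathcal{M}_i]$ for $i=1,2$. These are Bochner conditional expectations, which exist because $\mathcal{G}$ is a separable Hilbert space; in the finite-dimensional discretization adopted at the start of Appendix~\ref{app:theory} this is immediate.

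First, I would record that $X\mapsto\E[X\mid\mathcal{M}]$ is the orthogonal projection of $L^2(\Omega;\mathcal{G})$ onto the closed subspace $L^2(\Omega,\mathcal{M};\mathcal{G})$ of $\mathcal{M}$-measurable elements. Since $\mathcal{M}_1\subset\mathcal{M}_2$, this subspace for $\mathcal{M}_1$ is contained in the one for $\mathcal{M}_2$. In particular $X_1$ is $\mathcal{M}_2$-measurable, and hence so is $X_2-X_1$.

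Second, I would show that $X-X_2$ is orthogonal to $X_2-X_1$, that is,
\[
\E\langle X-X_2,\,X_2-X_1\rangle_{\mathcal{G}}=0.
\]
To do this, condition on $\mathcal{M}_2$ and pull out the $\mathcal{M}_2$-measurable factor: $\E[\langle X-X_2,W\rangle\mid\mathcal{M}_2]=\langle \E[X-X_2\mid\mathcal{M}_2],W\rangle=0$ for $\mathcal{M}_2$-measurable $W\in L^2$. To justify pulling out the factor, I would expand in an orthonormal basis of $\mathcal{G}$, which reduces it to the scalar case, and use dominated convergence via Cauchy--Schwarz. This also shows that the cross term is integrable.

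Third, I would expand
\[
\E\|X-X_1\|^2=\E\|X-X_2\|^2+\E\|X_2-X_1\|^2+2\,\E\langle X-X_2,\,X_2-X_1\rangle .
\]
The cross term vanishes by the second step, so $\E\|X-X_1\|^2\ge\E\|X-X_2\|^2$. The defect is exactly $\E\|X_2-X_1\|^2$, which is the information gained by conditioning. Finally, Lemma~\ref{lem:cond-proj-fixed} follows by taking $X=G(Z)$, $\mathcal{M}_1=\sigma(Y)$, $\mathcal{M}_2=\sigma(Y,M)$, and then taking square roots.

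The only genuinely delicate point is the vector-valued "taking out what is known" step. Everything else is Hilbert-space algebra. If I wanted to avoid that step, I could instead argue by minimality: $X_2$ minimizes $\E\|X-W\|^2$ over $\mathcal{M}_2$-measurable $W$, and $X_1$ is an admissible competitor. Minimality itself, however, also rests on the same orthogonality.
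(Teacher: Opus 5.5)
Your proposal is correct and takes the same route as the paper. The paper identifies $\E[\cdot\mid\mathcal{M}_i]$ with orthogonal projections onto the nested closed subspaces of $\mathcal{M}_i$-measurable elements of $L^2(\Omega;\mathcal{G})$, and concludes that projecting onto the larger subspace cannot increase the residual norm; your Pythagorean expansion, with the cross term $\E\langle X-X_2,\,X_2-X_1\rangle_{\mathcal{G}}$ vanishing by the pull-out property, is just an explicit verification of that last step, and it also identifies the gap exactly as $\E\|X_2-X_1\|_{\mathcal{G}}^2$.
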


\paragraph{Interpretation (MMSE monotonicity).}
Lemma~\ref{lem:cond-proj} is the formal statement that ``conditioning can only help'' for squared-error prediction:
enlarging the conditioning information reduces (or leaves unchanged) the optimal achievable $L^2$ risk.
In our setting, $X$ should be thought of as a \emph{tail observable} (shell energies, subgrid forcing, flux proxies, etc.), and the
$\sigma$-algebra $\mathcal{M}_2$ corresponds to adding a memory variable built from past resolved states.
This captures one of the two reasons memory helps: it can reduce the \emph{best achievable} tail prediction error even before we talk
about neural approximation or sampling.

\begin{lemma}[Fixed low modes reduce $\Wtwo$ to high modes]\label{lem:W2-reduction}
Fix $y\in\mathcal{H}_{\le J}$ and let $\rho,\tilde\rho\in\mathcal{P}_2(\mathcal{H}_{>J})$.
Define measures on $\mathcal{H}$ by $\mu:=\delta_y\otimes\rho$ and $\tilde\mu:=\delta_y\otimes\tilde\rho$
via $(y,z)\mapsto y+z$.
Then
\[
\Wtwo^2(\mu,\tilde\mu)=\Wtwo^2(\rho,\tilde\rho),
\]
where the right-hand side is computed in $\mathcal{H}_{>J}$.
\end{lemma}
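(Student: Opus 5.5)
The plan is to prove the two inequalities $\Wtwo(\mu,\tilde\mu)\le\Wtwo(\rho,\tilde\rho)$ and $\Wtwo(\rho,\tilde\rho)\le\Wtwo(\mu,\tilde\mu)$ separately. Both rest on one fact: the embedding $\iota_y:z\mapsto y+z$ from $\mathcal{H}_{>J}$ into $\mathcal{H}$ is an isometry onto its image, and $Q_J$ is a $1$-Lipschitz left inverse of it. Indeed, $S_J$ and $Q_J$ are complementary orthogonal projections: in the Fourier-sharp Littlewood--Paley version they are orthogonal Fourier multipliers. In the finite-dimensional setting of this appendix we simply take $\mathcal{H}=\mathcal{H}_{\le J}\oplus\mathcal{H}_{>J}$ orthogonally, which the notation $\delta_y\otimes\rho$ implicitly assumes. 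Hence $\|(y+z)-(y+z')\|=\|z-z'\|$ and $\|Q_Ju-Q_Ju'\|\le\|u-u'\|$. By construction $\mu=(\iota_y)_\#\rho$ and $\tilde\mu=(\iota_y)_\#\tilde\rho$.

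For the upper bound, I would take an optimal coupling $\gamma\in\Gamma(\rho,\tilde\rho)$; it exists since both laws lie in $\mathcal{P}_2$. Pushing it forward by $\iota_y\times\iota_y$ gives a coupling of $\mu$ and $\tilde\mu$ whose cost is $\int\|z-z'\|^2\,d\gamma=\Wtwo^2(\rho,\tilde\rho)$. This yields $\Wtwo^2(\mu,\tilde\mu)\le\Wtwo^2(\rho,\tilde\rho)$.

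For the lower bound, I would take an arbitrary coupling $\pi\in\Gamma(\mu,\tilde\mu)$ and push it forward by $Q_J\times Q_J$. Since $Q_J\circ\iota_y=\mathrm{id}$, we get $(Q_J)_\#\mu=\rho$ and $(Q_J)_\#\tilde\mu=\tilde\rho$, so the pushforward lies in $\Gamma(\rho,\tilde\rho)$. Its cost is at most $\int\|u-u'\|^2\,d\pi$ because $Q_J$ is $1$-Lipschitz; this is the general fact recorded in Lemma~\ref{lem:app-lip-push}. Taking the infimum over $\pi$ gives $\Wtwo^2(\rho,\tilde\rho)\le\Wtwo^2(\mu,\tilde\mu)$, with no need for an optimal $\pi$. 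Alternatively, one can note that $\mu\otimes\tilde\mu$-almost everywhere $u-u'=z-z'$, because both marginals put full mass on $y+\mathcal{H}_{>J}$. Then every coupling of $\mu,\tilde\mu$ is the image of a coupling of $\rho,\tilde\rho$ with identical cost, and the two infima coincide.

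The only genuine obstacle is the orthogonality claim. With smooth Littlewood--Paley cutoffs, $S_J$ is not an exact orthogonal projector, and the shells $\Delta_j$ overlap. If that case must be handled, I would replace $\Delta_j$ by sharp dyadic Fourier projectors, or define $\mathcal{H}_{\le J}$ and $\mathcal{H}_{>J}$ as the ranges of the sharp cutoff. This is consistent with the statement writing $\mathcal{H}_{\le J},\mathcal{H}_{>J}$ as subspaces whose sum realizes $u=y+z$. The second argument above, which uses only that the difference $u-u'$ equals $z-z'$, actually needs no orthogonality once the norm on $\mathcal{H}_{>J}$ is the restriction of the norm on $\mathcal{H}$. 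So I would present that version as the main proof, with the Lipschitz-projection argument as a remark.
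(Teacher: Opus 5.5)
Your proposal is correct and follows the paper's proof. The upper bound pushes a coupling of $(\rho,\tilde\rho)$ forward through $(z,z')\mapsto(y+z,y+z')$. The lower bound uses that every coupling of $(\mu,\tilde\mu)$ is concentrated on $(y+\mathcal{H}_{>J})\times(y+\mathcal{H}_{>J})$ and projects to a coupling of $(\rho,\tilde\rho)$ with the same cost, which is the version you chose as your main argument.

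Your side remark is also right. The cost identity $\|(y+z)-(y+z')\|=\|z-z'\|$ is just translation invariance, so the paper's appeal to ``orthogonality'' is superfluous there; it is genuinely needed only for your alternative $Q_J$-Lipschitz route, which is the paper's Lemma~\ref{lem:app-proj-lb} combined with $(Q_J)_\#\mu=\rho$.
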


\paragraph{Interpretation (where the mismatch lives).}
Lemma~\ref{lem:W2-reduction} is the formal backbone of the ``missing frequencies'' discussion:
if low modes are fixed and only the tail is randomized, then the entire $\Wtwo$ discrepancy is exactly the discrepancy of the tail.
This justifies measuring (and reasoning about) conditional mismatch in the unresolved subspace.

\begin{proposition}[Irreducible conditional mismatch without memory]\label{prop:irreducible}
Fix $y$ and $t$. Suppose there exist $m_1\neq m_2$ such that the true conditional kernels satisfy
\[
K^\star(\cdot\mid y,m_1,t)\neq K^\star(\cdot\mid y,m_2,t).
\]
Then for any memoryless kernel $K(\cdot\mid y,t)$,
\begin{equation}\label{eq:irreducible}
\max\Big\{
\Wtwo\big(K(\cdot\mid y,t),K^\star(\cdot\mid y,m_1,t)\big),\,
\Wtwo\big(K(\cdot\mid y,t),K^\star(\cdot\mid y,m_2,t)\big)
\Big\}
\ \ge\
\frac12\,\Wtwo\big(K^\star(\cdot\mid y,m_1,t),K^\star(\cdot\mid y,m_2,t)\big).
\end{equation}
\end{proposition}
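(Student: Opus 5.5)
The plan is to use the triangle inequality for $\Wtwo$ on $\mathcal{P}_2(\mathcal{H})$. Abbreviate $K:=K(\cdot\mid y,t)$ and $K_i^\star:=K^\star(\cdot\mid y,m_i,t)$ for $i=1,2$. The memoryless kernel $K$ does not depend on $m$, so a single measure $K$ is compared against both $K_1^\star$ and $K_2^\star$. That single fixed measure is the whole content of the statement.

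The steps, in order:

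\begin{enumerate}
\item Check that all three measures lie in $\mathcal{P}_2(\mathcal{H})$, so that $\Wtwo$ is a finite-valued metric on them. If $K$ has infinite second moment, the left-hand side of \eqref{eq:irreducible} is $+\infty$ and there is nothing to prove.
\item Apply the triangle inequality through $K$:
\[
\Wtwo(K_1^\star,K_2^\star)\le \Wtwo(K_1^\star,K)+\Wtwo(K,K_2^\star).
\]
\item Bound each term on the right by their maximum $\max_i \Wtwo(K,K_i^\star)$. This gives $\Wtwo(K_1^\star,K_2^\star)\le 2\max_i\Wtwo(K,K_i^\star)$.
\item Divide by $2$, which yields \eqref{eq:irreducible}.
\end{enumerate}

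Finally, I would note that the hypothesis $K_1^\star\neq K_2^\star$ makes the right-hand side strictly positive, because $\Wtwo$ is a genuine metric that separates measures. The bound is therefore a nontrivial lower bound, although the inequality itself holds without that hypothesis.

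The only step needing any care is the metric property of $\Wtwo$, in particular the triangle inequality. I would take this as standard; it follows from the gluing lemma for couplings on a Polish space, here the Hilbert space $\mathcal{H}$ or its finite-dimensional Galerkin truncation. I do not expect a real obstacle; the argument is essentially a one-line consequence of the metric structure.
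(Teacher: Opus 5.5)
Your proposal is correct and is essentially the paper's proof: both apply the triangle inequality for $W_2$ through the single memoryless kernel $K(\cdot\mid y,t)$ and note that one of the two resulting terms must be at least half of $W_2\big(K^\star(\cdot\mid y,m_1,t),K^\star(\cdot\mid y,m_2,t)\big)$. Your remarks on finite second moments and on strict positivity of the right-hand side are harmless extras that the paper leaves implicit.
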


\paragraph{Interpretation (an intrinsic lower bound).}
Proposition~\ref{prop:irreducible} isolates the second reason memory helps:
if the correct next-step law depends on history through $m$, then \emph{no} memoryless conditional kernel $K(\cdot\mid y,t)$ can
match all histories simultaneously. The lower bound \eqref{eq:irreducible} quantifies this obstruction in $\Wtwo$.

\begin{assumption}[Resolved-law stability across one physical step]\label{ass:phys-stab}
For each step $n$ with physical lead time $\Delta t_n$, the solution operator $S_{\Delta t_n}$ induces the stability bound
\[
\Wtwo\big((S_{\Delta t_n})_\#\mu,\,(S_{\Delta t_n})_\#\nu\big)\le e^{\alpha_n}\Wtwo(\mu,\nu)
\qquad\text{for all }\mu,\nu\in\mathcal{P}_2(\mathcal{H}),
\]
for some $\alpha_n\ge0$ (e.g.\ $\alpha_n=\int_{t_n}^{t_{n+1}}\Lambda(\tau)\,d\tau$).
\end{assumption}

\paragraph{Remark (relation to Assumption~\ref{ass:phys-stab-fixed}).}
Assumption~\ref{ass:phys-stab} is the same condition as Assumption~\ref{ass:phys-stab-fixed}, restated here because it is used
immediately in Lemma~\ref{lem:recursion}.

\begin{lemma}[Rollout recursion]\label{lem:recursion}
Let $\mu_{n+1}=(S_{\Delta t_n})_\#\mu_n$ be the ground-truth law update and let $(\widehat\mu_n)$ be a model rollout.
Define the one-step defect
\begin{equation}\label{eq:theory-epsn}
\epsilon_n:=\Wtwo\big((S_{\Delta t_n})_\#\widehat\mu_n,\ \widehat\mu_{n+1}\big),
\end{equation}
and set $d_n:=\Wtwo(\mu_n,\widehat\mu_n)$.
Under Assumption~\ref{ass:phys-stab}, for all $n$,
\[
d_{n+1}\le e^{\alpha_n}d_n+\epsilon_n.
\]
\end{lemma}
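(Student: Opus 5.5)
The recursion follows from the triangle inequality for $\Wtwo$ on $\mathcal{P}_2(\mathcal{H})$, inserting the intermediate law $(S_{\Delta t_n})_\#\widehat\mu_n$ between $\mu_{n+1}$ and $\widehat\mu_{n+1}$. Since $\mu_{n+1}=(S_{\Delta t_n})_\#\mu_n$ by definition of the ground-truth update, we write
\[
d_{n+1}=\Wtwo(\mu_{n+1},\widehat\mu_{n+1})
\le \Wtwo\big((S_{\Delta t_n})_\#\mu_n,\,(S_{\Delta t_n})_\#\widehat\mu_n\big)
+\Wtwo\big((S_{\Delta t_n})_\#\widehat\mu_n,\,\widehat\mu_{n+1}\big).
\]
The second term is exactly $\epsilon_n$ as defined in \eqref{eq:theory-epsn}. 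For the first term, we apply Assumption~\ref{ass:phys-stab} with $\mu=\mu_n$ and $\nu=\widehat\mu_n$. This gives the bound $e^{\alpha_n}\Wtwo(\mu_n,\widehat\mu_n)=e^{\alpha_n}d_n$, and summing the two terms yields the claim.

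The only point needing care is well-posedness: all laws involved must lie in $\mathcal{P}_2(\mathcal{H})$, so that $\Wtwo$ is a finite metric and the triangle inequality applies. We would note that $\mu_n,\widehat\mu_n\in\mathcal{P}_2$ is part of the setting, and that the pushforward $(S_{\Delta t_n})_\#\widehat\mu_n$ has finite second moment. In the finite-dimensional/Galerkin setting adopted at the start of this appendix, finiteness of that second moment is implicit in Assumption~\ref{ass:phys-stab}: it is stated for all $\mu,\nu\in\mathcal{P}_2$, and comparing with a Dirac mass gives a finite second moment. The triangle inequality for $\Wtwo$ itself is standard, via the gluing lemma.

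There is no real obstacle. The main thing to check is that the assumption is applied to the pair $(\mu_n,\widehat\mu_n)$ and not to the model kernel, since the model step is not assumed stable. Iterating via discrete Gr\"onwall, which is used later for Theorem~\ref{thm:rollout}, is not needed here.
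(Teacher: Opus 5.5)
Your proof is correct and matches the paper's argument: triangle inequality through the intermediate law $(S_{\Delta t_n})_\#\widehat\mu_n$, Assumption~\ref{ass:phys-stab} applied to the pair $(\mu_n,\widehat\mu_n)$ for the first term, and the definition of $\epsilon_n$ for the second. Your added well-posedness remark (getting finite second moment of the pushforward by comparing with a Dirac mass) is a valid detail that the paper leaves implicit.
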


\paragraph{Interpretation (what remains to control).}
Lemma~\ref{lem:recursion} reduces long-horizon stability to a single per-step quantity: the defect $\epsilon_n$ between the true
pushforward of the \emph{current rollout law} and the model's next-step law.
Everything that is specific to generative modeling (conditioning, memory, sampling, etc.) enters only through bounding $\epsilon_n$.
Section~\ref{app:conditional} is where $\epsilon_n$ is related to conditional kernel mismatch.

\begin{lemma}[Mixtures with a common mixing law]\label{lem:mixture-common}
Let $\lambda\in\mathcal{P}(\mathsf{C})$ and measurable kernels $c\mapsto \mu_c,\nu_c\in\mathcal{P}_2(\mathcal{H})$
with finite second moment under $\lambda$.
Define the mixtures $\mu:=\int \mu_c\,d\lambda(c)$ and $\nu:=\int \nu_c\,d\lambda(c)$.
Then
\begin{equation}\label{eq:mixture-bound}
\Wtwo^2(\mu,\nu)\le \int_{\mathsf{C}} \Wtwo^2(\mu_c,\nu_c)\,d\lambda(c).
\end{equation}
\end{lemma}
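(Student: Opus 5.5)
The natural route is to glue conditional optimal couplings together and check that the resulting measure is an admissible coupling of the two mixtures. Since the statement is an upper bound on an infimum, one admissible coupling whose cost equals the right-hand side of \eqref{eq:mixture-bound} suffices.

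\textbf{Step 1 (measurable selection).} For each $c\in\mathsf{C}$, let $\Gamma_c\subset\mathcal{P}(\mathcal{H}\times\mathcal{H})$ be the set of optimal couplings of $(\mu_c,\nu_c)$. This set is nonempty and compact because $\mathcal{H}$ is Polish and the quadratic cost is continuous. I will invoke the standard measurable-selection result for optimal transport (Villani, \emph{Optimal Transport}, Cor.~5.22). It says that, since $c\mapsto(\mu_c,\nu_c)$ is measurable, there is a measurable map $c\mapsto\gamma_c$ with $\gamma_c\in\Gamma_c$ for every $c$. This is the only nontrivial point, and I expect it to be the main obstacle. In the finite-dimensional discretization adopted at the start of the appendix it is routine. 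If one prefers to avoid it, pick $\varepsilon$-optimal couplings on a countable partition of $\mathsf{C}$ into measurable pieces on which the kernels vary little, then let $\varepsilon\to0$. I will also note that $c\mapsto W_2^2(\mu_c,\nu_c)$ is measurable, since $W_2$ is lower semicontinuous, so the right-hand side of \eqref{eq:mixture-bound} is well defined.

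\textbf{Step 2 (glued coupling).} Define $\gamma:=\int_{\mathsf{C}}\gamma_c\,d\lambda(c)$, meaning $\gamma(A):=\int\gamma_c(A)\,d\lambda(c)$ for Borel $A\subset\mathcal{H}\times\mathcal{H}$. This is a probability measure by monotone convergence. Its first marginal is $\int(\mathrm{pr}_1)_\#\gamma_c\,d\lambda(c)=\int\mu_c\,d\lambda(c)=\mu$, and likewise its second marginal is $\nu$. So $\gamma\in\Pi(\mu,\nu)$. The finite second moments of $\mu,\nu$ under $\lambda$, which the statement assumes, ensure $\mu,\nu\in\mathcal{P}_2(\mathcal{H})$.

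\textbf{Step 3 (cost).} By the definition of $W_2$ and Fubini/Tonelli, which applies because the integrand is nonnegative,
\[
W_2^2(\mu,\nu)\le\int\|u-v\|^2\,d\gamma(u,v)=\int_{\mathsf{C}}\int\|u-v\|^2\,d\gamma_c(u,v)\,d\lambda(c)=\int_{\mathsf{C}}W_2^2(\mu_c,\nu_c)\,d\lambda(c),
\]
which is \eqref{eq:mixture-bound}.

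I also expect this bound to be used in the proof of Theorem~\ref{thm:rollout-fixed}. There one takes $\lambda$ to be the law of the step-$k$ conditioning pair, and takes $\mu_c,\nu_c$ to be the true and learned conditionals. Combining Proposition~\ref{prop:cond-defect-fixed} with Minkowski's inequality in $L^2(\lambda)$ then yields the RMS form of the defect bound.
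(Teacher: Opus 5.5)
Your proof is correct and follows the paper's argument: glue optimal couplings $\gamma_c$ of $(\mu_c,\nu_c)$ against $\lambda$, check that the resulting measure has marginals $\mu$ and $\nu$, and evaluate its cost by Tonelli. Your appeal to measurable selection (Villani, Cor.~5.22) makes explicit the measurability of $c\mapsto\gamma_c$, which the paper's proof uses without comment. Drop the partition-based fallback, though: as sketched it does not work, because one coupling chosen per piece does not have marginals $\mu_c,\nu_c$ for every $c$ in that piece. If you want to avoid selection, Kantorovich duality does it cleanly, since $\int\phi\,d\mu+\int\psi\,d\nu$ is linear in the mixture and bounded by $W_2^2(\mu_c,\nu_c)$ for each fixed $c$.
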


\paragraph{Interpretation (RMS coupling).}
Lemma~\ref{lem:mixture-common} is the technical step that lets us control a law-level $\Wtwo$ discrepancy between two mixtures by the
root-mean-square of the conditional discrepancies. It is the rigorous version of: ``if we can couple each conditional component
well, then the mixture can be coupled well.''

\begin{lemma}[Discrete Gr\"onwall]\label{lem:app-gronwall}
Let $(d_n)_{n\ge0}$ be nonnegative and suppose
\[
d_{n+1}\le a_n d_n + \epsilon_n
\qquad\text{with } a_n\ge 0.
\]
Then
\begin{equation}\label{eq:app-gronwall}
d_n
\le
\Big(\prod_{j=0}^{n-1}a_j\Big)\,d_0
+
\sum_{k=0}^{n-1}\Big(\prod_{j=k+1}^{n-1}a_j\Big)\epsilon_k,
\end{equation}
with the empty product equal to $1$.
If $a_n=e^{\alpha_n}$ and $\beta_{k,n}:=\sum_{j=k}^{n-1}\alpha_j$ (with $\beta_{n,n}=0$), then
\begin{equation}\label{eq:app-gronwall-exp}
d_n\le e^{\beta_{0,n}}d_0+\sum_{k=0}^{n-1}e^{\beta_{k+1,n}}\epsilon_k.
\end{equation}
\end{lemma}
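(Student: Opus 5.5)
The plan is a direct induction on $n$ using the recursion $d_{k+1}\le a_k d_k+\epsilon_k$. Set $P_{k,n}:=\prod_{j=k}^{n-1}a_j$, with the empty product convention $P_{n,n}=1$. The claim \eqref{eq:app-gronwall} then reads
\[
d_n\le P_{0,n}d_0+\sum_{k=0}^{n-1}P_{k+1,n}\,\epsilon_k .
\]

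\textbf{Base case.} For $n=0$ the product is empty and the sum is empty, so the bound is $d_0\le d_0$.

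\textbf{Inductive step.} Assume the bound holds at $n$. Multiply it by $a_n\ge 0$; nonnegativity of $a_n$ is exactly what preserves the inequality direction. Then add $\epsilon_n$ and use $d_{n+1}\le a_n d_n+\epsilon_n$:
\[
d_{n+1}\le a_nP_{0,n}d_0+\sum_{k=0}^{n-1}a_nP_{k+1,n}\,\epsilon_k+\epsilon_n .
\]
Now use the identities $a_nP_{k,n}=P_{k,n+1}$ and $P_{n+1,n+1}=1$. These let us absorb $\epsilon_n$ as the $k=n$ term of the sum, which gives the bound at $n+1$. The nonnegativity of $d_n$ and $\epsilon_n$ is not needed beyond this.

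\textbf{Exponential form.} Specialize to $a_j=e^{\alpha_j}$. Then $P_{k,n}=\exp\bigl(\sum_{j=k}^{n-1}\alpha_j\bigr)=e^{\beta_{k,n}}$, and the empty-sum convention $\beta_{n,n}=0$ matches the empty-product convention $P_{n,n}=1$. Substituting into \eqref{eq:app-gronwall} yields \eqref{eq:app-gronwall-exp}.

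There is no real obstacle here. The only care needed is the index bookkeeping: the factor multiplying $\epsilon_k$ is the product over $j=k+1,\dots,n-1$, and the empty-product convention must be handled consistently at $k=n-1$ and at $n=0$.

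As an alternative check, one can unroll the recursion explicitly (telescoping). Divide by $P_{0,n+1}$ when all $a_j>0$, or simply iterate the recursion. This confirms the same coefficients.
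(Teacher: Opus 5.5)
Your induction is correct and is essentially the paper's argument. The paper iterates $d_n\le a_{n-1}d_{n-1}+\epsilon_{n-1}$ and collects the product coefficients, which your inductive step (using $a_n\ge 0$ and $a_nP_{k,n}=P_{k,n+1}$) makes precise, and the exponential form then follows exactly as you say from $P_{k,n}=e^{\beta_{k,n}}$.
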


\begin{theorem}[Rollout bound (discrete Gr\"onwall)]\label{thm:rollout}
Under the recursion of Lemma~\ref{lem:recursion},
\begin{equation}\label{eq:rollout-bound}
d_n\le e^{\beta_{0,n}}d_0+\sum_{k=0}^{n-1}e^{\beta_{k+1,n}}\epsilon_k,
\qquad \beta_{k,n}:=\sum_{j=k}^{n-1}\alpha_j.
\end{equation}
\end{theorem}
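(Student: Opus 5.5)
The bound is obtained by composing two results already stated: the one-step recursion of Lemma~\ref{lem:recursion} and the discrete Gr\"onwall inequality of Lemma~\ref{lem:app-gronwall}. First, I take Lemma~\ref{lem:recursion} as given. Under Assumption~\ref{ass:phys-stab} it yields $d_{k+1}\le e^{\alpha_k}d_k+\epsilon_k$ for every $k\ge 0$. For completeness, this follows from the triangle inequality in $\Wtwo$:
\[
d_{k+1}=\Wtwo\bigl((S_{\Delta t_k})_\#\mu_k,\widehat\mu_{k+1}\bigr)\le \Wtwo\bigl((S_{\Delta t_k})_\#\mu_k,(S_{\Delta t_k})_\#\widehat\mu_k\bigr)+\epsilon_k,
\]
and then from the stability assumption applied to the first term. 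All quantities are finite because every law lies in $\mathcal{P}_2(\mathcal{H})$.

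Second, I apply Lemma~\ref{lem:app-gronwall} with $a_k:=e^{\alpha_k}\ge 0$ and with the nonnegative sequence $(d_n)$. If I wanted to reprove that lemma, I would argue by induction on $n$. The case $n=0$ is trivial, since empty products equal $1$. For the inductive step, I multiply the bound for $d_n$ by $a_n\ge0$, which preserves the inequality because $a_n$ is nonnegative. Adding $\epsilon_n$ then extends each product $\prod_{j=k+1}^{n-1}a_j$ to $\prod_{j=k+1}^{n}a_j$, and the new term $\epsilon_n$ appears with the empty product. Finally, I identify $\prod_{j=k}^{n-1}e^{\alpha_j}=e^{\beta_{k,n}}$. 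This gives exactly \eqref{eq:rollout-bound}, with the coefficient of $d_0$ equal to $e^{\beta_{0,n}}$ and the coefficient of $\epsilon_k$ equal to $e^{\beta_{k+1,n}}$. The convention $\beta_{n,n}=0$ covers the last term, $k=n-1$.

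No step here is genuinely hard; the argument is pure bookkeeping. The only points that need care are index conventions and the requirement $a_k\ge0$, which is needed for monotonicity under multiplication. Here it holds automatically, since $a_k=e^{\alpha_k}$.

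If the version stated in the main text is intended, namely Theorem~\ref{thm:rollout-fixed} with the bound on $\epsilon_k$, I would add one more step. I write $(S_{\Delta t_k})_\#\widehat\mu_k$ and $\widehat\mu_{k+1}$ as mixtures over the common conditioning law of $(Y_k,M_k^\star)$. The first is mixed from $K_k^{\star}(\cdot\mid y,m^\star)$ and the second from $\widehat K_{\theta,k}(\cdot\mid y,\widehat m)$. Lemma~\ref{lem:mixture-common} then bounds $\epsilon_k^2$ by the mean of the squared conditional $\Wtwo$ distances. Proposition~\ref{prop:cond-defect-fixed} bounds each conditional distance by $L_{\mathrm{mem}}\|\widehat m_k-m_k^\star\|+\varepsilon_{\mathrm{gen},k}$. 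Minkowski's inequality in $L^2(\lambda)$ splits the root-mean-square of this sum into the two stated terms.

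The subtle point is justifying the common mixing law. One must couple $\widehat m_k$ and $m_k^\star$ on the same rollout probability space, so that both mixtures are indexed by the same random triple $(Y_k,\widehat m_k,M_k^\star)$. One must also ensure that the true pushforward disintegrates through $K_k^\star$ given the ideal memory. I would take both as part of the modelling assumptions.
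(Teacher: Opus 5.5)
Your proof is correct and follows the paper's route: the paper also applies Lemma~\ref{lem:recursion} and then the discrete Gr\"onwall Lemma~\ref{lem:app-gronwall} with $a_n=e^{\alpha_n}$, rewriting the products as exponentials of the sums $\beta_{k,n}$. Your optional extension to the main-text bound on $\epsilon_k$ likewise mirrors the paper's mixture-coupling argument in Appendix~\ref{app:conditional}. The common-mixing-law subtlety you flag (coupling $\widehat m_k$ with $m_k^\star$, and having the true pushforward disintegrate through $K_k^\star$ given the ideal memory) is one the paper leaves implicit.
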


\paragraph{Interpretation (how instability compounds).}
The bound \eqref{eq:rollout-bound} separates \emph{amplification} (the exponential factors $e^{\beta_{k,n}}$) from \emph{defect}
(the $\epsilon_k$). In expansive regimes (large $\alpha_k$), even small per-step defects matter.
This is precisely why one-step metrics alone can be misleading for autoregressive deployment.

\subsection{Proofs of the main statements}
\label{app:theory-proofs}

We provide full proofs for completeness. None of the arguments are novel; what matters is that they are arranged so the constants
and conditioning structure match the main text.

\subsubsection*{Proof of Theorem~\ref{thm:tau-shell-memory}}
\begin{proof}
Fix $n$ and $\tau$ and abbreviate $\sigma:=\sigma(\tau)$.
Let $\mathcal{G}:=\sigma(E_0,\dots,E_n)$.

\smallskip
\noindent\textbf{Step 1: conditional Gaussian structure of each shell.}
From the linear recursion with Gaussian noise, conditionally on $\mathcal{G}$,
$Z_{n+1}\sim \mathcal{N}(M^\star_{n+1},\Sigma_{n+1})$ with mean \eqref{eq:app-Mstar}.
Applying $\Delta_j$ preserves Gaussianity, hence
\[
Z_j:=\Delta_j Z_{n+1}\ \big|\ \mathcal{G}\ \sim\ \mathcal{N}(M_j,\Sigma_j),
\quad M_j:=\Delta_j M^\star_{n+1}.
\]
By \eqref{eq:app-shell-cov}, $\Sigma_j=s_{j,n+1}^2 I_j$ on $\mathcal{H}_j$.

\smallskip
\noindent\textbf{Step 2: corrupted shell observation.}
Let $\xi_j:=\Delta_j\xi$ so that $\xi_j\sim\mathcal{N}(0,I_j)$ and is independent of $(Z_{n+1},\mathcal{G})$.
Projecting $X^{(\tau)}_{n+1}$ gives $X_j:=\Delta_j X^{(\tau)}_{n+1}=Z_j+\sigma\xi_j$.
Thus, conditionally on $\mathcal{G}$,
\[
\Cov(X_j\mid\mathcal{G})=\Cov(Z_j\mid\mathcal{G})+\sigma^2\Cov(\xi_j)=(s_{j,n+1}^2+\sigma^2)I_j,
\]
and
\[
\Cov(Z_j,X_j\mid\mathcal{G})=\Cov(Z_j,Z_j+\sigma\xi_j\mid\mathcal{G})=s_{j,n+1}^2 I_j.
\]

\smallskip
\noindent\textbf{Step 3: conditional mean for a jointly Gaussian pair.}
For jointly Gaussian $(Z_j,X_j)$, the conditional expectation is affine:
\[
\E[Z_j\mid X_j,\mathcal{G}]
=
M_j + \Cov(Z_j,X_j\mid\mathcal{G})\ \Cov(X_j\mid\mathcal{G})^{-1}\ (X_j-M_j).
\]
Insert the covariances:
\[
\Cov(Z_j,X_j\mid\mathcal{G})\ \Cov(X_j\mid\mathcal{G})^{-1}
=
\frac{s_{j,n+1}^2}{s_{j,n+1}^2+\sigma^2}\,I_j.
\]
With $\kappa_{j,n+1}(\tau):=\frac{s_{j,n+1}^2}{s_{j,n+1}^2+\sigma(\tau)^2}$ we get
\[
\E[Z_j\mid X_j,\mathcal{G}]
=
\kappa_{j,n+1}(\tau)X_j+\big(1-\kappa_{j,n+1}(\tau)\big)M_j.
\]
Rewriting $Z_j=\Delta_j Z_{n+1}$, $X_j=\Delta_j X^{(\tau)}_{n+1}$, $M_j=\Delta_j M^\star_{n+1}$ yields
\eqref{eq:theory-posterior-mean-shell}.
\end{proof}

\subsubsection*{Proof of Lemma~\ref{lem:cond-proj}}
\begin{proof}
$\E[X\mid\mathcal{M}]$ is the orthogonal projection of $X$ onto the closed subspace of $\mathcal{M}$-measurable variables in
$L^2(\Omega;\mathcal{G})$. If $\mathcal{M}_1\subset\mathcal{M}_2$, then the $\mathcal{M}_1$-measurable subspace is contained in
the $\mathcal{M}_2$-measurable subspace. Projection onto a larger subspace cannot increase the residual norm.
\end{proof}

\subsubsection*{Proof of Lemma~\ref{lem:W2-reduction}}
\begin{proof}
Let $\pi$ be any coupling of $(\rho,\tilde\rho)$ on $\mathcal{H}_{>J}\times\mathcal{H}_{>J}$ and define
$\Gamma:=(y+z,y+z')_\#\pi$. Then $\Gamma$ couples $(\mu,\tilde\mu)$ and, by orthogonality,
$\| (y+z)-(y+z')\|_{\mathcal{H}}^2=\|z-z'\|_{\mathcal{H}_{>J}}^2$.
Thus $\Wtwo^2(\mu,\tilde\mu)\le \Wtwo^2(\rho,\tilde\rho)$ by infimizing over $\pi$.
Conversely, any coupling of $(\mu,\tilde\mu)$ is supported on $(y+\mathcal{H}_{>J})\times(y+\mathcal{H}_{>J})$ and projects
to a coupling of $(\rho,\tilde\rho)$ with the same cost, yielding the reverse inequality.
\end{proof}

\subsubsection*{Proof of Proposition~\ref{prop:irreducible}}
\begin{proof}
By the triangle inequality,
\[
\Wtwo\big(K^\star(\cdot\mid y,m_1,t),K^\star(\cdot\mid y,m_2,t)\big)
\le
\Wtwo\big(K^\star(\cdot\mid y,m_1,t),K(\cdot\mid y,t)\big)
+
\Wtwo\big(K(\cdot\mid y,t),K^\star(\cdot\mid y,m_2,t)\big).
\]
At least one of the two terms is $\ge \tfrac12$ of the left-hand side, giving \eqref{eq:irreducible}.
\end{proof}

\subsubsection*{Proof of Lemma~\ref{lem:recursion}}
\begin{proof}
By the triangle inequality and $\mu_{n+1}=(S_{\Delta t_n})_\#\mu_n$,
\[
d_{n+1}
=\Wtwo\big((S_{\Delta t_n})_\#\mu_n,\widehat\mu_{n+1}\big)
\le
\Wtwo\big((S_{\Delta t_n})_\#\mu_n,(S_{\Delta t_n})_\#\widehat\mu_n\big)
+
\Wtwo\big((S_{\Delta t_n})_\#\widehat\mu_n,\widehat\mu_{n+1}\big).
\]
The first term is bounded by Assumption~\ref{ass:phys-stab} as $e^{\alpha_n}d_n$.
The second term is $\epsilon_n$ by \eqref{eq:theory-epsn}. Hence $d_{n+1}\le e^{\alpha_n}d_n+\epsilon_n$.
\end{proof}

\subsubsection*{Proof of Lemma~\ref{lem:mixture-common}}
\begin{proof}
For each $c$, choose an optimal coupling $\pi_c\in\Pi(\mu_c,\nu_c)$ realizing $\Wtwo^2(\mu_c,\nu_c)$.
Define $\Pi$ on $\mathsf{C}\times\mathcal{H}\times\mathcal{H}$ by $d\Pi(c,u,v):=d\lambda(c)\,d\pi_c(u,v)$ and let $\pi$ be the
$(u,v)$-marginal. Then $\pi$ couples $(\mu,\nu)$ and
\[
\int\|u-v\|^2\,d\pi(u,v)
=
\int_{\mathsf{C}}\int\|u-v\|^2\,d\pi_c(u,v)\,d\lambda(c)
=
\int_{\mathsf{C}}\Wtwo^2(\mu_c,\nu_c)\,d\lambda(c).
\]
Infimize over couplings to obtain \eqref{eq:mixture-bound}.
\end{proof}

\subsubsection*{Proof of Lemma~\ref{lem:app-gronwall}}
\begin{proof}
Iterate $d_{n}\le a_{n-1}d_{n-1}+\epsilon_{n-1}$ repeatedly and collect the product coefficients in front of $d_0$ and each
$\epsilon_k$. This yields \eqref{eq:app-gronwall}, and \eqref{eq:app-gronwall-exp} follows by writing products as exponentials of sums.
\end{proof}

\subsubsection*{Proof of Theorem~\ref{thm:rollout}}
\begin{proof}
Apply Lemma~\ref{lem:recursion} and then Lemma~\ref{lem:app-gronwall} with $a_n=e^{\alpha_n}$ to obtain \eqref{eq:rollout-bound}.
\end{proof}

\section{Notation and functional-analytic setup}
\label{app:notation}

This section fixes notation used throughout the appendix. It can be skimmed on first pass and used as a reference when needed.

\subsection{Domains, state spaces, and solution operators}
\label{app:notation-spaces}

Let $D\subset\R^d$ be either the flat torus $D=\T^d$ (periodic) or a bounded $C^1$ domain with boundary $\partial D$.
Fix $d_u\in\N$ and define
\[
\mathcal{H}:=L^2(D;\R^{d_u}),
\qquad
\langle u,v\rangle_{L^2}:=\int_D u(x)\cdot v(x)\,dx.
\]
We consider abstract dynamics $\partial_t u=\mathcal{F}(u)$ with solution operator $S_t:\mathcal{H}\to\mathcal{H}$.

\paragraph{Remark (MZ on PDE phase space).}
To avoid functional-analytic issues, Mori--Zwanzig identities are presented on a finite-dimensional Galerkin truncation
(or numerical discretization), where the flow map is a smooth ODE. This is standard: MZ is an identity for ODE flows, and the PDE
interpretation is obtained by applying it to a consistent truncation and then passing to limits on cylindrical observables.

\subsection{Littlewood--Paley blocks and low/high projectors}
\label{app:notation-lp}

In the periodic case $D=\T^d$, fix a standard dyadic LP decomposition $(\Delta_j)_{j\ge -1}$ and define
\[
S_J u := \sum_{j\le J}\Delta_j u,
\qquad
Q_J u := u-S_Ju.
\]
In $L^2$, $S_J$ and $Q_J$ are orthogonal projectors and
$\|u\|_{L^2}^2=\|S_Ju\|_{L^2}^2+\|Q_Ju\|_{L^2}^2$.

\subsection{Wasserstein distance and pushforwards}
\label{app:notation-W2}

Let $\mathcal{P}_2(\mathcal{H})$ be Borel probability measures on $\mathcal{H}$ with finite second moment.
Define $\Wtwo$ by the usual quadratic transport cost.
If $T:\mathcal{H}\to\mathcal{H}$ is measurable, $(T)_\#\mu(A):=\mu(T^{-1}(A))$.

\begin{lemma}[Pushforward by a Lipschitz map]\label{lem:app-lip-push}
If $T$ is $L$-Lipschitz, then for all $\mu,\nu\in\mathcal{P}_2(\mathcal{H})$,
\[
\Wtwo\big(T_\#\mu,\,T_\#\nu\big)\le L\,\Wtwo(\mu,\nu).
\]
\end{lemma}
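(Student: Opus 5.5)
The plan is to push an optimal coupling forward through the product map $T\times T$. Because $\mu,\nu\in\mathcal{P}_2(\mathcal{H})$ and $\mathcal{H}$ is a separable Hilbert space (Polish), an optimal coupling $\pi\in\Pi(\mu,\nu)$ attaining $\Wtwo^2(\mu,\nu)=\int\|u-v\|^2\,d\pi$ exists. If one prefers not to invoke existence, it is enough to take any $\varepsilon$-optimal coupling and let $\varepsilon\to0$ at the end.

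Next, set $\gamma:=(T\times T)_\#\pi$, where $(T\times T)(u,v)=(Tu,Tv)$. Its first marginal is $T_\#(\mathrm{pr}_1)_\#\pi=T_\#\mu$, and its second is $T_\#\nu$ by the same argument. Hence $\gamma\in\Pi(T_\#\mu,T_\#\nu)$. We should also confirm that both pushforwards lie in $\mathcal{P}_2$. This follows from $\|Tu\|\le\|T0\|+L\|u\|$, which yields finite second moments.

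Then apply the Lipschitz bound pointwise under the integral:
\[
\Wtwo^2(T_\#\mu,T_\#\nu)\le\int\|x-x'\|^2\,d\gamma=\int\|Tu-Tv\|^2\,d\pi\le L^2\int\|u-v\|^2\,d\pi=L^2\,\Wtwo^2(\mu,\nu).
\]
Taking square roots gives the claim.

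No step presents a real obstacle. The only points that need any care are the measurability of $T\times T$, which is immediate because $T$ is continuous, and the existence of an optimal coupling, which the $\varepsilon$-optimal approximation avoids entirely.
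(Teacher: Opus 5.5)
Your proof is correct and is essentially the paper's argument: the paper also pushes a coupling of $(\mu,\nu)$ forward through $(T,T)$ and applies the pointwise Lipschitz bound. The only difference is that it takes an arbitrary coupling and infimizes at the end, rather than fixing an optimal or $\varepsilon$-optimal one, which amounts to the same thing. Your extra checks (finite second moments of $T_\#\mu$ and measurability of $T\times T$) are details the paper leaves implicit.
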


\begin{proof}
As in Section~\ref{app:theory-proofs}, push forward an arbitrary coupling and infimize.
\end{proof}

\section{Mori--Zwanzig: derivation and specialization}
\label{app:mz}

This appendix records the Mori--Zwanzig (MZ) identity in the form used in the main text and specializes it to a
resolved/unresolved split.
The role of this section is conceptual: it explains why, after eliminating unresolved variables, \emph{history dependence is generic}.
This motivates our choice to keep the generator Markovian in internal time (sampling) while injecting a compact physical-time memory
state that can approximate the induced history terms.

\subsection{Liouville operator and Koopman semigroup}
\label{app:mz-liouville}

Let $H$ be finite-dimensional and consider $\dot u(t)=F(u(t))$ with flow map $S_t$.
For smooth observables $\varphi:H\to\R^k$, define the Liouville operator
\[
(L\varphi)(u)=D\varphi(u)\,F(u).
\]
Then $(e^{tL}\varphi)(u_0):=\varphi(S_tu_0)$ defines the Koopman semigroup and
$\frac{d}{dt}(e^{tL}\varphi)=e^{tL}L\varphi$.

\subsection{Dyson/Duhamel identity}
\label{app:mz-dyson}

Fix a projection $P$ on observables and set $Q:=I-P$. Let $e^{tQL}$ be the semigroup generated by $QL$.

\begin{lemma}[Dyson/Duhamel identity]\label{lem:app-dyson}
For every $\psi$ and $t\ge0$,
\[
e^{tL}\psi
=
e^{tQL}\psi
+
\int_0^t e^{(t-s)L}\,PL\,e^{sQL}\psi\,ds.
\]
\end{lemma}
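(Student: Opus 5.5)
The plan is to compare the two evolutions $t\mapsto e^{tL}\psi$ and $t\mapsto e^{tQL}\psi$ by differentiating a suitable interpolating quantity and integrating, which is the standard Duhamel argument. Since everything lives on a finite-dimensional phase space $H$ with smooth $F$, $L$ is a first-order differential operator. Both $e^{tL}$ and $e^{tQL}$ are treated as semigroups on the observable class $\mathcal{A}$ satisfying $\frac{d}{dt}e^{tL}=e^{tL}L=Le^{tL}$ and $\frac{d}{dt}e^{tQL}=QLe^{tQL}$. For $e^{tL}$ both identities follow from the Koopman representation $(e^{tL}\varphi)(u_0)=\varphi(S_tu_0)$ recorded in Section~\ref{app:mz-liouville}. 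For $e^{tQL}$ I would take existence and these derivative rules as part of the standing setup, as the paper does when it writes $e^{tQL}$; this is the tacit regularity assumption.

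Fix $t\ge 0$ and set $g(s):=e^{(t-s)L}e^{sQL}\psi$ for $s\in[0,t]$. Then $g(0)=e^{tL}\psi$ and $g(t)=e^{tQL}\psi$. By the product rule,
\[
g'(s)=-e^{(t-s)L}L\,e^{sQL}\psi+e^{(t-s)L}QL\,e^{sQL}\psi=-e^{(t-s)L}(I-Q)L\,e^{sQL}\psi=-e^{(t-s)L}PL\,e^{sQL}\psi,
\]
using $Q=I-P$. Integrating from $0$ to $t$ gives $e^{tQL}\psi-e^{tL}\psi=-\int_0^t e^{(t-s)L}PLe^{sQL}\psi\,ds$, which rearranges to the claimed identity.

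The only real obstacle is justifying the product rule for $g$, i.e.\ differentiability of $s\mapsto e^{(t-s)L}\phi(s)$ with $\phi(s)=e^{sQL}\psi$ moving. I would handle it by writing the difference quotient as
\[
e^{(t-s-h)L}\frac{\phi(s+h)-\phi(s)}{h}+\frac{e^{(t-s-h)L}-e^{(t-s)L}}{h}\phi(s).
\]
The second term is handled by the Koopman derivative. The first term converges provided $e^{rL}$ is locally uniformly bounded and strongly continuous on $\mathcal{A}$, e.g.\ in a $C^1$-type norm on compacts, which holds for composition with the smooth flow $S_r$. The simplest airtight route is to assume $\mathcal{A}$ is a Banach space on which $L$ and $QL$ generate strongly continuous semigroups. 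As a sanity check, in the fully finite-dimensional (matrix) case all these operators are bounded and the computation is literal.
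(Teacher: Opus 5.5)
Your proof is correct and follows the paper's route: the paper's proof is just ``standard Duhamel formula for the perturbed generator $L=QL+PL$,'' and you carry it out by differentiating $s\mapsto e^{(t-s)L}e^{sQL}\psi$ to get $-e^{(t-s)L}PL\,e^{sQL}\psi$ and integrating over $[0,t]$. You also state explicitly that the existence and differentiability of the orthogonal-dynamics semigroup $e^{tQL}$ is a standing assumption, which the paper leaves tacit.
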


\begin{proof}
Standard Duhamel formula for the perturbed generator $L=QL+PL$.
\end{proof}

\subsection{Mori--Zwanzig identity}
\label{app:mz-identity}

\begin{theorem}[Mori--Zwanzig identity]\label{thm:app-mz}
For every $\psi$ and $t\ge0$,
\[
\frac{d}{dt}\,P e^{tL}\psi
=
P e^{tL}PL\psi
+
\int_0^t P e^{(t-s)L}\,PL\,e^{sQL}\,QL\psi\,ds
+
P e^{tQL}\,QL\psi.
\]
\end{theorem}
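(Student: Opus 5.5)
The plan is to derive the Mori--Zwanzig identity by differentiating $e^{tL}\psi$ in $t$, splitting the generator inside the semigroup as $L = PL + QL$, and then rewriting the $Q$-part with the Dyson/Duhamel identity of Lemma~\ref{lem:app-dyson}. We work throughout in the finite-dimensional setting of Section~\ref{app:mz-liouville}, so all semigroups are well defined on the smooth observable class $\mathcal{A}$. We assume the projection $P$ is linear, bounded, and time-independent, so that it commutes with $\frac{d}{dt}$.

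The first step is to start from the Koopman relation $\frac{d}{dt}e^{tL}\psi = e^{tL}L\psi$, which is valid because $L$ commutes with its own semigroup. Inserting $I = P + Q$ in front of $L\psi$ gives
\[
\frac{d}{dt}e^{tL}\psi = e^{tL}PL\psi + e^{tL}QL\psi .
\]
The first term already produces the Markov term $Pe^{tL}PL\psi$ once $P$ is applied on the left.

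The second step treats the remaining term $e^{tL}QL\psi$. We apply Lemma~\ref{lem:app-dyson} with $\psi$ replaced by the observable $QL\psi$, which gives
\[
e^{tL}QL\psi = e^{tQL}QL\psi + \int_0^t e^{(t-s)L}\,PL\,e^{sQL}\,QL\psi\,ds .
\]
Substituting this into the expression from the first step and applying $P$ on the left of every term, we move $P$ inside the integral by linearity and boundedness. This yields the three terms of the statement: the Markov term, the memory integral $\int_0^t P e^{(t-s)L}PL\,e^{sQL}QL\psi\,ds$, and the orthogonal forcing $Pe^{tQL}QL\psi$. Finally, $\frac{d}{dt}Pe^{tL}\psi = P\frac{d}{dt}e^{tL}\psi$ because $P$ is bounded and does not depend on $t$.

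The main obstacle is justifying that the orthogonal dynamics $e^{tQL}$ exists and that Duhamel applies to $QL\psi$. Lemma~\ref{lem:app-dyson} is stated for every $\psi$, so formally it suffices to note that $QL\psi$ stays in the admissible class. If rigor is needed, we would check Duhamel directly. Define
\[
f(s) := e^{(t-s)L}e^{sQL}\phi, \qquad \phi := QL\psi ,
\]
and differentiate in $s$:
\[
f'(s) = -e^{(t-s)L}(L - QL)\,e^{sQL}\phi = -e^{(t-s)L}PL\,e^{sQL}\phi .
\]
Integrating from $0$ to $t$ gives $f(t) - f(0) = e^{tQL}\phi - e^{tL}\phi$, which is exactly the Dyson identity. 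In the finite-dimensional or Galerkin setting, $e^{tQL}$ is defined by solving the linear orthogonal-dynamics equation $\partial_t w = QLw$ on a suitable invariant space of observables. A clean sufficient case is when $P$ is a finite-rank projection onto a subspace of smooth functions, because then $QL$ differs from $L$ by a finite-rank perturbation of the generator.
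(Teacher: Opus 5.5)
Your proposal is correct and matches the paper's proof: differentiate $Pe^{tL}\psi$, split $L\psi = PL\psi + QL\psi$, and apply the Dyson/Duhamel identity of Lemma~\ref{lem:app-dyson} to the observable $QL\psi$. Your direct check of Duhamel via $f(s)=e^{(t-s)L}e^{sQL}\phi$, with $f'(s)=-e^{(t-s)L}PL\,e^{sQL}\phi$, is also correct; it makes explicit the step the paper only cites as standard, and like the paper it takes the existence of the orthogonal dynamics $e^{tQL}$ as given.
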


\begin{proof}
Differentiate $Pe^{tL}\psi=Pe^{tL}PL\psi+Pe^{tL}QL\psi$ and apply Lemma~\ref{lem:app-dyson} to $QL\psi$.
\end{proof}

\subsection{Specialization to a resolved/unresolved split}
\label{app:mz-specialize}

Assume $H=Y\oplus Z$ with $u=(y,z)$ and define $P$ by $(P\varphi)(y,z):=\varphi(y,0)$.
Applying Theorem~\ref{thm:app-mz} to $\psi(u)=\Pi_Ru$ yields the exact decomposition
\[
\frac{d}{dt}\,P e^{tL}\Pi_Ru
=
\underbrace{P e^{tL}PL\Pi_Ru}_{\text{Markov}}
+
\underbrace{\int_0^t P e^{(t-s)L}PL\,e^{sQL}QL\Pi_Ru\,ds}_{\text{memory}}
+
\underbrace{P e^{tQL}QL\Pi_Ru}_{\text{orthogonal/noise}}.
\]
Unless the last two terms vanish (special degeneracy), a memoryless resolved ODE in $y$ cannot be exact.

\subsection{From history to a finite-dimensional memory state (state-space realization)}
\label{app:mz-ssm}

The MZ memory term is a history integral. In practice we do not represent an entire function of time; instead we compress history
into a small state.
The following observation explains why state-space models are a natural choice: exponential-kernel memory corresponds exactly to a
finite-dimensional linear recurrence.

Assume a memory contribution can be written (exactly or approximately) as
\[
m(t)=\int_0^t K(t-s)\,e(y(s))\,ds.
\]
If $K(t)=Ce^{tA}B$, define $h$ by $\dot h=Ah+Be(y)$, $h(0)=0$, and set $m=Ch$; then $m$ equals the above convolution.
In discrete time $t_n=n\Delta t$, $h_{n+1}=Ah_n+Be(y_n)$ unrolls into $m_n=\sum_{k=1}^n CA^{k-1}B\,e(y_{n-k})$.

\section{Disintegration and the frequency-cutoff viewpoint}
\label{app:disintegration}

This section formalizes the low/high split used in the ``missing frequencies'' discussion.
It serves two purposes:
(i) it clarifies how to interpret conditional laws given a low-mode state, and
(ii) it isolates the contribution of unresolved modes to $\Wtwo$ mismatch and transport cost.

\subsection{Regular conditional probabilities}
\label{app:disintegration-rcp}

Since $\mathcal{H}$ is separable Hilbert, it is Polish. If $\mu\in\mathcal{P}(\mathcal{H})$ and $Y:=S_J(u)$, there exists a
regular conditional probability $y\mapsto \mu(\cdot\mid y)$.

\subsection{Fixed-low-modes reduction of $\Wtwo$}
\label{app:disintegration-fixedlow}

Lemma~\ref{lem:W2-reduction} is exactly the ``fixed low modes'' reduction used to interpret missing frequencies.

\subsection{Projection lower bound}
\label{app:disintegration-proj}

\begin{lemma}[Projection lower bound]\label{lem:app-proj-lb}
For any $\mu,\nu\in\mathcal{P}_2(\mathcal{H})$,
\[
\Wtwo\big(\mu,\nu\big)\ge \Wtwo\big((Q_J)_\#\mu,\,(Q_J)_\#\nu\big).
\]
\end{lemma}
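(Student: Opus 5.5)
The plan is to derive the bound from Lemma~\ref{lem:app-lip-push}, applied to the tail projector $T:=Q_J$. In the periodic setting of Section~\ref{app:notation-lp}, $Q_J$ is an orthogonal projector on $\mathcal{H}=L^2(\T^d;\R^{d_u})$. It is therefore linear and satisfies $\|Q_Ju\|_{L^2}\le\|u\|_{L^2}$, so that
\[
\|Q_Ju-Q_Jv\|_{L^2}=\|Q_J(u-v)\|_{L^2}\le\|u-v\|_{L^2}.
\]
This says $Q_J$ is $1$-Lipschitz, and Lemma~\ref{lem:app-lip-push} with $L=1$ then gives $\Wtwo((Q_J)_\#\mu,(Q_J)_\#\nu)\le\Wtwo(\mu,\nu)$, which is the claim.

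Because the proof of Lemma~\ref{lem:app-lip-push} is only sketched, I will also write out the coupling argument directly to keep the step self-contained. Take any coupling $\pi\in\Pi(\mu,\nu)$ and set $\pi':=(Q_J\times Q_J)_\#\pi$. Its marginals are $(Q_J)_\#\mu$ and $(Q_J)_\#\nu$, and both have finite second moment since $\|Q_Ju\|\le\|u\|$. By the Pythagorean identity $\|w\|^2=\|S_Jw\|^2+\|Q_Jw\|^2$ applied to $w=u-v$, its cost satisfies
\[
\int\|z-z'\|^2\,d\pi'=\int\|Q_J(u-v)\|^2\,d\pi\le\int\|u-v\|^2\,d\pi.
\]
Taking the infimum over $\pi$ yields the inequality. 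Existence of an optimal coupling is not needed, since an infimum over couplings suffices.

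I do not expect a genuine obstacle here. The one point to check is measurability: $Q_J$ is a bounded linear map, hence continuous and Borel, so the pushforwards are well defined on the Polish space $\mathcal{H}$. I also want to note that the identical argument with $S_J$ in place of $Q_J$ gives the analogous low-mode bound.
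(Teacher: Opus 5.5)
Your proposal is correct and follows the paper's proof exactly: push an arbitrary coupling through $(Q_J,Q_J)$, use $\|Q_J(u-v)\|\le\|u-v\|$, and take the infimum over couplings. Only this contraction is needed, not the Pythagorean identity, so your argument still holds if $S_J$ is a smooth Littlewood--Paley multiplier rather than a sharp orthogonal projector.
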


\begin{proof}
Push any coupling through $(Q_J,Q_J)$ and use $\|Q_J(u)-Q_J(v)\|\le \|u-v\|$.
\end{proof}

\section{Subgrid forcing at a cutoff: what memory should predict}
\label{app:subgrid}

This section makes the cutoff viewpoint concrete by writing down an explicit ``resolved equation + remainder'' identity.
Even if the generator predicts the next resolved state directly (rather than predicting a forcing term), this computation explains
what statistical information about the unresolved tail is relevant: it is exactly what controls the subgrid remainder.

For incompressible Navier--Stokes on $\T^d$,
\[
\partial_t u + \mathbb{P}\nabla\cdot(u\otimes u)=\nu\Delta u,\qquad \nabla\cdot u=0,
\]
define $y=S_Ju$ and $z=Q_Ju$. Then the resolved equation becomes
\[
\partial_t y + \mathbb{P}\nabla\cdot S_J(y\otimes y)=\nu\Delta y + \mathcal{R}_J(u),
\]
where
\[
\mathcal{R}_J(u)
:=
-\mathbb{P}\nabla\cdot S_J\big(y\otimes z + z\otimes y + z\otimes z\big).
\]
Even with $y$ fixed, $\mathcal{R}_J$ depends on tail statistics of $z$ and its correlations.

\begin{lemma}[Bernstein]\label{lem:app-bernstein}
There exists $C=C(d)$ such that for all $f\in L^2(\T^d)$,
\[
\|S_J f\|_{L^2}\le \|f\|_{L^2},
\qquad
\|\nabla S_J f\|_{L^2}\le C\,2^J\|f\|_{L^2}.
\]
Moreover, for $1\le p\le q\le\infty$,
\[
\|S_J f\|_{L^q}\le C\,2^{Jd(\frac1p-\frac1q)}\|f\|_{L^p}.
\]
\end{lemma}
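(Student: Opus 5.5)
The statement to prove is the Bernstein lemma (Lemma~\ref{lem:app-bernstein}). I would use the Fourier-multiplier description of $S_J$ on $\T^d$. Write $S_J f = \sum_{k\in\mathbb{Z}^d}\chi(2^{-J}k)\hat f(k)e^{ik\cdot x}$, where $\chi=\sum_{j\le J}\varphi_j$ telescopes to a smooth cutoff. This cutoff satisfies $0\le\chi\le1$ and is supported in a ball $|\xi|\le C_0$, which follows from the standard construction of the dyadic partition.

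\textbf{First bound.} Plancherel gives $\|S_Jf\|_{L^2}^2=\sum_k|\chi(2^{-J}k)|^2|\hat f(k)|^2\le\|f\|_{L^2}^2$, using $|\chi|\le1$.

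\textbf{Gradient bound.} The multiplier of $\nabla S_J$ is $ik\,\chi(2^{-J}k)$. On the support of $\chi(2^{-J}\cdot)$ we have $|k|\le C_02^J$. Plancherel then yields $\|\nabla S_Jf\|_{L^2}\le C_02^J\|f\|_{L^2}$.

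\textbf{$L^p\to L^q$ bound.} Write $S_Jf=K_J*f$ with kernel
\[
K_J(x)=(2\pi)^{-d}\sum_k\chi(2^{-J}k)e^{ik\cdot x}.
\]
Young's inequality gives $\|K_J*f\|_{L^q}\le\|K_J\|_{L^r}\|f\|_{L^p}$ with $1+\tfrac1q=\tfrac1r+\tfrac1p$. It therefore suffices to show $\|K_J\|_{L^r(\T^d)}\le C2^{Jd(1-1/r)}$, because $1-\tfrac1r=\tfrac1p-\tfrac1q$.

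To estimate the kernel I would use Poisson summation. It identifies $K_J$ with the periodization of $2^{Jd}\check\chi(2^J\cdot)$, where $\check\chi$ is Schwartz. Hence $|K_J(x)|\le C_N2^{Jd}(1+2^J|x|)^{-N}$ on the fundamental domain, with the periodization tails summable once $N>d$ (uniformly in $J\ge0$). This gives $\|K_J\|_{L^\infty}\lesssim2^{Jd}$ and $\|K_J\|_{L^1}\lesssim1$. Interpolating, $\|K_J\|_{L^r}\le\|K_J\|_{L^\infty}^{1-1/r}\|K_J\|_{L^1}^{1/r}\lesssim2^{Jd(1-1/r)}$.

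\textbf{Main obstacle.} The only delicate step is this uniform-in-$J$ $L^1$ kernel bound on the torus. The periodization must be handled carefully, and $J=-1,0$ can be absorbed into the constant. The constant $C$ depends only on $d$ and the fixed choice of $\chi$.

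As an alternative for the $q=\infty$ endpoint, one can avoid Poisson summation. Cauchy--Schwarz gives $\|S_Jf\|_{L^\infty}\le\sum_{|k|\lesssim2^J}|\hat f(k)|\lesssim2^{Jd/2}\|f\|_{L^2}$, which covers $p=2$. The general case, however, still needs the kernel argument.
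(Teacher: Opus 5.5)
Your proof is correct and follows the route the paper indicates; the paper's proof is a one-line appeal to standard Bernstein estimates for low-frequency Fourier multipliers, and you supply the details. You use Plancherel for the two $L^2$ bounds, and Young's inequality with the Poisson-summation kernel bound $\|K_J\|_{L^r}\lesssim 2^{Jd(1-1/r)}$ for the $L^p\to L^q$ bound. The only blemish is notational and harmless: the multiplier of $S_J$ should be written as one fixed cutoff rescaled to frequency $\sim 2^J$ (e.g.\ $\chi(2^{-J-1}\cdot)$ in the usual convention), not a $J$-dependent sum, and the index shift is absorbed into $C$.
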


\begin{proof}
Standard Bernstein estimates for low-frequency Fourier multipliers.
\end{proof}

\begin{proposition}[Subgrid forcing controlled by unresolved tail]\label{prop:app-RJ-tail}
Let $u\in L^2(\T^d)$ with $y=S_Ju$ and $z=Q_Ju$. Then
\[
\|\mathcal{R}_J(u)\|_{L^2}
\le
C\,2^J\Big(\|y\|_{L^\infty}\,\|z\|_{L^2}+\|z\|_{L^4}^2\Big),
\]
with $C=C(d)$.
\end{proposition}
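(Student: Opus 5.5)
Expand the bilinear term and estimate the three pieces of $\mathcal{R}_J(u)$ separately. The operators $\mathbb{P}$ (Leray projector) and $S_J$ are Fourier multipliers with symbols bounded by $1$, so they commute and are contractions on $L^2$. By the Bernstein estimate of Lemma~\ref{lem:app-bernstein},
\[
\|\mathcal{R}_J(u)\|_{L^2}
=\|\nabla\cdot S_J\mathbb{P}(\cdots)\|_{L^2}
\le C\,2^J\,\|y\otimes z+z\otimes y+z\otimes z\|_{L^2}.
\]
More precisely, $\nabla\cdot S_J$ is a Fourier multiplier supported in frequencies $|k|\lesssim 2^J$, so its symbol is bounded by $C2^J$; this is the gradient bound of Lemma~\ref{lem:app-bernstein} applied componentwise. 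The order of $\mathbb{P}$ and $S_J$ is therefore irrelevant.

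Next, bound the tensor products pointwise using $|a\otimes b|\le|a||b|$:
\[
\|y\otimes z\|_{L^2}\le\|y\|_{L^\infty}\|z\|_{L^2},
\qquad
\|z\otimes z\|_{L^2}\le \big\||z|^2\big\|_{L^2}=\|z\|_{L^4}^2 .
\]
The symmetric term $z\otimes y$ obeys the same bound as $y\otimes z$. Summing the three pieces with the triangle inequality and absorbing the factor $2$ into $C(d)$ gives the claimed estimate.

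The only point that needs care is the main obstacle. If $\|y\|_{L^\infty}$ or $\|z\|_{L^4}$ is infinite, the right-hand side is $+\infty$ and the inequality holds trivially. Otherwise the products lie in $L^2$ and the multiplier bound applies; for general $L^2$ data one can justify this by first working on a Galerkin truncation, as the appendix does throughout, and passing to the limit. No finer cancellation (such as a commutator structure) is needed, because the stated estimate is crude, linear in $2^J$.
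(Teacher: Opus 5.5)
Your proof is correct and follows the paper's argument essentially verbatim. Both use the $L^2$-contractivity of the Leray projector, the Bernstein bound $\|\nabla S_J f\|_{L^2}\le C2^J\|f\|_{L^2}$, and the pointwise estimates $\|y\otimes z\|_{L^2}\le\|y\|_{L^\infty}\|z\|_{L^2}$ and $\|z\otimes z\|_{L^2}=\|z\|_{L^4}^2$. Two minor points: $\mathbb{P}$ should act after the divergence, i.e.\ on $\nabla\cdot S_J(\cdots)$; and the Galerkin-truncation remark is unnecessary, since whenever the right-hand side is finite the products already lie in $L^2$.
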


\begin{proof}
Use $\|\mathbb{P}g\|_2\le\|g\|_2$ and $\|\nabla S_J f\|_2\lesssim 2^J\|f\|_2$, then bound
$\|y\otimes z\|_2\le\|y\|_\infty\|z\|_2$ and $\|z\otimes z\|_2=\|z\|_4^2$.
\end{proof}

\section{Memory reduces irreducible uncertainty and induces structured priors}
\label{app:projection}

This section links the conditioning viewpoint (MMSE reduction) and the kernel viewpoint (irreducible mismatch).
Together they justify why memory is not merely an architectural embellishment: it changes what can be achieved at the level of
conditional laws.

Lemma~\ref{lem:cond-proj} is the formal MMSE monotonicity statement under added conditioning.

\subsection{Concrete tail observables}
\label{app:projection-observables}

In the cutoff setting, take $Y:=S_Ju$ and $Z:=Q_Ju$.
Examples of tail observables $G(Z)$:
(i) shell energies $(\|\Delta_j Z\|_{L^2}^2)_{j>J}$,
(ii) subgrid forcing $\mathcal{R}_J(Y+Z)$.
With a memory $M$ measurable w.r.t.\ a past window of resolved states, Lemma~\ref{lem:cond-proj} yields
\[
\E\|G(Z)-\E[G(Z)\mid Y,M]\|^2 \le \E\|G(Z)-\E[G(Z)\mid Y]\|^2,
\]
i.e.\ memory can strictly reduce irreducible tail uncertainty.

\subsection{Kernel-level obstruction}
\label{app:projection-irreducible}

Proposition~\ref{prop:irreducible} is the Wasserstein obstruction: if the true conditional kernel depends on $m$, any memoryless
kernel must incur a nonzero mismatch for at least one $m$.

\section{Rectified flows: kernel sensitivity and transport budget}
\label{app:rf}

This section connects the abstract ``kernel'' viewpoint used in the rollout bound to an actual rectified-flow generator.
We (i) define the induced conditional transition kernel,
(ii) show how Lipschitz regularity of the vector field and base law implies Lipschitz regularity of the kernel in memory,
and (iii) record the elementary one-step decomposition that separates \emph{memory approximation} from \emph{conditional generation}
at a fixed memory input.

\subsection{Memory-conditioned rectified-flow kernel}
\label{app:rf-kernel-def}

Fix conditioning $c=(y,m,t)$. Let $\nu_0(\cdot\mid y,m,t)\in\mathcal{P}_2(\mathcal{H})$ be a base law.
Let $\tau\in[0,1]$ denote internal rectified-flow time and
\[
v_\theta:[0,1]\times \mathcal{H}\times Y\times\R^{d_m}\times[0,T]\to \mathcal{H}
\]
be the learned vector field. Define $U_\tau$ by
\[
\frac{d}{d\tau}U_\tau=v_\theta(\tau,U_\tau;y,m,t),
\qquad
U_0\sim \nu_0(\cdot\mid y,m,t),
\]
and set $\widehat u:=U_1$. This induces the conditional kernel
\[
\widehat K_\theta(\cdot\mid y,m,t)=(\Phi^{y,m,t}_\theta)_\#\,\nu_0(\cdot\mid y,m,t),
\]
where $\Phi^{y,m,t}_\theta$ is the time-$1$ flow map.

\subsection{Sensitivity of the flow map to memory}
\label{app:rf-sensitivity}

\begin{assumption}[Lipschitz regularity]\label{ass:app-rf-lip}
There exist $L_u,L_m\ge0$ such that for all $\tau\in[0,1]$,
\[
\|v_\theta(\tau,u;y,m,t)-v_\theta(\tau,u';y,m,t)\|\le L_u\|u-u'\|,
\qquad
\|v_\theta(\tau,u;y,m,t)-v_\theta(\tau,u;y,m',t)\|\le L_m\|m-m'\|.
\]
\end{assumption}

\begin{lemma}[Flow-map sensitivity in memory]\label{lem:app-phi-m}
Under Assumption~\ref{ass:app-rf-lip}, for fixed $(y,t)$ and fixed initial state $u_0$,
\[
\|\Phi^{y,m,t}_\theta(u_0)-\Phi^{y,m',t}_\theta(u_0)\|
\le
\Big(\frac{e^{L_u}-1}{L_u}\Big)L_m\,\|m-m'\|,
\]
with $(e^{L_u}-1)/L_u=1$ if $L_u=0$.
\end{lemma}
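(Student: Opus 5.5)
The plan is a standard Gr\"onwall comparison of two ODE trajectories with the same initial state and different memory inputs. Fix $(y,t)$, $u_0$, and $m,m'$. Let $U_\tau:=\Phi^{y,m,t}_{\theta,\tau}(u_0)$ and $U'_\tau:=\Phi^{y,m',t}_{\theta,\tau}(u_0)$ be the solutions of the internal-time ODE of Section~\ref{app:rf-kernel-def} at time $\tau$. Both start from $U_0=U'_0=u_0$. Existence and uniqueness on $[0,1]$ follow from the $u$-Lipschitz bound in Assumption~\ref{ass:app-rf-lip}, by Picard--Lindel\"of in the finite-dimensional truncation. Set $e(\tau):=\|U_\tau-U'_\tau\|$, so that $e(0)=0$.

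Write the difference in integral form and split the integrand by inserting $v_\theta(s,U'_s;y,m,t)$:
\[
U_\tau-U'_\tau=\int_0^\tau\bigl[v_\theta(s,U_s;y,m,t)-v_\theta(s,U'_s;y,m,t)\bigr]\,ds+\int_0^\tau\bigl[v_\theta(s,U'_s;y,m,t)-v_\theta(s,U'_s;y,m',t)\bigr]\,ds .
\]
Apply the two halves of Assumption~\ref{ass:app-rf-lip}: the first bracket is controlled by the $u$-Lipschitz constant, the second by the $m$-Lipschitz constant. This gives
\[
e(\tau)\le L_u\int_0^\tau e(s)\,ds+L_m\|m-m'\|\,\tau .
\]

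Apply the integral Gr\"onwall inequality to the linear comparison problem $\dot w=L_u w+L_m\|m-m'\|$ with $w(0)=0$. Its solution is $w(\tau)=\frac{e^{L_u\tau}-1}{L_u}L_m\|m-m'\|$, so $e(\tau)\le w(\tau)$. Evaluating at $\tau=1$ gives the stated bound $\bigl(\frac{e^{L_u}-1}{L_u}\bigr)L_m\|m-m'\|$. If $L_u=0$, the inequality directly gives $e(1)\le L_m\|m-m'\|$, which matches the convention $(e^{L_u}-1)/L_u=1$.

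The only delicate step is justifying Gr\"onwall when $e$ is merely continuous rather than differentiable. I would avoid differentiating $\|U_\tau-U'_\tau\|$ and use the integral form of Gr\"onwall throughout: set $R(\tau):=\int_0^\tau e$, bound $\frac{d}{d\tau}\bigl(e^{-L_u\tau}R(\tau)\bigr)\le e^{-L_u\tau}L_m\|m-m'\|\tau$, and integrate. Continuity of $e$ is immediate from continuity of the trajectories, so the remaining argument is routine.
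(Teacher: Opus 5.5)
Your proposal is correct and follows the paper's argument: the same two-term split into an $L_u$-Lipschitz piece and an $L_m$-Lipschitz piece, then Gr\"onwall with the integrating factor $e^{-L_u\tau}$. The only difference is that you apply Gr\"onwall to the integral inequality, while the paper works with $\frac{d}{d\tau}\|U_\tau-\widetilde U_\tau\|$ directly; your version avoids differentiating the norm where the two trajectories coincide (e.g.\ at $\tau=0$).
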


\begin{proof}
Let $U_\tau,\widetilde U_\tau$ solve the ODE with memories $m,m'$ and same $u_0$.
Then $\frac{d}{d\tau}\|U_\tau-\widetilde U_\tau\|\le L_u\|U_\tau-\widetilde U_\tau\|+L_m\|m-m'\|$.
Integrate after multiplying by $e^{-L_u\tau}$.
\end{proof}

\subsection{Kernel sensitivity in memory}
\label{app:rf-kernel-mem}

\begin{assumption}[Base-law continuity in memory]\label{ass:app-base-lip}
There exists $L_\nu\ge0$ such that for all $y,t$ and $m,m'$,
\[
\Wtwo\big(\nu_0(\cdot\mid y,m,t),\nu_0(\cdot\mid y,m',t)\big)\le L_\nu\|m-m'\|.
\]
\end{assumption}

\begin{proposition}[Kernel Lipschitz bound in memory]\label{prop:app-kernel-lip}
Under Assumptions~\ref{ass:app-rf-lip}--\ref{ass:app-base-lip},
\[
\Wtwo\big(\widehat K_\theta(\cdot\mid y,m,t),\widehat K_\theta(\cdot\mid y,m',t)\big)
\le
\Big[\Big(\frac{e^{L_u}-1}{L_u}\Big)L_m + e^{L_u}L_\nu\Big]\|m-m'\|.
\]
\end{proposition}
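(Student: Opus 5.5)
The plan is to split the kernel discrepancy with the triangle inequality through an intermediate measure that uses the new vector field but the old base law. Write $\Phi:=\Phi^{y,m,t}_\theta$, $\Phi':=\Phi^{y,m',t}_\theta$, $\nu_0:=\nu_0(\cdot\mid y,m,t)$ and $\nu_0':=\nu_0(\cdot\mid y,m',t)$. Then
\[
\Wtwo\big(\Phi_\#\nu_0,\Phi'_\#\nu_0'\big)\le \Wtwo\big(\Phi_\#\nu_0,\Phi'_\#\nu_0\big)+\Wtwo\big(\Phi'_\#\nu_0,\Phi'_\#\nu_0'\big).
\]
The first term measures how much the flow map moves when memory changes. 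The second measures how much the base law moves when pushed through a fixed flow map.

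For the first term, couple $\Phi_\#\nu_0$ and $\Phi'_\#\nu_0$ synchronously, via the law of $(\Phi(U_0),\Phi'(U_0))$ with $U_0\sim\nu_0$. By Lemma~\ref{lem:app-phi-m}, applied pointwise in $u_0$, the cost of this coupling is at most
\[
\Big(\E\big\|\Phi(U_0)-\Phi'(U_0)\big\|^2\Big)^{1/2}\le \frac{e^{L_u}-1}{L_u}\,L_m\,\|m-m'\|.
\]
Lemma~\ref{lem:app-phi-m} gives a bound that is uniform in $u_0$, so no moment condition is needed here beyond $\nu_0\in\mathcal{P}_2$, which makes the pushforwards well defined.

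For the second term, I would show that $\Phi'$ is $e^{L_u}$-Lipschitz on $\mathcal{H}$. Take two solutions of the ODE with the same memory $m'$ and different initial data. Their difference satisfies $\frac{d}{d\tau}\|U_\tau-\widetilde U_\tau\|\le L_u\|U_\tau-\widetilde U_\tau\|$, by the first inequality in Assumption~\ref{ass:app-rf-lip}. Gr\"onwall's inequality on $[0,1]$ then gives the Lipschitz constant $e^{L_u}$.

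With that in hand, Lemma~\ref{lem:app-lip-push} bounds the second term by $e^{L_u}\Wtwo(\nu_0,\nu_0')$. Assumption~\ref{ass:app-base-lip} then bounds this by $e^{L_u}L_\nu\|m-m'\|$. Adding the two estimates gives the stated constant.

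The main obstacle is rigor at the level of the ODE. One needs well-posedness of the flow, and measurability of $\Phi,\Phi'$ so that the pushforwards and the synchronous coupling make sense. One also needs to justify differentiating $\|U_\tau-\widetilde U_\tau\|$, since the norm is not differentiable where the difference vanishes. I would handle the latter by working with the integral form $\|U_\tau-\widetilde U_\tau\|\le\|U_0-\widetilde U_0\|+\int_0^\tau(\cdots)\,ds$ and then applying the integral Gr\"onwall inequality, in the finite-dimensional Galerkin setting the appendix adopts. Global Lipschitz continuity in $u$, uniform in $\tau$, gives existence, uniqueness and continuity of the flow by Picard--Lindel\"of, which settles measurability.
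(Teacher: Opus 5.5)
Your proposal is correct and follows the paper's proof: the same triangle inequality through the intermediate measure $\Phi'_\#\nu_0$, the synchronous coupling with Lemma~\ref{lem:app-phi-m} for the first term, and Lemma~\ref{lem:app-lip-push} with Assumption~\ref{ass:app-base-lip} for the second. The one addition is that you derive the $e^{L_u}$ Lipschitz constant of the flow map via an integral-form Gr\"onwall argument, which the paper uses without proof.
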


\begin{proof}
Triangle inequality:
\[
\Wtwo(\Phi_m\#\nu_m,\Phi_{m'}\#\nu_{m'})
\le
\Wtwo(\Phi_m\#\nu_m,\Phi_{m'}\#\nu_m)
+
\Wtwo(\Phi_{m'}\#\nu_m,\Phi_{m'}\#\nu_{m'}).
\]
Bound the first term by Lemma~\ref{lem:app-phi-m} using a shared $U_0\sim\nu_m$.
Bound the second by Lemma~\ref{lem:app-lip-push} with Lipschitz constant $e^{L_u}$ and Assumption~\ref{ass:app-base-lip}.
\end{proof}

\begin{assumption}[Kernel Lipschitz in memory]\label{ass:lip-mem}
There exists $L_{\mathrm{mem}}\ge0$ such that for all $(y,t)$ and $m,m'$,
\[
\Wtwo\big(\widehat K_\theta(\cdot\mid y,m,t),\widehat K_\theta(\cdot\mid y,m',t)\big)
\le L_{\mathrm{mem}}\|m-m'\|.
\]
\end{assumption}

\paragraph{Remark.}
Assumption~\ref{ass:lip-mem} holds with
$L_{\mathrm{mem}}=\big(\frac{e^{L_u}-1}{L_u}\big)L_m+e^{L_u}L_\nu$ under Proposition~\ref{prop:app-kernel-lip}.
(Compare also Assumption~\ref{ass:lip-mem-fixed}, which is the same condition stated in the local notation of the rollout section.)

\subsection{One-step kernel decomposition}
\label{app:rf-onestep}

Let $K^\star(\cdot\mid y,m,t)$ be the true conditional transition law given $(y,m,t)$.
Define the RF approximation error
\[
\varepsilon_{\mathrm{RF}}(y,m,t)
:=
\Wtwo\big(\widehat K_\theta(\cdot\mid y,m,t),\,K^\star(\cdot\mid y,m,t)\big).
\]
Then for any candidate memory input $\widehat m$,
\begin{equation}\label{eq:app-one-step-kernel}
\Wtwo\big(\widehat K_\theta(\cdot\mid y,\widehat m,t),\,K^\star(\cdot\mid y,m,t)\big)
\le
L_{\mathrm{mem}}\|\widehat m-m\| + \varepsilon_{\mathrm{RF}}(y,m,t),
\end{equation}
by the triangle inequality and Assumption~\ref{ass:lip-mem}.

\paragraph{Interpretation.}
Inequality \eqref{eq:app-one-step-kernel} is the clean separation used throughout the paper:
\emph{memory approximation} ($\|\widehat m-m\|$) and \emph{conditional generation} ($\varepsilon_{\mathrm{RF}}$) enter additively.
This is exactly why it is meaningful to talk about ``memory helps more under OOD schedules'': if $\widehat m$ drifts under compounding
errors, the first term grows, and the sensitivity constant $L_{\mathrm{mem}}$ quantifies how much that drift matters at the level
of predicted laws.

\subsection{Transport budget and ``RF works harder''}
\label{app:rf-budget}

If $(\rho_\tau)_{\tau\in[0,1]}$ satisfies $\partial_\tau\rho_\tau+\nabla\cdot(b_\tau\rho_\tau)=0$ on $\mathcal{H}$, then
\[
\Wtwo^2(\rho_0,\rho_1)\le \int_0^1\int_{\mathcal{H}}\|b_\tau(u)\|^2\,d\rho_\tau(u)\,d\tau
\]
(Benamou--Brenier, Theorem in 4.1.5~\cite{figalli2021invitation}). For deterministic RF, $b_\tau=v_\theta(\tau,\cdot;c)$ and $\rho_\tau=\Law(U_\tau)$.
Combined with Lemma~\ref{lem:W2-reduction}, if low modes are pinned and only the tail is randomized, then the $\Wtwo$ mismatch (hence the
minimal transport energy) is exactly the mismatch of the tail laws.

\paragraph{Connection back to Theorem~\ref{thm:tau-shell-memory}.}
When internal-time denoising is prior-driven in many shells, the burden shifts to providing a good prior mean/structure for the tail.
Memory provides precisely such structure (through a learned summary of the past resolved trajectory), reducing the transport needed to
populate missing frequencies.

\section{Conditional-to-law lifting: from conditional mismatch to $\epsilon_n$}
\label{app:conditional}

This appendix provides the disintegration/coupling step that turns conditional $\Wtwo$ bounds into the law-level one-step defect
$\epsilon_n$ in \eqref{eq:theory-epsn}. Concretely: the rollout bound needs $\epsilon_n$, whereas the model is trained and analyzed
in terms of conditional transition laws (kernels). The bridge between the two is that the rollout law is a \emph{mixture} over the
conditioning variable, and mixtures can be coupled componentwise.

Let $\widehat\mu_n$ be the model law at time $t_n$ and define a conditioning variable
$C_n := \Gamma_n(\widehat u_n)$ in some Polish space $\mathsf{C}$.
Let $\widehat\lambda_n:=\Law(C_n)$ and disintegrate:
\[
\widehat\mu_n(du)=\int \widehat\mu_n(du\mid c)\,\widehat\lambda_n(dc).
\]
Define the true conditional next-step law under the current rollout distribution by
\[
K_n^\star(\cdot\mid c):=\Law\big(S_{\Delta t_n}(\widehat u_n)\,\big|\,C_n=c\big),
\]
so that $(S_{\Delta t_n})_\#\widehat\mu_n=\int K_n^\star(\cdot\mid c)\,d\widehat\lambda_n(c)$.
The model step defines $c\mapsto \widehat K_{\theta,n}(\cdot\mid c)$ so that
$\widehat\mu_{n+1}=\int \widehat K_{\theta,n}(\cdot\mid c)\,d\widehat\lambda_n(c)$.
Therefore, by Lemma~\ref{lem:mixture-common},
\begin{equation}\label{eq:app-eps-rms}
\epsilon_n^2
=
\Wtwo^2\!\big((S_{\Delta t_n})_\#\widehat\mu_n,\widehat\mu_{n+1}\big)
\le
\int \Wtwo^2\big(K_n^\star(\cdot\mid c),\widehat K_{\theta,n}(\cdot\mid c)\big)\,d\widehat\lambda_n(c).
\end{equation}

\paragraph{Summary.}
Equation \eqref{eq:app-eps-rms} is the point where the ``kernel view'' meets the ``rollout view'':
a bound on conditional mismatch (inside the integral) immediately yields a bound on the law-level defect $\epsilon_n$.
Plugging that into Lemma~\ref{lem:recursion} and Theorem~\ref{thm:rollout} yields the final long-horizon control.
In particular, if the conditional mismatch is decomposed as in \eqref{eq:app-one-step-kernel}, then \eqref{eq:app-eps-rms} yields a
rollout-relevant bound that separates memory approximation from conditional generation, averaged over the \emph{rollout} distribution
of conditioning variables.

\section{Model and implementation details}
\label{app:impl}

This appendix summarizes the model components and training protocol used in all experiments, at a level sufficient to reproduce the
implementation. Unless stated otherwise, the memory-conditioned model and its memoryless baseline share the same backbone,
parameter count up to the memory module, and optimization settings. Throughout, $u_n\in\R^{C\times H\times W}$ denotes the state at
physical time step $n$, $\Delta t_n$ the physical time increment, and $\tau\in[0,1]$ the internal rectified-flow time.

\subsection{Rectified-flow parameterization and conditioning interface}
\label{app:impl-backbone}

\paragraph{Rectified-flow ODE and learned vector field.}
For each physical step $n$, the model defines a conditional distribution for $u_{n+1}$ by sampling the rectified-flow ODE
\begin{equation}
\label{eq:rf_ode_app}
\frac{d}{d\tau} u_\tau \;=\; v_\theta\!\big(\tau,u_\tau;\,\tilde u_n, m_n, \Delta t_n\big),
\qquad
u_{\tau=0}\sim \mathcal N(0,I),
\qquad
\widehat u_{n+1}:=u_{\tau=1},
\end{equation}
where $\tilde u_n$ is the conditioning state provided to the model at step $n$ (ground truth under teacher forcing, otherwise a
model prediction), and $m_n$ is the memory context. The memoryless baseline corresponds to $m_n\equiv 0$.

\paragraph{Time embeddings.}
Both the internal time $\tau$ and the physical time step $\Delta t_n$ are embedded with sinusoidal features and injected via
feature-wise modulation (FiLM/AdaLN-style) into each residual block. Concretely, the embeddings are mapped through learned MLPs to
produce scale/shift parameters applied to normalized activations.

\paragraph{Backbone architecture.}
The vector field $v_\theta$ is parameterized by a U-Net-style multi-resolution convolutional network \cite{saharia2022photorealistic}:
\begin{itemize}[leftmargin=*,itemsep=2pt]
\item \textbf{Base width:} $d=64$ with channel multipliers $(1,2,4)$ (feature widths $[64,128,256]$).
\item \textbf{Down/up-sampling:} factor-$2$ pooling/upsampling between levels (three resolutions).
\item \textbf{Attention:} self-attention at the bottleneck only (4 heads, head dimension 32).
\item \textbf{Blocks:} residual blocks with two $3\times 3$ convolutions per block.
\item \textbf{Nonlinearity:} GELU.
\item \textbf{Normalization:} GroupNorm (8 groups) before each convolution.
\item \textbf{Dropout:} $0.1$.
\end{itemize}

\paragraph{Conditioning interface.}
At each resolution level, we concatenate the noised state $u_\tau$ and the conditioning state $\tilde u_n$ channel-wise and feed the
result to the corresponding U-Net block. For CE-RM ($C=4$ channels), this yields an 8-channel input (4 from $u_\tau$ and 4 from
$\tilde u_n$). Conditioning is thus available throughout the multi-resolution path; memory is injected only at the bottleneck (below).

\subsection{Memory module: embedding, temporal aggregation, and injection}
\label{app:impl-memory}

The memory mechanism maintains a compact temporal context $m_n\in\R^{d_m}$ with $d_m=128$.
Operationally, it is built from (i) a per-step embedding $e_n$ extracted from the conditioning input $\tilde u_n$ and (ii) a short
history of past embeddings processed by an S4 state-space layer \cite{gu2021efficiently}.

\paragraph{Embedding.}
At physical step $n$ we compute
\begin{equation}
\label{eq:mem_embed_app}
e_n \;=\; \mathsf{E}_\psi(\tilde u_n,\Delta t_n)\in\R^{d_m},
\end{equation}
where $\mathsf{E}_\psi$ is a convolutional encoder producing bottleneck features, followed by global average pooling and a linear
projection to $\R^{128}$. The scalar $\Delta t_n$ is embedded sinusoidally and concatenated before the final projection.

\paragraph{Temporal aggregation (finite history).}
We maintain a FIFO buffer of the most recent $L_{\max}=16$ embeddings,
\[
\mathcal B_n := (e_{n-\ell+1},\ldots,e_n)\in(\R^{d_m})^\ell, \qquad \ell=\min\{n+1,L_{\max}\}.
\]
To obtain a memory context we apply an S4 layer to the padded/truncated sequence $\mathcal B_n$ and read out the last position:
\begin{equation}
\label{eq:mem_context_app}
m_n \;=\; \mathsf{Readout}\Big(\mathsf{S4}(\mathsf{PadTo}_{L_{\max}}(\mathcal B_n))\Big)\in\R^{d_m}.
\end{equation}
We use a standard S4 configuration (HiPPO-style initialization and bilinear discretization), followed by LayerNorm and a learned
projection back to $\R^{d_m}$. Because $L_{\max}$ is small, recomputing the S4 outputs over the buffer is inexpensive and keeps the
implementation simple and deterministic.

\paragraph{Injection into the vector field.}
Memory is injected at the U-Net bottleneck via gated fusion. Let $h\in\R^{B\times d_h\times H'\times W'}$ denote the bottleneck
features ($d_h=256$) and $m\in\R^{B\times d_m}$ the memory context.
We first broadcast $m$ spatially and project to $d_h$ channels:
\[
m_{\mathrm{sp}} := \mathrm{Repeat}(m[:,:,None,None])\in\R^{B\times d_m\times H'\times W'},
\qquad
m_{\mathrm{proj}} := \mathrm{Conv}_{1\times 1}(m_{\mathrm{sp}})\in\R^{B\times d_h\times H'\times W'}.
\]
We then compute gates $g_h,g_m\in(0,1)^{B\times d_h\times H'\times W'}$ and fuse:
\begin{equation}
\label{eq:gated_fusion_app}
g_h := \sigma(\mathrm{Conv}_{1\times 1}(h)),\qquad
g_m := \sigma(\mathrm{Conv}_{1\times 1}(m_{\mathrm{proj}})),\qquad
\widetilde h := h\odot g_h \;+\; m_{\mathrm{proj}}\odot g_m.
\end{equation}
This exposes the vector field to temporal context at every internal-time evaluation $\tau$ during sampling, while keeping the memory
update frequency aligned with the physical time steps (see below).

\subsection{Base law}
\label{app:impl-baselaw}

The base distribution for flow matching is a fixed isotropic Gaussian on the data tensor space:
\[
\nu_0(\cdot \mid \tilde u_n,m_n,\Delta t_n)\equiv \mathcal N(0,I)\quad \text{on }\R^{C\times H\times W}.
\]
We also experimented with memory-conditioned Gaussian priors, but did not observe consistent improvements on our benchmarks; all
results therefore use the fixed base law.

\subsection{Training objectives and data regimes}
\label{app:impl-training}

Training mixes (i) single-step pairs and (ii) short autoregressive windows. The memoryless baseline uses the same batches and losses,
with memory disabled.

\paragraph{Single-step (A2A) supervision.}
Given a pair $(u_n,u_{n+1})$ with lead time $\Delta t_n$, we draw $u_0\sim\mathcal N(0,I)$ and $\tau\sim \mathrm{Unif}[0,1]$ and form
the linear interpolant
\[
u_\tau := (1-\tau)u_0 + \tau u_{n+1}.
\]
The rectified target velocity is $v^\star := u_{n+1}-u_0$, and we minimize
\begin{equation}
\label{eq:fm_loss_app}
\mathcal L_{\mathrm{FM}}
:=\E\Big[\big\|v_\theta(\tau,u_\tau;\,u_n,m,\Delta t_n) - (u_{n+1}-u_0)\big\|_2^2\Big],
\end{equation}
where $m=0$ in A2A mode (and always in the memoryless baseline).

\paragraph{Sequence supervision and teacher forcing.}
For sequence windows of length $3$--$12$, we unroll autoregressively to expose the model to compounding errors. At each physical step
within the window we choose the conditioning state $\tilde u_n$ by teacher forcing: with probability $p_{\mathrm{TF}}$ we set
$\tilde u_n=u_n$ (ground truth), otherwise $\tilde u_n=\widehat u_n$ (model prediction). We then compute $(m_n,e_n)$ from $\tilde u_n$
and sample $\widehat u_{n+1}$ with a cheap internal-time integrator (Algorithm~\ref{alg:sequence-training}).
The per-step loss is still a flow-matching loss against the \emph{ground truth} next state $u_{n+1}$.

\paragraph{Degraded-conditioning (evaluation-time) variant.}
In a designated set of experiments, we evaluate robustness under degraded conditioning by replacing the \emph{conditioning input}
$\tilde u_n$ with a downsample--upsample version
\[
\tilde u_n^{\mathrm{deg}} := \mathrm{Upsample}\!\big(\mathrm{AvgPool}_{2\times 2}(\tilde u_n)\big),
\]
while keeping targets $u_{n+1}$ at full resolution. We do \emph{not} retrain a separate model for this variant unless explicitly
stated; it is a test-time modification of the conditioning channel.

\subsection{Optimization and sampling}
\label{app:impl-optim}

\paragraph{Optimization.}
We use Adam with gradient clipping ($\|\nabla\|_2\le 1$) and a cosine-annealing learning-rate schedule with warm restarts.
Sequence batches backpropagate through the autoregressive chain (truncated BPTT). To prevent unstable long-range gradients through
memory, we store \emph{detached} embeddings in the FIFO buffer during sequence training (Algorithm~\ref{alg:sequence-training}).

\paragraph{Sampling and update timing.}
At evaluation, sampling integrates the rectified-flow ODE \eqref{eq:rf_ode_app} from $\tau=0$ to $\tau=1$ using either Euler or Heun
(RK2) with $K$ internal-time steps per physical step.
The memory buffer is updated \emph{once per physical step}, using the embedding extracted from the (possibly degraded) conditioning
state $\tilde u_n$; it is \emph{not} updated on internal-time substeps.

\subsection{Hyperparameters}
\label{app:impl-hparams}

\begin{table}[t]
\centering
\caption{Model and training hyperparameters.}
\label{tab:app-hparams}
\small
\setlength{\tabcolsep}{4pt}
\renewcommand{\arraystretch}{0.95}
\begin{tabular}{@{}ll@{}}
\toprule
\textbf{Parameter} & \textbf{Value} \\
\midrule
\textit{Architecture} \\
Backbone & U-Net with residual blocks \\
Base dimension & $d=64$ \\
Channel multipliers & $(1,2,4)$ \\
Attention & bottleneck only (4 heads, head dim 32) \\
Dropout & $0.1$ \\
Activation / Norm & GELU / GroupNorm (8 groups) \\
\midrule
\textit{Memory} \\
Memory dimension & $d_m=128$ \\
S4 state dimension & $n=64$ \\
Max history length & $L_{\max}=16$ \\
Injection & gated fusion at bottleneck \\
\midrule
\textit{Training and sampling} \\
Teacher forcing probability & $p_{\mathrm{TF}}=0.8$ \\
Sequence length range & $3$--$12$ steps \\
Base law & $\mathcal{N}(0,I)$ \\
Internal-time integrator & Euler or Heun (RK2) \\
\bottomrule
\end{tabular}
\end{table}
\FloatBarrier

\section{Experimental details and additional results}
\label{app:exp}

This appendix records dataset construction, metrics, and extended tables supporting the main paper claims.

\subsection{Datasets and data generation}
\label{app:exp-data}

We evaluate on two 2D compressible Euler benchmarks from the \textsc{PDEgym} collection on the Hugging Face Hub
\cite{pdegym_hf}. Concretely, we use \texttt{CE-CRP} (multiple curved Riemann problems) and \texttt{CE-RM}
(Richtmyer--Meshkov), both simulated on the unit square at $128\times 128$ resolution and stored as $21$ uniformly spaced
snapshots in time \cite{pdegym_hf}. Each trajectory provides five physical fields; \texttt{CE-CRP} includes
(density, two velocity components, pressure, energy), while \texttt{CE-RM} includes (density, two velocity components,
pressure, passive tracer) \cite{pdegym_hf}. The datasets share the same underlying numerical pipeline (solver stack and
discretization choices), but differ in their randomized initial-condition families and thus in the resulting multiscale
structure (shock/contact interactions for \texttt{CE-CRP} versus instability-driven mixing with a tracer for \texttt{CE-RM}).
The \textsc{PDEgym} dataset cards specify the simulation parameters and train/val/test splits (e.g.\ $9640/120/240$ for
\texttt{CE-CRP} and $1030/100/130$ for \texttt{CE-RM}) \cite{pdegym_hf}, while full details of the solver stack and dataset
generation are described in Appendix~\textbf{B} of \cite{herde2024poseidon}.

\paragraph{CE-RM (Richtmyer--Meshkov mixing).}
Initial conditions contain perturbed interfaces subjected to impulsive acceleration, producing shock--interface interactions and
rapid fine-scale mixing.

\paragraph{CRP2D (four-quadrant Riemann problems).}
Initial conditions are piecewise-constant four-quadrant states producing interacting shocks, contacts, and rarefactions.

\subsection{Algorithms: training and inference}
\label{app:impl-training-detailed}

We now give explicit pseudocode for the components described above. The intent is to pin down (i) how memory is constructed from a
finite embedding history, (ii) how teacher forcing is applied, and (iii) the precise timing of memory updates (once per physical
step).

\subsubsection{Memory buffer (FIFO) and S4-based context} \label{app:memory-algorithms}

We store the last $L_{\max}$ embeddings in a FIFO buffer. In sequence training, embeddings are detached before insertion to prevent
unbounded gradient paths through the buffer.

\begin{algorithm}[h]
\caption{EmbeddingBuffer: FIFO buffer of the last $L_{\max}$ embeddings}
\label{alg:memory-manager}
\begin{algorithmic}[1]
\STATE \textbf{Initialize:} buffer $\mathcal B\gets [\,]$, maximum length $L_{\max}$
\STATE
\STATE \textbf{procedure} \textsc{Reset}()
\STATE \quad $\mathcal B\gets[\,]$
\STATE \textbf{end procedure}
\STATE
\STATE \textbf{procedure} \textsc{Push}($e\in\R^{d_m}$)
\STATE \quad Append $e$ to the end of $\mathcal B$
\IF{$|\mathcal B| > L_{\max}$}
    \STATE \quad Remove the first (oldest) element of $\mathcal B$
\ENDIF
\STATE \textbf{end procedure}
\STATE
\STATE \textbf{procedure} \textsc{Get}()
\RETURN $\mathcal B$ \COMMENT{a list of length $\ell\le L_{\max}$}
\STATE \textbf{end procedure}
\end{algorithmic}
\end{algorithm}

\begin{algorithm}[h]
\caption{Memory context from conditioning state (embedding + S4 + readout)}
\label{alg:s4-memory}
\begin{algorithmic}[1]
\STATE \textbf{Input:} conditioning state $\tilde u_n\in\R^{C\times H\times W}$, step size $\Delta t_n$, buffer $\mathcal B$ (list of past embeddings)
\STATE \textbf{Output:} memory context $m_n\in\R^{d_m}$, current embedding $e_n\in\R^{d_m}$

\STATE $e_n \gets \mathsf{E}_\psi(\tilde u_n,\Delta t_n)$ \COMMENT{Eq.~\eqref{eq:mem_embed_app}}
\STATE $\mathcal B' \gets \mathcal B$ with $e_n$ appended at the end
\STATE Truncate $\mathcal B'$ to its last $L_{\max}$ elements if needed
\STATE $S \gets \mathsf{PadTo}_{L_{\max}}(\mathcal B')$ \COMMENT{$S\in\R^{L_{\max}\times d_m}$, zero-pad on the left}
\STATE $Y \gets \mathsf{S4}(S)$ \COMMENT{$Y\in\R^{L_{\max}\times d_m}$}
\STATE $m_n \gets \mathsf{Readout}(Y[L_{\max},:])$ \COMMENT{LayerNorm + Linear to $\R^{d_m}$}
\RETURN $m_n, e_n$
\end{algorithmic}
\end{algorithm}

\paragraph{Remark on indexing.}
In Algorithm~\ref{alg:s4-memory}, $m_n$ is computed from the history \emph{including} the current embedding $e_n$ extracted from
$\tilde u_n$. This is the most natural choice in our setting: memory summarizes ``everything known at step $n$'' and is then used to
sample the next state.

\subsubsection{Gated bottleneck fusion (implementation-level)}
\label{app:gated-fusion}

\begin{algorithm}[h]
\caption{Gated fusion at the bottleneck}
\label{alg:gated-fusion}
\begin{algorithmic}[1]
\STATE \textbf{Input:} bottleneck features $h\in\R^{B\times d_h\times H'\times W'}$ ($d_h=256$), memory $m\in\R^{B\times d_m}$ ($d_m=128$)
\STATE \textbf{Output:} fused features $\widetilde h\in\R^{B\times d_h\times H'\times W'}$

\STATE $m_{\mathrm{sp}} \gets \mathrm{Repeat}(m[:,:,None,None])$ to shape $B\times d_m\times H'\times W'$
\STATE $m_{\mathrm{proj}} \gets \mathrm{Conv}_{1\times 1}(m_{\mathrm{sp}})$ \COMMENT{$d_m\to d_h$}
\STATE $g_h \gets \sigma(\mathrm{Conv}_{1\times 1}(h))$
\STATE $g_m \gets \sigma(\mathrm{Conv}_{1\times 1}(m_{\mathrm{proj}}))$
\STATE $\widetilde h \gets h\odot g_h + m_{\mathrm{proj}}\odot g_m$
\RETURN $\widetilde h$
\end{algorithmic}
\end{algorithm}

\textbf{Interpretation.}
The gates allow the network to modulate how strongly temporal context influences the bottleneck representation. When memory is
uninformative, the model can down-weight $m_{\mathrm{proj}}$ via $g_m$ and revert to a purely spatial representation.

\subsubsection{Training: mixing single-step and sequence batches}
\label{app:mixed-training}

Training alternates between (i) single-step A2A batches (pure flow matching, no memory) and (ii) sequence batches (teacher forcing +
memory). This mixture prevents the sequence regime from dominating optimization while still exposing the model to compounding-error
scenarios.

\begin{algorithm}[h]
\caption{Mixed training loop (A2A + sequences)}
\label{alg:mixed-training}
\begin{algorithmic}[1]
\STATE \textbf{Hyperparameters:} $p_{\mathrm{seq}}$ (prob.\ of sequence batch), $p_{\mathrm{TF}}$ (teacher forcing prob.)
\FOR{$\text{iter}=1,\ldots$}
    \IF{$\text{Bernoulli}(p_{\mathrm{seq}})=1$}
        \STATE Sample a batch of short windows $\{(u_0,\ldots,u_L;\Delta t_0,\ldots,\Delta t_{L-1})\}$
        \STATE $\mathcal L \gets \textsc{ForwardSequence}(\text{batch},p_{\mathrm{TF}})$ \COMMENT{Alg.~\ref{alg:sequence-training}}
    \ELSE
        \STATE Sample a batch of pairs $\{(u_n,u_{n+1},\Delta t_n)\}$
        \STATE $\mathcal L \gets \textsc{ForwardA2A}(\text{batch})$ \COMMENT{Alg.~\ref{alg:a2a-training}}
    \ENDIF
    \STATE $\mathcal L.\mathrm{backward}()$
    \STATE Clip gradient norm to $1.0$
    \STATE $\mathrm{optimizer.step}()$;\quad $\mathrm{optimizer.zero\_grad}()$
    \STATE $\mathrm{scheduler.step}()$ \COMMENT{cosine annealing with restarts}
\ENDFOR
\end{algorithmic}
\end{algorithm}

\subsubsection{A2A (single-step) training}
\label{app:a2a-training}

\begin{algorithm}[h]
\caption{\textsc{ForwardA2A}: single-step flow matching}
\label{alg:a2a-training}
\begin{algorithmic}[1]
\STATE \textbf{Input:} batch $\{(u_n^{(i)},u_{n+1}^{(i)},\Delta t_n^{(i)})\}_{i=1}^B$
\STATE \textbf{Output:} loss $\mathcal L_{\mathrm{A2A}}$

\FOR{$i=1,\ldots,B$}
    \STATE Sample $u_0^{(i)}\sim\mathcal N(0,I)$ and $\tau^{(i)}\sim\mathrm{Unif}[0,1]$
    \STATE $u_\tau^{(i)} \gets (1-\tau^{(i)})u_0^{(i)} + \tau^{(i)}u_{n+1}^{(i)}$
    \STATE $v_{\mathrm{pred}}^{(i)} \gets v_\theta(\tau^{(i)},u_\tau^{(i)};\,u_n^{(i)},m=0,\Delta t_n^{(i)})$
    \STATE $v_{\mathrm{tgt}}^{(i)} \gets u_{n+1}^{(i)} - u_0^{(i)}$
\ENDFOR
\STATE $\mathcal L_{\mathrm{A2A}} \gets \frac{1}{B}\sum_{i=1}^B \|v_{\mathrm{pred}}^{(i)}-v_{\mathrm{tgt}}^{(i)}\|_2^2$
\RETURN $\mathcal L_{\mathrm{A2A}}$
\end{algorithmic}
\end{algorithm}

\paragraph{Sampling of pairs.}
If a trajectory has length $T$, it provides $\binom{T}{2}$ valid pairs. We sample pairs uniformly over trajectories and indices,
which yields broad coverage of lead times in the admissible $\Delta t$ range.

\subsubsection{Sequence training with memory and teacher forcing}
\label{app:sequence-training}

In sequence mode, we must (a) compute a memory context from the conditioning state and (b) produce a predicted next state to feed
forward when teacher forcing is off. For efficiency, we use a \emph{cheap} internal-time sampler during training: we integrate
\eqref{eq:rf_ode_app} with a small number $K_{\mathrm{train}}$ of internal steps (often $K_{\mathrm{train}}=1$ with Euler). The loss
at each step remains a flow-matching loss against the true next state $u_{n+1}$.

\begin{algorithm}[h]
\caption{\textsc{ForwardSequence}: autoregressive training with memory (teacher forcing)}
\label{alg:sequence-training}
\begin{algorithmic}[1]
\STATE \textbf{Input:} batch of windows $\{(u_0^{(i)},\ldots,u_{L_i}^{(i)};\Delta t_0^{(i)},\ldots,\Delta t_{L_i-1}^{(i)})\}_{i=1}^B$
\STATE \textbf{Input:} teacher forcing probability $p_{\mathrm{TF}}$, internal steps $K_{\mathrm{train}}$
\STATE \textbf{Output:} sequence loss $\mathcal L_{\mathrm{seq}}$

\FOR{$i=1,\ldots,B$}
    \STATE Initialize buffer $\mathcal B^{(i)}\gets[\,]$ \COMMENT{Alg.~\ref{alg:memory-manager}}
    \STATE Set $\widehat u_0^{(i)}\gets u_0^{(i)}$
\ENDFOR
\STATE $\mathcal S \gets [\,]$ \COMMENT{store per-step losses}

\FOR{$n=0,\ldots,\max_i(L_i)-1$}
    \FOR{each $i$ with $n<L_i$}
        \STATE \textbf{Teacher forcing:}
        \IF{$n=0$ \textbf{or} $\mathrm{Bernoulli}(p_{\mathrm{TF}})=1$}
            \STATE $\tilde u_n^{(i)} \gets u_n^{(i)}$
        \ELSE
            \STATE $\tilde u_n^{(i)} \gets \widehat u_n^{(i)}$
        \ENDIF

        \STATE \textbf{Memory:} $(m_n^{(i)},e_n^{(i)}) \gets \textsc{MemoryFromState}(\tilde u_n^{(i)},\Delta t_n^{(i)},\mathcal B^{(i)})$
        \STATE \hspace{2.0em}\COMMENT{Alg.~\ref{alg:s4-memory}}

        \STATE \textbf{Training-time sampler (cheap):}
        \STATE Sample $z^{(i)}\sim\mathcal N(0,I)$ and set $u^{(i)} \gets z^{(i)}$
        \STATE Set grid $\tau_k := k/K_{\mathrm{train}}$ for $k=0,\ldots,K_{\mathrm{train}}$
        \FOR{$k=0,\ldots,K_{\mathrm{train}}-1$}
            \STATE $\Delta\tau \gets \tau_{k+1}-\tau_k$
            \STATE $v \gets v_\theta(\tau_k,u^{(i)};\,\tilde u_n^{(i)},m_n^{(i)},\Delta t_n^{(i)})$
            \STATE $u^{(i)} \gets u^{(i)} + \Delta\tau\, v$ \COMMENT{Euler step}
        \ENDFOR
        \STATE $\widehat u_{n+1}^{(i)} \gets u^{(i)}$

        \STATE \textbf{Per-step flow-matching loss:}
        \STATE Sample $\tau\sim\mathrm{Unif}[0,1]$ and form $u_\tau = (1-\tau)z^{(i)} + \tau u_{n+1}^{(i)}$
        \STATE $v_{\mathrm{pred}} \gets v_\theta(\tau,u_\tau;\,\tilde u_n^{(i)},m_n^{(i)},\Delta t_n^{(i)})$
        \STATE $v_{\mathrm{tgt}} \gets u_{n+1}^{(i)} - z^{(i)}$
        \STATE Append $\|v_{\mathrm{pred}}-v_{\mathrm{tgt}}\|_2^2$ to $\mathcal S$

        \STATE \textbf{Update buffer (detach in training):}
        \STATE $\mathcal B^{(i)}.\mathrm{Push}(\mathrm{detach}(e_n^{(i)}))$
    \ENDFOR
\ENDFOR

\STATE $\mathcal L_{\mathrm{seq}} \gets \frac{1}{|\mathcal S|}\sum_{\ell\in\mathcal S}\ell$
\RETURN $\mathcal L_{\mathrm{seq}}$
\end{algorithmic}
\end{algorithm}

\textbf{Key design choices.}
\begin{itemize}[leftmargin=*,itemsep=2pt]
\item \textbf{Truncated BPTT through predictions:} gradients flow through the autoregressive chain
$\widehat u_0\to \widehat u_1\to\cdots$ within each training window, so the model is directly optimized for short-horizon stability.
\item \textbf{Detached buffer:} we detach $e_n$ before inserting into the buffer, preventing gradients from traversing an arbitrarily
long history through the memory sequence. This stabilizes optimization while preserving the effect of memory at the forward level.
\item \textbf{Cheap sampler in training:} the internal-time integrator in sequence mode uses $K_{\mathrm{train}}$ steps (often 1).
Evaluation uses a more accurate integrator with more steps (Algorithm~\ref{alg:rollout-inference}).
\end{itemize}

\subsubsection{Inference: autoregressive rollout with memory}
\label{app:rollout}

At test time, we run full autoregressive rollouts. The conditioning state $\tilde u_n$ is always the model's current prediction
(unless explicitly evaluating teacher-forced diagnostics). Memory is updated \emph{once per physical step} using the conditioning
input to that step.

\begin{algorithm}[h]
\caption{Autoregressive rollout with memory}
\label{alg:rollout-inference}
\begin{algorithmic}[1]
\STATE \textbf{Input:} initial state $u_0$, steps $N$, step sizes $\{\Delta t_n\}_{n=0}^{N-1}$
\STATE \textbf{Input:} internal-time steps $K$, integrator $\in\{\mathrm{Euler},\mathrm{Heun}\}$, flag $\mathrm{degcond}\in\{0,1\}$
\STATE \textbf{Output:} trajectory $(u_0,\widehat u_1,\ldots,\widehat u_N)$

\STATE Initialize buffer $\mathcal B\gets[\,]$ \COMMENT{Alg.~\ref{alg:memory-manager}}
\STATE Set $\widehat u_0\gets u_0$ and $\mathrm{traj}\gets[\widehat u_0]$

\FOR{$n=0,\ldots,N-1$}
    \STATE $\tilde u_n \gets \widehat u_n$
    \IF{$\mathrm{degcond}=1$}
        \STATE $\tilde u_n \gets \mathrm{Upsample}(\mathrm{AvgPool}_{2\times2}(\tilde u_n))$
    \ENDIF

    \STATE $(m_n,e_n)\gets \textsc{MemoryFromState}(\tilde u_n,\Delta t_n,\mathcal B)$ \COMMENT{Alg.~\ref{alg:s4-memory}}

    \STATE Sample $u \sim \mathcal N(0,I)$ and set $u^{(0)}\gets u$
    \STATE Set $\tau_k := k/K$ for $k=0,\ldots,K$
    \FOR{$k=0,\ldots,K-1$}
        \STATE $\Delta\tau\gets \tau_{k+1}-\tau_k$
        \IF{integrator $=$ Heun}
            \STATE $v_0 \gets v_\theta(\tau_k,u^{(k)};\tilde u_n,m_n,\Delta t_n)$
            \STATE $u^{\mathrm{pred}} \gets u^{(k)} + \Delta\tau\, v_0$
            \STATE $v_1 \gets v_\theta(\tau_{k+1},u^{\mathrm{pred}};\tilde u_n,m_n,\Delta t_n)$
            \STATE $u^{(k+1)} \gets u^{(k)} + \frac{\Delta\tau}{2}(v_0+v_1)$
        \ELSE
            \STATE $v \gets v_\theta(\tau_k,u^{(k)};\tilde u_n,m_n,\Delta t_n)$
            \STATE $u^{(k+1)} \gets u^{(k)} + \Delta\tau\, v$
        \ENDIF
    \ENDFOR
    \STATE $\widehat u_{n+1}\gets u^{(K)}$
    \STATE Append $\widehat u_{n+1}$ to $\mathrm{traj}$

    \STATE $\mathcal B.\mathrm{Push}(e_n)$ \COMMENT{no detachment needed at inference}
\ENDFOR
\RETURN $\mathrm{traj}$
\end{algorithmic}
\end{algorithm}

\textbf{Critical timing detail.}
The buffer update occurs once per physical step using the embedding extracted from the conditioning state $\tilde u_n$ (possibly
degraded), and not from internal-time states $u^{(k)}$. This ensures memory tracks physical evolution and keeps cost linear in the
rollout length (independent of $K$).

\textbf{Training vs.\ inference (summary).}
\begin{center}
\small
\begin{tabular}{lcc}
\toprule
\textbf{Aspect} & \textbf{Training (sequences)} & \textbf{Inference (rollout)} \\
\midrule
Internal-time steps & $K_{\mathrm{train}}$ (small; often 1) & $K$ (e.g.\ 2--8) \\
Memory update & once per physical step & once per physical step \\
Conditioning & teacher forcing w.p.\ $p_{\mathrm{TF}}$ & always autoregressive \\
Gradients & truncated BPTT & none \\
Window/rollout length & 3--12 & arbitrary (10--20+) \\
\bottomrule
\end{tabular}
\end{center}

\subsection{Evaluation metrics}
\label{app:exp-metrics}

\paragraph{Relative $\ell_2$ error.}
We report
\[
\mathrm{rel}\text{-}\ell_2(\widehat u,u)
=\frac{\|\widehat u-u\|_{L^2_x}}{\|u\|_{L^2_x}+\varepsilon},
\qquad \varepsilon=10^{-8},
\]
computed over all channels (and optionally per-channel).

\paragraph{Spectral error decomposition.}
We compute the 2D FFT per variable, radially bin frequencies, and report relative errors in low/mid/high bands. This localizes
whether improvements correspond primarily to large-scale organization or to fine-scale recovery.

\subsection{Baselines and ablations}
\label{app:exp-baselines}

\paragraph{Primary baseline.}
The memoryless baseline uses the identical backbone and training protocol but disables memory ($m_n\equiv 0$). This isolates the
effect of temporal memory under autoregressive deployment.

\paragraph{Ablations (qualitative summary).}
We tested variants that condition only the base law or only the vector field; in this setting, injecting memory into the vector field
accounts for most of the observed gains.

\subsection{Additional tables}
\label{app:exp-tables}

\paragraph{How to read these tables.}
The single-step diagnostics in \Cref{tab:single-step-baseline} verify that memory does not materially change supervised one-step
accuracy, while the spectral analysis in \Cref{tab:spectral-analysis-compact} localizes how rollout improvements distribute across
frequency bands.



\begin{table}[t]
\centering
\caption{%
\textbf{Spectral error by frequency band (dense-20, CE-RM).}
Each entry reports ``error (Benefit)'' where \textbf{Benefit} is the \% reduction vs.\ the no-memory baseline.}
\label{tab:spectral-analysis-compact}

\scriptsize
\setlength{\tabcolsep}{2.6pt}
\renewcommand{\arraystretch}{0.92}
\begin{tabular}{@{}lccc@{}}
\toprule
\textbf{Var.} & \textbf{Low} & \textbf{Mid} & \textbf{High} \\
\midrule
\multicolumn{4}{@{}l}{\textit{CE-RM (regular)}} \\
$\rho$ & $0.78\ (\mathbf{+29.5\%})$ & $2.10\ (\mathbf{+8.7\%})$ & $9.83\ (\mathbf{+7.3\%})$ \\
$v_x$  & $0.58\ (\mathbf{+40.3\%})$ & $3.31\ (\mathbf{+6.7\%})$ & $29.87\ (-4.2\%)$ \\
$v_y$  & $0.58\ (\mathbf{+42.6\%})$ & $3.46\ (\mathbf{+8.4\%})$ & $30.83\ (+0.2\%)$ \\
$E$    & $0.64\ (\mathbf{+61.4\%})$ & $2.50\ (\mathbf{+9.6\%})$ & $18.69\ (-11.7\%)$ \\
\addlinespace[0.25em]
\multicolumn{4}{@{}l}{\textit{CE-RM (degraded-conditioning)}} \\
$\rho$ & $0.75\ (\mathbf{+30.9\%})$ & $3.68\ (\mathbf{+4.7\%})$ & $8.19\ (\mathbf{+6.6\%})$ \\
$v_x$  & $0.55\ (\mathbf{+37.8\%})$ & $4.07\ (\mathbf{+2.8\%})$ & $20.01\ (-4.4\%)$ \\
$v_y$  & $0.55\ (\mathbf{+41.1\%})$ & $3.81\ (\mathbf{+7.6\%})$ & $18.71\ (+0.9\%)$ \\
$E$    & $0.64\ (\mathbf{+49.4\%})$ & $2.59\ (\mathbf{+22.6\%})$ & $9.20\ (\mathbf{+3.4\%})$ \\
\bottomrule
\end{tabular}
\end{table}


\FloatBarrier


\section{From flow-matching to invariant-measure error control via one-step kernels}
\label{app:mmd_vs_flow}

This appendix records a simple but useful implication chain:
\[
\text{small flow-matching (FM) regression risk}
\ \Longrightarrow\
\text{small one-step conditional law error}
\ \Longrightarrow\
\text{small multi-step / invariant-measure discrepancy (under mixing).}
\]
The point is conceptual as well as technical: invariant-measure regularizers (e.g.\ MMD penalties on long rollouts)
target a fixed point of the Markov operator, whereas conditional generative transition models (rectified flows, diffusion forecasters)
target the operator itself. Under standard mixing assumptions, controlling the operator in a suitable weak metric already controls
the induced invariant measure in the same topology.

\paragraph{Notation.}
We write the conditioning context as
\[
c := (u_n,m,\Delta t)\in\mathcal C,
\]
where $u_n$ is the resolved current state, $m$ is the memory state, and $\Delta t$ is the physical step.
The next-step \emph{true} conditional law is a Markov kernel $K^\star:\mathcal C\times\mathcal B(\mathcal U)\to[0,1]$,
with $K^\star(c,\cdot)\in\mathcal P(\mathcal U)$.
The learned model defines $K_\theta(c,\cdot)$ analogously.

To keep this appendix self-contained, we use the following safe macros (they do not overwrite existing ones).
\providecommand{\U}{\mathcal U}
\providecommand{\Ccal}{\mathcal C}
\providecommand{\B}{\mathcal B}
\providecommand{\Pcal}{\mathcal P}
\providecommand{\R}{\mathbb R}
\providecommand{\E}{\mathbb E}
\providecommand{\Law}{\mathcal L}
\providecommand{\MMD}{\mathrm{MMD}}
\providecommand{\TV}{\mathrm{TV}}
\providecommand{\KL}{\mathrm{KL}}
\providecommand{\Wone}{W_{1}}
\providecommand{\Wtwo}{W_{2}}

\subsection{Markov kernels, operators, and IPM discrepancies}
\label{app:K:kernels-operators}

We work on a measurable state space $(\U,\B(\U))$ and denote by $\Pcal(\U)$ the set of probability measures on it.
A (conditional) Markov kernel is a map $K:\Ccal\times\B(\U)\to[0,1]$ such that $A\mapsto K(c,A)$ is a probability measure
for each $c$, and $c\mapsto K(c,A)$ is measurable for each $A$.

\paragraph{Induced Markov operator.}
Given a distribution $\pi$ on the conditioning contexts $c$ (e.g.\ the distribution of contexts encountered along a rollout),
one obtains an operator on measures on $\U$ by integrating out the context.
To avoid notational clutter, we state the operator form for a single ``current state'' variable $u\in\U$;
the context-dependent case is identical after enlarging the state space to include $(m,\Delta t)$.

Concretely, let $K(u,\cdot)$ be a kernel on $\U$ and define $P:\Pcal(\U)\to\Pcal(\U)$ by
\begin{equation}\label{eq:K:markov-operator}
(P\mu)(A) := \int_{\U} K(u,A)\,\mu(du),\qquad A\in\B(\U).
\end{equation}
The learned model induces $P_\theta$ from $K_\theta$ by the same formula.

\paragraph{Integral probability metrics (IPMs).}
Fix a class $\mathcal G$ of bounded measurable functions $g:\U\to\R$.
Define the IPM
\begin{equation}\label{eq:K:ipm}
D_{\mathcal G}(\mu,\nu)
:= \sup_{g\in\mathcal G}\left|\int_{\U} g\,d\mu-\int_{\U} g\,d\nu\right|.
\end{equation}
The canonical example in this appendix is MMD, where $\mathcal G$ is the RKHS unit ball.

\paragraph{One-step kernel discrepancy.}
Define the worst-case conditional discrepancy (the ``one-step error'')
\begin{equation}\label{eq:K:epsG}
\varepsilon_{\mathcal G} := \sup_{u\in\U} D_{\mathcal G}\bigl(K(u,\cdot),K_\theta(u,\cdot)\bigr).
\end{equation}
(If the kernel depends on context $c$, replace $\sup_{u\in\U}$ with $\sup_{c\in\Ccal}$.)

\begin{lemma}[Operator discrepancy is controlled by the one-step discrepancy]
\label{lem:K:operator-discrepancy}
Assume $u\mapsto D_{\mathcal G}(K(u,\cdot),K_\theta(u,\cdot))$ is measurable.\footnote{This holds under standard separability
assumptions on $\mathcal G$; for MMD it is immediate when the kernel is measurable and bounded.}
Then for every $\mu\in\Pcal(\U)$,
\begin{equation}\label{eq:K:operator-discrepancy}
D_{\mathcal G}(P\mu,P_\theta\mu)
\le \int_{\U} D_{\mathcal G}\bigl(K(u,\cdot),K_\theta(u,\cdot)\bigr)\,\mu(du)
\le \varepsilon_{\mathcal G}.
\end{equation}
\end{lemma}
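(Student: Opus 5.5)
The plan is to unfold the definitions, apply Fubini, and pull the supremum over $\mathcal G$ outside the $\mu$-integral; no earlier result beyond the definitions \eqref{eq:K:markov-operator}--\eqref{eq:K:epsG} is needed.

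\emph{Step 1 (integrals against $P\mu$).} Fix $g\in\mathcal G$; it is bounded and measurable. I will check that
\[
\int_{\U} g\,d(P\mu)=\int_{\U}\Big(\int_{\U} g(v)\,K(u,dv)\Big)\mu(du),
\]
and the same identity with $K_\theta$ and $P_\theta$. For indicators $g=\mathbf 1_A$ this is exactly \eqref{eq:K:markov-operator}. Linearity extends it to simple functions. Monotone convergence, applied to the positive and negative parts of $g$, then gives it for bounded measurable $g$. Measurability of $u\mapsto\int g\,dK(u,\cdot)$ comes out of the same approximation argument.

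\emph{Step 2 (pointwise bound).} Subtracting the two identities,
\[
\Big|\int g\,d(P\mu)-\int g\,d(P_\theta\mu)\Big|
\le\int_{\U}\Big|\int g\,dK(u,\cdot)-\int g\,dK_\theta(u,\cdot)\Big|\,\mu(du).
\]
For each fixed $u$, the integrand is at most $D_{\mathcal G}(K(u,\cdot),K_\theta(u,\cdot))$, by the definition \eqref{eq:K:ipm}. By the measurability hypothesis of the lemma, this function can be integrated against $\mu$. Hence, for every $g\in\mathcal G$, the left side is bounded by $\int D_{\mathcal G}(K(u,\cdot),K_\theta(u,\cdot))\,\mu(du)$.

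\emph{Step 3 (supremum and final bound).} The bound from Step 2 does not depend on $g$, so I can take the supremum over $g\in\mathcal G$ on the left. This gives the first inequality in \eqref{eq:K:operator-discrepancy}. The second inequality follows by bounding the integrand pointwise by its supremum $\varepsilon_{\mathcal G}$ from \eqref{eq:K:epsG} and using that $\mu$ is a probability measure.

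\emph{Main obstacle.} The only delicate point is measurability, which is why the order of steps matters. A supremum over an uncountable class $\mathcal G$ need not be measurable, so I must not exchange the sup with the integral. Instead, the argument fixes $g$ first, integrates, and only afterwards takes the sup. Along the way, measurability of the pointwise IPM is used only through the lemma's hypothesis, which is stated exactly for this purpose.
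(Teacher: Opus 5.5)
Your proposal is correct and follows the paper's proof. Both arguments write $\int g\,d(P\mu)-\int g\,d(P_\theta\mu)$ as a $\mu$-integral of the conditional differences, bound it by the integral of absolute values and then pointwise by $D_{\mathcal G}(K(u,\cdot),K_\theta(u,\cdot))$, take the supremum over $g$ only after integrating, and finish with $\varepsilon_{\mathcal G}$. Your Step 1 (extending the operator identity from indicators to bounded measurable $g$) and your measurability bookkeeping make explicit what the paper's short proof takes for granted.
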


\begin{proof}
For $g\in\mathcal G$,
\[
\int g\,d(P\mu)-\int g\,d(P_\theta\mu)
=\int_{\U}\Big(\int g\,dK(u,\cdot)-\int g\,dK_\theta(u,\cdot)\Big)\,\mu(du).
\]
Taking absolute values, bounding by the integral of absolute values, and then taking $\sup_{g\in\mathcal G}$ yields the first inequality;
the second is immediate from the definition of $\varepsilon_{\mathcal G}$.
\end{proof}

\subsection{Mixing turns one-step error into multi-step and invariant-measure error}
\label{app:K:mixing}

To keep the bounds explicit, we present a clean contractive (Doeblin-like) regime.
The same message persists under weaker ergodicity hypotheses (Harris recurrence, geometric ergodicity, spectral gaps),
at the expense of more technical constants.

\begin{assumption}[Contractivity in an IPM]
\label{ass:K:contractive}
There exists $\rho\in[0,1)$ such that for all $\mu,\nu\in\Pcal(\U)$,
\begin{equation}\label{eq:K:contractive}
D_{\mathcal G}(P\mu,P\nu)\le \rho\,D_{\mathcal G}(\mu,\nu).
\end{equation}
\end{assumption}

\begin{theorem}[Finite-horizon perturbation bound]
\label{thm:K:finite-horizon}
Under Assumption~\ref{ass:K:contractive} and the measurability hypothesis of Lemma~\ref{lem:K:operator-discrepancy},
for any initial law $\mu_0\in\Pcal(\U)$ and all integers $n\ge 1$,
\begin{equation}\label{eq:K:finite-horizon}
D_{\mathcal G}\bigl(P^n\mu_0,\ P_\theta^n\mu_0\bigr)
\le \frac{1-\rho^n}{1-\rho}\,\varepsilon_{\mathcal G}.
\end{equation}
\end{theorem}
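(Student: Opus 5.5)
The proof is a telescoping (discrete Duhamel) argument that combines the contractivity of Assumption~\ref{ass:K:contractive} with the one-step operator bound of Lemma~\ref{lem:K:operator-discrepancy}. Set $e_n := D_{\mathcal G}(P^n\mu_0, P_\theta^n\mu_0)$, so that $e_0=0$. The goal is the linear recursion
\[
e_{n+1}\le \rho\, e_n + \varepsilon_{\mathcal G},
\]
which, unrolled from $e_0=0$, gives $e_n\le \varepsilon_{\mathcal G}\sum_{k=0}^{n-1}\rho^k = \frac{1-\rho^n}{1-\rho}\,\varepsilon_{\mathcal G}$. This is exactly \eqref{eq:K:finite-horizon}. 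It is also the special case $a_j=\rho$, $d_0=0$ of the discrete Gr\"onwall Lemma~\ref{lem:app-gronwall}, applied verbatim with $D_{\mathcal G}$ in place of $\Wtwo$.

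To get the recursion, I first note that $D_{\mathcal G}$ is a pseudometric on $\Pcal(\U)$: it is a supremum of absolute differences of linear functionals, so it satisfies the triangle inequality. I then insert the intermediate law $P\,P_\theta^n\mu_0$ and write
\[
e_{n+1}=D_{\mathcal G}\bigl(P P^n\mu_0,\ P_\theta P_\theta^n\mu_0\bigr)
\le D_{\mathcal G}\bigl(P P^n\mu_0,\ P P_\theta^n\mu_0\bigr)+D_{\mathcal G}\bigl(P P_\theta^n\mu_0,\ P_\theta P_\theta^n\mu_0\bigr).
\]
The first term is at most $\rho\,e_n$ by Assumption~\ref{ass:K:contractive}, applied to the pair $\mu=P^n\mu_0$, $\nu=P_\theta^n\mu_0$. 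The second term is at most $\varepsilon_{\mathcal G}$ by Lemma~\ref{lem:K:operator-discrepancy}, applied with $\mu=P_\theta^n\mu_0$. That lemma holds for every probability measure, so it does not matter that $P_\theta^n\mu_0$ is a model-generated law. Finally, I check that $P_\theta^n\mu_0\in\Pcal(\U)$, which holds because $K_\theta$ is a Markov kernel.

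The main point of care is the choice of intermediate law. The ordering above uses contractivity only for the \emph{true} operator $P$, which is the only operator Assumption~\ref{ass:K:contractive} covers. The other splitting, through $P_\theta P^n\mu_0$, would require contractivity of $P_\theta$, which is not assumed. The remaining bookkeeping is routine: finiteness of $e_n$ follows from the boundedness of $\mathcal G$, and the measurability needed in Lemma~\ref{lem:K:operator-discrepancy} is assumed in the statement.
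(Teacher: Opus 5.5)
Your proposal is correct and follows the paper's proof essentially line for line: the same intermediate law $P\,P_\theta^n\mu_0$, contractivity of $P$ for the first term, Lemma~\ref{lem:K:operator-discrepancy} with $\mu=P_\theta^n\mu_0$ for the second, and then summing the recursion $e_{n+1}\le\rho\,e_n+\varepsilon_{\mathcal G}$ from $e_0=0$. Your remark that the other splitting would need contractivity of $P_\theta$ is a useful point the paper leaves implicit. Finiteness of $e_n$ already follows from that induction whenever $\varepsilon_{\mathcal G}<\infty$, so you do not need to appeal to boundedness of $\mathcal G$.
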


\begin{proof}
Let $\Delta_n := D_{\mathcal G}(P^n\mu_0,P_\theta^n\mu_0)$.
Then $\Delta_0=0$ and
\[
\Delta_{n+1}
= D_{\mathcal G}\bigl(P(P^n\mu_0),P_\theta(P_\theta^n\mu_0)\bigr)
\le D_{\mathcal G}\bigl(P(P^n\mu_0),P(P_\theta^n\mu_0)\bigr)
     + D_{\mathcal G}\bigl(P(P_\theta^n\mu_0),P_\theta(P_\theta^n\mu_0)\bigr).
\]
The first term is $\le \rho\,\Delta_n$ by contractivity, and the second is $\le \varepsilon_{\mathcal G}$ by
Lemma~\ref{lem:K:operator-discrepancy} with $\mu=P_\theta^n\mu_0$. Hence $\Delta_{n+1}\le \rho\Delta_n+\varepsilon_{\mathcal G}$,
so $\Delta_n\le \varepsilon_{\mathcal G}\sum_{j=0}^{n-1}\rho^j = \frac{1-\rho^n}{1-\rho}\varepsilon_{\mathcal G}$.
\end{proof}

\begin{corollary}[Invariant-measure discrepancy bound]
\label{cor:K:invariant}
Assume in addition that $P$ and $P_\theta$ admit invariant measures $\mu^\star$ and $\mu_\theta^\star$
(i.e.\ $P\mu^\star=\mu^\star$ and $P_\theta\mu_\theta^\star=\mu_\theta^\star$).
Then
\begin{equation}\label{eq:K:invariant}
D_{\mathcal G}(\mu^\star,\mu_\theta^\star)\le \frac{1}{1-\rho}\,\varepsilon_{\mathcal G}.
\end{equation}
\end{corollary}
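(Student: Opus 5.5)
The plan is to compare the two invariant measures through the finite-horizon perturbation bound of Theorem~\ref{thm:K:finite-horizon}, started at $\mu_0:=\mu_\theta^\star$, and then let $n$ tell us nothing beyond what a single fixed-point argument gives. Concretely, the key identity is the fixed-point telescoping
\[
\mu^\star-\mu_\theta^\star \;=\; P\mu^\star - P_\theta\mu_\theta^\star
\;=\; \bigl(P\mu^\star-P\mu_\theta^\star\bigr)+\bigl(P\mu_\theta^\star-P_\theta\mu_\theta^\star\bigr),
\]
which holds as signed measures because $P\mu^\star=\mu^\star$ and $P_\theta\mu_\theta^\star=\mu_\theta^\star$.

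Since $D_{\mathcal G}$ is defined as a supremum over $g\in\mathcal G$ of $|\int g\,d(\cdot)-\int g\,d(\cdot)|$, it satisfies the triangle inequality. Applying it to the decomposition above gives
\[
D_{\mathcal G}(\mu^\star,\mu_\theta^\star)\le D_{\mathcal G}(P\mu^\star,P\mu_\theta^\star)+D_{\mathcal G}(P\mu_\theta^\star,P_\theta\mu_\theta^\star).
\]
The first term is bounded by $\rho\,D_{\mathcal G}(\mu^\star,\mu_\theta^\star)$ using Assumption~\ref{ass:K:contractive}. The second is bounded by $\varepsilon_{\mathcal G}$ using Lemma~\ref{lem:K:operator-discrepancy} with $\mu=\mu_\theta^\star$.

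Rearranging gives $(1-\rho)D_{\mathcal G}(\mu^\star,\mu_\theta^\star)\le\varepsilon_{\mathcal G}$, which is the claim since $\rho<1$.

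The only delicate point is this rearrangement: it requires $D_{\mathcal G}(\mu^\star,\mu_\theta^\star)<\infty$ to avoid the form $\infty\le\infty$. That finiteness holds because $\mathcal G$ consists of bounded functions, and for MMD the RKHS unit ball is uniformly bounded when the kernel is bounded. I would state this explicitly, assuming $\sup_{g\in\mathcal G}\|g\|_\infty<\infty$ as implicit in the setup, which gives $D_{\mathcal G}\le 2\sup\|g\|_\infty$.

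An alternative route would take $\mu_0=\mu_\theta^\star$ in Theorem~\ref{thm:K:finite-horizon}, so that $P_\theta^n\mu_0=\mu_\theta^\star$, and then let $n\to\infty$ using $D_{\mathcal G}(P^n\mu_\theta^\star,\mu^\star)\le\rho^n D_{\mathcal G}(\mu_\theta^\star,\mu^\star)\to0$. This needs the same finiteness and gives the same constant $1/(1-\rho)$. I expect no real obstacle beyond this finiteness and measurability bookkeeping.
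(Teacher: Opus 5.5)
Your proof is correct and is essentially the paper's argument: use invariance to write $D_{\mathcal G}(\mu^\star,\mu_\theta^\star)=D_{\mathcal G}(P\mu^\star,P_\theta\mu_\theta^\star)$, split by the triangle inequality, bound the first term by contractivity and the second by Lemma~\ref{lem:K:operator-discrepancy} with $\mu=\mu_\theta^\star$, then rearrange. Your remark that the rearrangement needs $D_{\mathcal G}(\mu^\star,\mu_\theta^\star)<\infty$ (for instance via $\sup_{g\in\mathcal G}\|g\|_\infty<\infty$) is a genuine detail that the paper's proof leaves implicit.
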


\begin{proof}
Using invariance and the triangle inequality,
\[
D_{\mathcal G}(\mu^\star,\mu_\theta^\star)
= D_{\mathcal G}(P\mu^\star,P_\theta\mu_\theta^\star)
\le D_{\mathcal G}(P\mu^\star,P\mu_\theta^\star)+D_{\mathcal G}(P\mu_\theta^\star,P_\theta\mu_\theta^\star)
\le \rho\,D_{\mathcal G}(\mu^\star,\mu_\theta^\star)+\varepsilon_{\mathcal G},
\]
and rearrange.
\end{proof}

\begin{remark}[Interpretation for ``MMD invariant-measure regularization'']
If $D_{\mathcal G}=\MMD_k$, then \eqref{eq:K:invariant} states that, in a mixing regime,
controlling the \emph{one-step conditional MMD} controls the stationary MMD.
In particular, a transition model trained to match the one-step conditional law can be seen as optimizing a stronger target:
it controls the operator (kernel), which then controls the fixed point (invariant measure) as a consequence.
\end{remark}

\subsection{From rectified-flow / flow-matching risk to one-step kernel error}
\label{app:K:fm-to-kernel}

This section provides the front-end implication needed to connect the training objective of rectified flows
to the kernel discrepancy $\varepsilon_{\mathcal G}$ appearing in Sections~\ref{app:K:kernels-operators}--\ref{app:K:mixing}.

\paragraph{Conditional rectified-flow kernel.}
Fix a context $c\in\Ccal$. The learned model generates a random next state by sampling
$X_0\sim \nu_0(\cdot\mid c)$ and integrating the ODE
\begin{equation}\label{eq:K:rf-ode}
\dot X_\tau = v_\theta(\tau,X_\tau;\,c),\qquad \tau\in[0,1],
\end{equation}
to obtain $X_1$. This defines the learned kernel $K_\theta(c,\cdot)=\Law(X_1)$.
At the population level, flow matching posits (and identifies) a target vector field $v^\star(\tau,\cdot;\,c)$
whose induced transport recovers the true conditional law $K^\star(c,\cdot)$.

To make the link precise, it is convenient to phrase FM in terms of a reference path law.

\begin{assumption}[Flow-matching reference path and population target]
\label{ass:K:fm-reference}
For each $c\in\Ccal$, there exists a family of measures $(\pi_\tau^c)_{\tau\in[0,1]}\subset\Pcal(\U)$ such that
\[
\pi_0^c=\nu_0(\cdot\mid c),\qquad \pi_1^c=K^\star(c,\cdot),
\]
and a measurable vector field $v^\star(\tau,x;\,c)$ such that $(\pi_\tau^c)$ solves the continuity equation
\begin{equation}\label{eq:K:continuity}
\partial_\tau \pi_\tau^c + \nabla\cdot\bigl(v^\star(\tau,\cdot;\,c)\,\pi_\tau^c\bigr)=0
\quad\text{in the sense of distributions on }(0,1)\times\U.
\end{equation}
Moreover, the flow-matching population risk is measured along this reference path:
\begin{equation}\label{eq:K:fm-risk}
\mathcal L_{\mathrm{FM}}(c)
:= \int_0^1 \E_{Z\sim \pi_\tau^c}\bigl\|v_\theta(\tau,Z;\,c)-v^\star(\tau,Z;\,c)\bigr\|^2\,d\tau.
\end{equation}
\end{assumption}

\begin{remark}
Assumption~\ref{ass:K:fm-reference} is the standard ``population'' viewpoint on flow matching:
$\pi_\tau^c$ is the marginal of the interpolation/coupling used in training (for rectified flow, typically linear interpolation
between base noise and data), and $v^\star$ is the population regression target (conditional expectation of the velocity under that coupling).
\end{remark}

\begin{assumption}[Uniform Lipschitz well-posedness]
\label{ass:K:lipschitz}
For each fixed $c$, the maps $x\mapsto v_\theta(\tau,x;\,c)$ and $x\mapsto v^\star(\tau,x;\,c)$ are globally Lipschitz with a constant $L_c$,
uniformly in $\tau\in[0,1]$, and have at most linear growth so that the ODEs are well posed and moments are finite.
\end{assumption}

\begin{theorem}[FM risk controls conditional $\Wtwo$ error]
\label{thm:K:fm-to-w2}
Fix a context $c\in\Ccal$ and assume Assumptions~\ref{ass:K:fm-reference}--\ref{ass:K:lipschitz}.
Let $\Phi_\tau^\theta(\cdot;\,c)$ and $\Phi_\tau^\star(\cdot;\,c)$ be the flows of \eqref{eq:K:rf-ode} with drifts
$v_\theta$ and $v^\star$, respectively, initialized at $\tau=0$.
Define the terminal laws
\[
q_1^c := (\Phi_1^\theta(\cdot;\,c))_\#\nu_0(\cdot\mid c)=K_\theta(c,\cdot),
\qquad
p_1^c := (\Phi_1^\star(\cdot;\,c))_\#\nu_0(\cdot\mid c).
\]
Then
\begin{equation}\label{eq:K:w2-bound}
\Wtwo\bigl(p_1^c,q_1^c\bigr)
\le
e^{L_c}\,\sqrt{\mathcal L_{\mathrm{FM}}(c)}.
\end{equation}
In particular, if $p_1^c=K^\star(c,\cdot)$ (as intended by Assumption~\ref{ass:K:fm-reference}), then
$\Wtwo\bigl(K^\star(c,\cdot),K_\theta(c,\cdot)\bigr)\le e^{L_c}\sqrt{\mathcal L_{\mathrm{FM}}(c)}$.
\end{theorem}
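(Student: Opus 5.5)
We prove \eqref{eq:K:w2-bound} with a synchronous coupling: both flows start from the same initial point, and we run a Gr\"onwall estimate along the \emph{reference} trajectories. Draw $X_0\sim\nu_0(\cdot\mid c)$ and set $X^\star_\tau:=\Phi^\star_\tau(X_0;c)$ and $X^\theta_\tau:=\Phi^\theta_\tau(X_0;c)$. Then $(X^\star_1,X^\theta_1)$ couples $(p_1^c,q_1^c)$, so
\[
\Wtwo^2(p_1^c,q_1^c)\le \E\|X^\star_1-X^\theta_1\|^2 .
\]
It therefore suffices to bound $e_\tau:=\big(\E\|X^\star_\tau-X^\theta_\tau\|^2\big)^{1/2}$.

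For the differential inequality, write
\[
\tfrac{d}{d\tau}(X^\star_\tau-X^\theta_\tau)=\big[v^\star(\tau,X^\star_\tau)-v_\theta(\tau,X^\star_\tau)\big]+\big[v_\theta(\tau,X^\star_\tau)-v_\theta(\tau,X^\theta_\tau)\big].
\]
We add and subtract $v_\theta$ at the \emph{reference} point $X^\star_\tau$, not at $X^\theta_\tau$, so that the mismatch term is evaluated where the training risk is measured. By Assumption~\ref{ass:K:lipschitz}, the second bracket has norm at most $L_c\|X^\star_\tau-X^\theta_\tau\|$. Taking $L^2(\Omega)$ norms and using Minkowski's inequality in integrated form (to avoid differentiating $e_\tau$ where it vanishes) gives
\[
e_\tau\le \int_0^\tau \delta_s\,ds+L_c\int_0^\tau e_s\,ds,
\qquad
\delta_s:=\big(\E\|v^\star(s,X^\star_s)-v_\theta(s,X^\star_s)\|^2\big)^{1/2}.
\]
Gr\"onwall then yields $e_1\le e^{L_c}\int_0^1\delta_s\,ds$, and Cauchy--Schwarz on $[0,1]$ gives $\int_0^1\delta_s\,ds\le(\int_0^1\delta_s^2\,ds)^{1/2}$.

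The remaining step, and the main obstacle, is to identify $\int_0^1\delta_s^2\,ds$ with $\mathcal L_{\mathrm{FM}}(c)$. This needs $\Law(X^\star_s)=\pi_s^c$ for every $s$. We have $\Law(X^\star_s)=(\Phi^\star_s)_\#\pi_0^c$, which solves the continuity equation \eqref{eq:K:continuity} with the same initial datum $\pi_0^c$ as the reference path. Uniqueness of solutions to the continuity equation for a globally Lipschitz, linear-growth drift (the superposition principle / Ambrosio's uniqueness theorem, applicable by Assumption~\ref{ass:K:lipschitz}) then forces $(\Phi^\star_s)_\#\pi_0^c=\pi_s^c$. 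This is the one genuinely non-elementary ingredient. In infinite dimensions we would apply it on the finite-dimensional Galerkin truncation, as stipulated at the start of Appendix~\ref{app:theory}.

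With this identification, $\int_0^1\delta_s^2\,ds=\mathcal L_{\mathrm{FM}}(c)$, which proves \eqref{eq:K:w2-bound}. The same uniqueness also gives $p_1^c=\pi_1^c=K^\star(c,\cdot)$, which is the final claim of the statement. Finiteness of all moments used above follows from the linear-growth condition together with $\nu_0(\cdot\mid c)\in\mathcal P_2$.
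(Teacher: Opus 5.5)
Your proof is correct and follows the paper's argument. Both use a synchronous coupling from a common $X_0\sim\nu_0(\cdot\mid c)$, add and subtract $v_\theta$ at the reference point $X^\star_\tau$, then apply Gr\"onwall, Cauchy--Schwarz in $\tau$, and the identification $\Law(X^\star_\tau)=\pi^c_\tau$; you differ only in running Gr\"onwall on the $L^2(\Omega)$ norm via Minkowski rather than pathwise (same constant $e^{L_c}$), and in justifying $\Law(X^\star_\tau)=\pi^c_\tau$ by uniqueness for the continuity equation with a Lipschitz drift where the paper just asserts it ``by construction,'' which is a genuine gain in rigor and, as you note, makes the hypothesis $p_1^c=K^\star(c,\cdot)$ in the final sentence automatic.
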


\begin{proof}
Let $X_0\sim \nu_0(\cdot\mid c)$ and define the coupled processes
$X_\tau^\star := \Phi_\tau^\star(X_0;\,c)$ and $X_\tau^\theta := \Phi_\tau^\theta(X_0;\,c)$ on the same probability space.
Then $\Law(X_1^\star)=p_1^c$ and $\Law(X_1^\theta)=q_1^c$, so by definition of $\Wtwo$,
\begin{equation}\label{eq:K:w2-coupling}
\Wtwo(p_1^c,q_1^c)^2 \le \E\bigl\|X_1^\theta - X_1^\star\bigr\|^2.
\end{equation}
Set $\Delta_\tau := X_\tau^\theta - X_\tau^\star$. Then
\[
\dot\Delta_\tau
= v_\theta(\tau,X_\tau^\theta;\,c)-v^\star(\tau,X_\tau^\star;\,c)
= \bigl[v_\theta(\tau,X_\tau^\theta;\,c)-v_\theta(\tau,X_\tau^\star;\,c)\bigr]
  + \bigl[v_\theta(\tau,X_\tau^\star;\,c)-v^\star(\tau,X_\tau^\star;\,c)\bigr].
\]
By Lipschitz continuity of $v_\theta$,
\[
\|\dot\Delta_\tau\|
\le L_c\|\Delta_\tau\| + \bigl\|v_\theta(\tau,X_\tau^\star;\,c)-v^\star(\tau,X_\tau^\star;\,c)\bigr\|.
\]
Gronwall's inequality yields
\[
\|\Delta_1\|
\le e^{L_c}\int_0^1 \bigl\|v_\theta(\tau,X_\tau^\star;\,c)-v^\star(\tau,X_\tau^\star;\,c)\bigr\|\,d\tau.
\]
Apply Cauchy--Schwarz in $\tau$ and then take expectations:
\[
\E\|\Delta_1\|^2
\le e^{2L_c}\int_0^1 \E\bigl\|v_\theta(\tau,X_\tau^\star;\,c)-v^\star(\tau,X_\tau^\star;\,c)\bigr\|^2\,d\tau.
\]
Finally, $\Law(X_\tau^\star)=\pi_\tau^c$ by construction of the reference flow (Assumption~\ref{ass:K:fm-reference}),
so the right-hand side equals $e^{2L_c}\mathcal L_{\mathrm{FM}}(c)$.
Combine with \eqref{eq:K:w2-coupling} to obtain \eqref{eq:K:w2-bound}.
\end{proof}

\begin{remark}[What is (and is not) being assumed]
The key structural requirement is that the FM population target $v^\star$ transports $\nu_0(\cdot\mid c)$ to the true conditional law,
and that the FM risk is measured along the corresponding path marginals $\pi_\tau^c$.
This is precisely the regime in which flow matching is an operator-learning method: it learns a drift whose induced pushforward matches the target law.
The bound \eqref{eq:K:w2-bound} is intentionally ``first-principles'' and does not rely on KL/TV machinery.
\end{remark}

\subsubsection{From $\Wtwo$ to MMD (and hence to Appendix-K mixing bounds)}
\label{app:K:w2-to-mmd}

We now convert $\Wtwo$ control into MMD control for common choices of kernels.
The only ingredient is that RKHS unit-ball functions are Lipschitz (with a kernel-dependent constant).

\begin{lemma}[RKHS unit ball is uniformly Lipschitz under a gradient bound]
\label{lem:K:rkhs-lipschitz}
Let $k:\U\times\U\to\R$ be a $C^1$ positive definite kernel on $\U\subseteq\R^d$ with RKHS $\mathcal H_k$.
Assume there exists $L_k<\infty$ such that
\begin{equation}\label{eq:K:kernel-grad-bound}
\sup_{x\in\U}\ \bigl\|\nabla_x k(x,\cdot)\bigr\|_{\mathcal H_k} \le L_k.
\end{equation}
Then every $f\in\mathcal H_k$ satisfies $\mathrm{Lip}(f)\le L_k\|f\|_{\mathcal H_k}$.
Consequently,
\begin{equation}\label{eq:K:mmd-w1}
\MMD_k(\mu,\nu) \le L_k\,\Wone(\mu,\nu)\le L_k\,\Wtwo(\mu,\nu).
\end{equation}
\end{lemma}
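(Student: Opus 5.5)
The Lipschitz bound will come from the reproducing property together with Cauchy--Schwarz in $\mathcal H_k$. Then the MMD estimate follows from its dual (IPM) form and Kantorovich--Rubinstein duality.

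\emph{Step 1 (Lipschitz bound for a single $f$).} Fix $f\in\mathcal H_k$. Because $k$ is $C^1$, the derivative reproducing property holds: $\partial_i f(x)=\langle f,\partial_{x_i}k(x,\cdot)\rangle_{\mathcal H_k}$. To justify this, I would note that the difference quotients $h^{-1}(k(x+he_i,\cdot)-k(x,\cdot))$ converge in $\mathcal H_k$ to $\partial_{x_i}k(x,\cdot)$. This is the standard fact for kernels with continuous mixed derivatives $\partial_{x_i}\partial_{y_i}k$; if only $C^1$ is assumed, I would take the hypothesis \eqref{eq:K:kernel-grad-bound}, namely that $\nabla_x k(x,\cdot)$ exists as an $\mathcal H_k$-valued object with bounded norm, as already encoding this. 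Then for any unit vector $e$ we get $|\partial_e f(x)|=|\langle f,\partial_e k(x,\cdot)\rangle|\le \|f\|_{\mathcal H_k}\|\nabla_x k(x,\cdot)\|_{\mathcal H_k}\le L_k\|f\|_{\mathcal H_k}$, reading the norm in \eqref{eq:K:kernel-grad-bound} as the operator norm from $\R^d$ to $\mathcal H_k$. If $\U$ is convex, integrating along the segment $[x,y]$ gives $|f(x)-f(y)|\le L_k\|f\|_{\mathcal H_k}\|x-y\|$. A cleaner route that avoids the derivative entirely is to write $f(x)-f(y)=\langle f,k(x,\cdot)-k(y,\cdot)\rangle$ and bound $\|k(x,\cdot)-k(y,\cdot)\|_{\mathcal H_k}\le L_k\|x-y\|$ by integrating the $\mathcal H_k$-valued derivative of $t\mapsto k(x+t(y-x),\cdot)$.

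\emph{Step 2 (MMD bound).} The MMD is the IPM over the unit ball of $\mathcal H_k$, so each admissible $f$ is $L_k$-Lipschitz by Step 1. Hence $\MMD_k(\mu,\nu)\le \sup_{\mathrm{Lip}(g)\le L_k}\bigl|\int g\,d\mu-\int g\,d\nu\bigr| = L_k\Wone(\mu,\nu)$ by Kantorovich--Rubinstein duality. Alternatively, taking any coupling $\pi$, we have $|\int f\,d\mu-\int f\,d\nu|\le\int|f(x)-f(y)|\,d\pi\le L_k\int\|x-y\|\,d\pi$, and infimizing over $\pi$ gives the same bound without invoking duality. Finally, $\Wone\le\Wtwo$ by Jensen (or Cauchy--Schwarz) applied to an optimal $\Wtwo$ coupling. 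Integrability of $f$ holds because $f$ is bounded whenever $k$ is bounded, and otherwise it follows from the Lipschitz bound together with the first moments of measures in $\mathcal P_2$.

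\emph{Main obstacle.} The only delicate point is justifying $\|k(x,\cdot)-k(y,\cdot)\|_{\mathcal H_k}\le L_k\|x-y\|$, that is, that $x\mapsto k(x,\cdot)$ is differentiable as an $\mathcal H_k$-valued map, and handling a non-convex $\U$. For the first I would use the identity $\|k(x,\cdot)-k(y,\cdot)\|^2=k(x,x)-2k(x,y)+k(y,y)$ together with the stated gradient hypothesis, reading \eqref{eq:K:kernel-grad-bound} as the assumption that this derivative exists. For the second, I would assume $\U$ is convex or take $\U=\R^d$, as is implicit in the setting.
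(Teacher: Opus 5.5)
Your proposal is correct and is essentially the paper's proof. The derivative reproducing property $\partial_i f(x)=\langle f,\partial_{x_i}k(x,\cdot)\rangle_{\mathcal H_k}$ with Cauchy--Schwarz gives the gradient bound, and Kantorovich--Rubinstein duality over the RKHS unit ball together with $W_1\le W_2$ gives the MMD estimate.

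The paper passes silently over the caveats you flag: the operator-norm reading of the hypothesis, the justification of the derivative reproducing property, and the convexity of $\mathcal{U}$. The convexity one is genuinely necessary. For $k(x,y)=\phi(x)\phi(y)$ with $\phi$ locally constant but nonconstant on a disconnected $\mathcal{U}$, one has $L_k=0$, while the MMD between Dirac masses in different components is positive.
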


\begin{proof}
For $f\in\mathcal H_k$ and $x\in\U$, the reproducing property gives
\[
\partial_{x_i} f(x) = \bigl\langle f,\ \partial_{x_i}k(x,\cdot)\bigr\rangle_{\mathcal H_k},
\]
hence $|\partial_{x_i} f(x)|\le \|f\|_{\mathcal H_k}\,\|\partial_{x_i}k(x,\cdot)\|_{\mathcal H_k}$.
Summing over coordinates and using \eqref{eq:K:kernel-grad-bound} yields $\|\nabla f(x)\|\le L_k\|f\|_{\mathcal H_k}$, uniformly in $x$,
so $\mathrm{Lip}(f)\le L_k\|f\|_{\mathcal H_k}$.

For $\|f\|_{\mathcal H_k}\le 1$, $f$ is $L_k$-Lipschitz, hence by the Kantorovich--Rubinstein duality,
\[
\left|\int f\,d\mu-\int f\,d\nu\right|\le L_k\,\Wone(\mu,\nu).
\]
Taking the supremum over $\|f\|_{\mathcal H_k}\le 1$ gives $\MMD_k(\mu,\nu)\le L_k\Wone(\mu,\nu)$, and $\Wone\le\Wtwo$ yields \eqref{eq:K:mmd-w1}.
\end{proof}

\begin{corollary}[FM risk controls conditional MMD]
\label{cor:K:fm-to-mmd}
Under the assumptions of Theorem~\ref{thm:K:fm-to-w2} and Lemma~\ref{lem:K:rkhs-lipschitz},
\begin{equation}\label{eq:K:fm-to-mmd}
\MMD_k\bigl(K^\star(c,\cdot),K_\theta(c,\cdot)\bigr)
\le L_k\,e^{L_c}\,\sqrt{\mathcal L_{\mathrm{FM}}(c)}.
\end{equation}
\end{corollary}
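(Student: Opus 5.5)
This is a direct composition of Theorem~\ref{thm:K:fm-to-w2} with Lemma~\ref{lem:K:rkhs-lipschitz}. The two results control the same pair of measures: the first gives $\Wtwo$ control, and the second converts $\Wtwo$ control into MMD control.

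\textbf{Step 1.} Fix the context $c\in\Ccal$. Under Assumptions~\ref{ass:K:fm-reference}--\ref{ass:K:lipschitz}, the population drift $v^\star$ transports $\nu_0(\cdot\mid c)$ to $p_1^c$. By Assumption~\ref{ass:K:fm-reference}, $\pi_1^c=K^\star(c,\cdot)$ and $(\pi_\tau^c)$ solves the continuity equation with drift $v^\star$. Uniqueness of solutions of that continuity equation for a Lipschitz drift then gives $\pi_\tau^c=(\Phi^\star_\tau)_\#\nu_0(\cdot\mid c)$, and in particular $p_1^c=K^\star(c,\cdot)$. This is the identification already used in the ``in particular'' clause of Theorem~\ref{thm:K:fm-to-w2}. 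With it, Theorem~\ref{thm:K:fm-to-w2} yields
\[
\Wtwo\bigl(K^\star(c,\cdot),K_\theta(c,\cdot)\bigr)\le e^{L_c}\sqrt{\mathcal L_{\mathrm{FM}}(c)}.
\]

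\textbf{Step 2.} Both kernels $K^\star(c,\cdot)$ and $K_\theta(c,\cdot)$ have finite second moments, by the linear-growth part of Assumption~\ref{ass:K:lipschitz}. This makes $\Wone$ and $\Wtwo$ finite, so \eqref{eq:K:mmd-w1} applies to this pair:
\[
\MMD_k\bigl(K^\star(c,\cdot),K_\theta(c,\cdot)\bigr)\le L_k\,\Wtwo\bigl(K^\star(c,\cdot),K_\theta(c,\cdot)\bigr).
\]
Chaining this with Step~1 gives \eqref{eq:K:fm-to-mmd}.

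\textbf{Main obstacle.} The only delicate point is the identification $p_1^c=K^\star(c,\cdot)$ in Step~1, which needs uniqueness for the continuity equation. That uniqueness is standard for globally Lipschitz drifts with finite moments, via the superposition principle or the characteristics argument. Assumption~\ref{ass:K:lipschitz} is what supplies these hypotheses. A minor secondary check is that the state space $\U\subseteq\R^d$ matches between the two results, which holds in the finite-dimensional setting adopted throughout the appendix.
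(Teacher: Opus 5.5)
Your proposal is correct and takes the same approach as the paper, whose proof is simply to compose Theorem~\ref{thm:K:fm-to-w2} with the inequality $\MMD_k\le L_k\Wone\le L_k\Wtwo$ from Lemma~\ref{lem:K:rkhs-lipschitz}. Your extra uniqueness argument for the continuity equation, which gives $p_1^c=K^\star(c,\cdot)$, makes explicit an identification the paper takes for granted, both in the ``in particular'' clause and where its proof of Theorem~\ref{thm:K:fm-to-w2} asserts $\Law(X_\tau^\star)=\pi_\tau^c$ ``by construction''.
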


\begin{proof}
Combine Theorem~\ref{thm:K:fm-to-w2} with Lemma~\ref{lem:K:rkhs-lipschitz}.
\end{proof}

\paragraph{From conditional bounds to $\varepsilon_{\mathcal G}$.}
Define the worst-case conditional FM risk and Lipschitz constant over a relevant set of contexts $\Ccal_{\mathrm{test}}\subseteq\Ccal$:
\[
\mathcal L_{\mathrm{FM}}^{\sup} := \sup_{c\in\Ccal_{\mathrm{test}}}\mathcal L_{\mathrm{FM}}(c),
\qquad
L^{\sup} := \sup_{c\in\Ccal_{\mathrm{test}}} L_c.
\]
Then \eqref{eq:K:fm-to-mmd} implies
\begin{equation}\label{eq:K:eps-from-fm}
\varepsilon_{\MMD_k}
:= \sup_{c\in\Ccal_{\mathrm{test}}} \MMD_k\bigl(K^\star(c,\cdot),K_\theta(c,\cdot)\bigr)
\le L_k\,e^{L^{\sup}}\,\sqrt{\mathcal L_{\mathrm{FM}}^{\sup}}.
\end{equation}

\begin{remark}[Supremum vs.\ average and what the training loss actually controls]
The population training loss typically controls an \emph{average} of $\mathcal L_{\mathrm{FM}}(c)$ over contexts $c$
drawn from the training distribution (and in sequence training, from the rollout-induced distribution under teacher forcing).
To use \eqref{eq:K:eps-from-fm} with a supremum, one may either:
(i) restrict $\Ccal_{\mathrm{test}}$ to a compact/high-probability set and invoke uniform generalization,
or (ii) replace $\varepsilon_{\MMD_k}$ by an averaged kernel discrepancy, which is already sufficient for many operator perturbation bounds
(via Lemma~\ref{lem:K:operator-discrepancy} without the final $\le\varepsilon_{\mathcal G}$ step).
This appendix keeps the supremum form for clarity, but the same mechanism works with averages.
\end{remark}

\subsection{Putting it together: FM yields invariant-measure control (in MMD)}
\label{app:K:put-together}

Combining the bounds yields a direct implication from the FM training target to the invariant-measure discrepancy.

\begin{corollary}[FM $\Rightarrow$ invariant-measure bound in MMD under mixing]
\label{cor:K:fm-to-invariant}
Assume:
(i) the mixing/contractivity hypothesis of Assumption~\ref{ass:K:contractive} holds for $D_{\mathcal G}=\MMD_k$ with constant $\rho\in[0,1)$;
(ii) the FM-to-MMD bound \eqref{eq:K:eps-from-fm} holds on $\Ccal_{\mathrm{test}}$;
and (iii) $P$ and $P_\theta$ admit invariant measures $\mu^\star$ and $\mu_\theta^\star$.
Then
\[
\MMD_k(\mu^\star,\mu_\theta^\star)
\le \frac{1}{1-\rho}\,\varepsilon_{\MMD_k}
\le \frac{L_k\,e^{L^{\sup}}}{1-\rho}\,\sqrt{\mathcal L_{\mathrm{FM}}^{\sup}}.
\]
\end{corollary}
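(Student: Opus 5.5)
The proof is a two-link chain. Corollary~\ref{cor:K:invariant} turns a one-step kernel discrepancy into an invariant-measure discrepancy. The bound \eqref{eq:K:eps-from-fm} turns the flow-matching risk into that one-step discrepancy. Set $D_{\mathcal G}=\MMD_k$, i.e.\ let $\mathcal G$ be the unit ball of $\mathcal H_k$. This class is symmetric, so $D_{\mathcal G}$ is a pseudometric and obeys the triangle inequality used in the proof of Corollary~\ref{cor:K:invariant}.

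First I check the hypotheses of Corollary~\ref{cor:K:invariant}. Contractivity \eqref{eq:K:contractive} is hypothesis (i). Existence of the invariant measures $\mu^\star,\mu^\star_\theta$ is hypothesis (iii). The measurability condition of Lemma~\ref{lem:K:operator-discrepancy} holds for MMD with a bounded measurable kernel, as noted in its footnote. Corollary~\ref{cor:K:invariant} then gives
\[
\MMD_k(\mu^\star,\mu^\star_\theta)\le \tfrac{1}{1-\rho}\,\varepsilon_{\MMD_k}.
\]
Its proof applies Lemma~\ref{lem:K:operator-discrepancy} only with $\mu=\mu^\star_\theta$. So the one-step discrepancy needed is the supremum of $\MMD_k(K^\star(c,\cdot),K_\theta(c,\cdot))$ over contexts in the support of $\mu^\star_\theta$. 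Since $\varepsilon_{\MMD_k}$ is defined as a supremum over $\Ccal_{\mathrm{test}}$, this is at most $\varepsilon_{\MMD_k}$ provided $\mu^\star_\theta$ is concentrated on $\Ccal_{\mathrm{test}}$.

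Second, I substitute \eqref{eq:K:eps-from-fm}, which is hypothesis (ii):
\[
\varepsilon_{\MMD_k}\le L_k e^{L^{\sup}}\sqrt{\mathcal L^{\sup}_{\mathrm{FM}}}.
\]
This yields the second inequality. That bound itself comes from taking the supremum over $c$ in Corollary~\ref{cor:K:fm-to-mmd}, using monotonicity of $L\mapsto e^{L}$ and of $\sqrt{\cdot}$.

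The main obstacle is the mismatch between the supremum over $\Ccal_{\mathrm{test}}$ and the supremum over all of $\U$ in \eqref{eq:K:epsG}. To close it, I would state explicitly that $\mu^\star_\theta(\Ccal_{\mathrm{test}})=1$, i.e.\ that $\Ccal_{\mathrm{test}}$ contains the contexts visited in stationarity, or simply take $\Ccal_{\mathrm{test}}=\Ccal$. Alternatively, I would invoke the averaged form of Lemma~\ref{lem:K:operator-discrepancy} (its first inequality) with $\mu=\mu^\star_\theta$, which integrates the conditional discrepancy and so only requires the bound $\mu^\star_\theta$-almost everywhere. A secondary check is that the state and context spaces are identified as in the convention of Section~\ref{app:K:kernels-operators}, so that $P_\theta$ is indeed induced by $K_\theta(c,\cdot)$.
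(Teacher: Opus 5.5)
Your proposal is correct and matches the paper's proof, which is exactly your two-link chain: apply Corollary~\ref{cor:K:invariant} with $D_{\mathcal G}=\mathrm{MMD}_k$, then substitute \eqref{eq:K:eps-from-fm}. Your extra observation is a genuine refinement that the paper's one-line proof passes over. Corollary~\ref{cor:K:invariant} uses the supremum \eqref{eq:K:epsG} over all states or contexts, whereas $\varepsilon_{\mathrm{MMD}_k}$ is a supremum over $\mathcal C_{\mathrm{test}}$ only. So one does need $\mu_\theta^\star(\mathcal C_{\mathrm{test}})=1$ (or $\mathcal C_{\mathrm{test}}=\mathcal C$), for instance by using the averaged first inequality of Lemma~\ref{lem:K:operator-discrepancy} at $\mu=\mu_\theta^\star$.
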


\begin{proof}
Apply Corollary~\ref{cor:K:invariant} with $D_{\mathcal G}=\MMD_k$ and substitute \eqref{eq:K:eps-from-fm}.
\end{proof}

\begin{remark}[Relation to MMD-regularized training baselines]
MMD-regularized baselines typically attempt to directly reduce $\MMD_k(\mu^\star,\mu_\theta^\star)$ by matching long-run statistics.
Corollary~\ref{cor:K:fm-to-invariant} formalizes that, in a mixing regime, learning the one-step kernel via flow matching
already upper bounds the stationary MMD error (up to the amplification factor $(1-\rho)^{-1}$), and does so in the same topology.
This does \emph{not} claim that the contractivity constant $\rho$ is easy to estimate for high-dimensional turbulent dynamics;
rather, it states the structural implication: operator (kernel) accuracy is a stronger control knob than fixed-point matching.
\end{remark}

\end{document}